\newcommand{\change}[2]{}
\newcommand{\lchange}[2]{}
\newcommand{\changed}[3]{#3}
\DeclareMathOperator*{\argmin}{argmin}
\newtheorem{assumption}{Assumption}
\definecolor{fhcolor}{rgb}{0.523, 0.235, 0.625}
\begin{document}

\jmlrheading{25}{2024}{1-\pageref{LastPage}}{11/22; Revised 10/23}{4/24}{22-1250}{Fanghui Liu, Leello Dadi, Volkan Cevher}
\ShortHeadings{Learning with Norm Constrained, Over-parameterized, Two-layer Neural Networks}{Liu, Dadi, and Cevher}
\firstpageno{1}

\title{Learning with Norm Constrained, Over-parameterized, Two-layer Neural Networks}

\author{\name Fanghui Liu\thanks{Part of this work was done when Fanghui was at LIONS, EPFL. Corresponding author: Fanghui Liu.} \email fanghui.liu@warwick.ac.uk \\
\addr Department of Computer Science, University of Warwick, Coventry, UK  \\
\name Leello Dadi \email leello.dadi@epfl.ch\\
\addr Lab for Information and Inference Systems, \'{E}cole Polytechnique F\'{e}d\'{e}rale de Lausanne (EPFL), Switzerland \\
\name Volkan Cevher \email volkan.cevher@epfl.ch \\
\addr Lab for Information and Inference Systems, \'{E}cole Polytechnique F\'{e}d\'{e}rale de Lausanne (EPFL), Switzerland \\
}

\editor{}

\maketitle

\begin{abstract}

\noindent 
Recent studies show that a reproducing kernel Hilbert space (RKHS) is not a suitable space to model functions by neural networks as the curse of dimensionality (CoD) cannot be evaded when trying to approximate even a single ReLU neuron \citep{bach2017breaking,celentano2021minimum}. In this paper, we study a suitable function space for over-parameterized two-layer neural networks with bounded norms (e.g., the path norm, the Barron norm) in the perspective of sample complexity and generalization properties. First, we show that the path norm (as well as the Barron norm) is able to obtain width-independence sample complexity bounds, which allows for uniform convergence guarantees. Based on this result, we derive the improved result of metric entropy for $\epsilon$-covering up to $\mathcal{O}(\epsilon^{-\frac{2d}{d+2}})$ ($d$ is the input dimension and the depending constant is at most linear order of $d$) via the convex hull technique, which demonstrates the separation with kernel methods with $\Omega(\epsilon^{-d})$ to learn the target function in a Barron space.
Second, this metric entropy result allows for building a sharper generalization bound under a general moment hypothesis setting, achieving the rate at $\mathcal{O}(n^{-\frac{d+2}{2d+2}})$.
Our analysis is novel in that it offers a sharper and refined estimation for metric entropy with a linear dimension dependence and unbounded sampling in the estimation of the sample error and the output error.
\end{abstract}

\begin{keywords}
  learning theory, Barron space, path-norm, sample complexity, generalization guarantees
\end{keywords}

	\section{Introduction}
	\vspace{-0.2cm}
While the number of neurons provide a natural complexity or \textit{capacity} measure for neural networks, recent theoretical approaches focus more on the magnitude of the weights via norm-based constraints \citep{neyshabur2015norm,savarese2019infinite,ongie2020function,domingo2022tighter}.
The norm-based capacity analyses (e.g., even in infinite-width) represent essentially potential functions with small cost.
Generally they revolve around minimum-norm \emph{over-parameterized} models and seek to identify correlations between the norm-based capacity and the generalization of neural network models, e.g., min-$\ell_2$-norm \citep{hastie2019surprises,mei2019generalization,liang2020multiple}, and min-$\ell_1$-norm \citep{liang2022PreciseHighdimensional,chatterji2021foolish,wang2021tight}.

From a functional perspective, one key issue corresponds to what norm can be defined and controlled on the functions defined by neural networks, and what suitable function space is for learning via norm capacity based neural networks.
Indeed, the prototypical two-layer neural networks with the rectified linear unit (ReLU) activation cannot be sufficiently captured by the reproducing kernel Hilbert spaces (RKHSs) of kernel methods such as the random features (RF) model \citep{rahimi2007random} and the neural tangent kernel (NTK) \citep{jacot2018neural}.
Specifically,~to approximate a single ReLU neuron with an $\varepsilon$-approximation error, kernel methods require a number of samples $\Omega(\varepsilon^{-d})$, exponential in feature dimensionality $d$ \citep{bach2017breaking,yehudai2019power,ghorbani2019linearized,wu2022spectral}, \emph{a.k.a.}, curse of dimensionality (CoD).

It is possible to characterize over-parameterized two-layer neural networks by a compactly supported Radon probability measure with spectral Barron norm \citep{barron1993universal}, path norm \citep{neyshabur2015norm}, bounded total variation norm \citep{bach2017breaking} or other variants \citep{savarese2019infinite,ongie2020function,parhi2021banach}.
In this context, any function that is well-approximated by a norm-bounded two-layer neural networks lives in a revised Barron space \citep{weinan2021barron}, which is the \emph{largest}\footnote{The terminology ``largest'' means functions in the Barron space and two-layer neural networks with bounded norm can be efficiently represented without the curse of dimensionality, refer to \cref{sec:related} for details.} function space for two-layer neural networks \citep{weinan2020representation} beyond RKHS.
In this case, given $n$ iid training data, the minimax lower bound in excess risk for learning in this non-Hilbertian space by any kernel methods estimators is $\Omega(n^{- 1/d})$ \citep{weinan2021kolmogorov,parhi2021near}, which suffers from CoD and unsatisfactory sample complexity bounds.

Interestingly, the Barron space can be closely connected to other typical spaces with various norms. 
For example, when using the ReLU activation function, the Barron norm is exactly the same as the total variation norm \citep{bach2017breaking}, and other total variation norm based versions \citep{siegel2021sharp,parhi2021banach}.
Furthermore, the discrete version of the Barron norm is the $\ell_1$-path norm \citep{neyshabur2015norm} in the parameter space, and thus is widely used in practice.
Based on this, understanding on the Barron space, especially based on the path-norm studied in this paper, for learning such two-layer neural networks is general and required.

\subsection{Problem setting}

Mathematically, let $X \subseteq \mathbb{R}^d$ be a compact space and the label space $Y \subseteq \mathbb{R}$, we assume that a sample set $\bm z = \{  (\bm x_i, y_i) \}_{i=1}^n \in Z^n $ is independently drawn from a non-degenerate Borel probability measure $\rho$ on $X \times Y$.
Our target is to find a hypothesis $f$ over the sample set such that $f$ is a good approximation of the \emph{target function}
\begin{equation*}
    f_{\rho}(\bm x) = \int_Y y \mathrm{d} \rho(y|\bm x), \bm x \in X\,,
\end{equation*}
where $\rho(\cdot| \bm x)$ is the conditional distribution of $\rho$ at $\bm x \in X$. 
Here the hypothesis is considered as the class of two-layer neural networks parameterized by $\bm \theta := \{ (a_i, \bm w_i) \}_{i=1}^m$ with $\bm w_i \in \mathbb{R}^d$, $a_i \in \mathbb{R}$
\begin{equation}\label{eq:function}
    \mathcal{P}_{m} = \left\{ f_{\bm \theta}(\bm \cdot) := \frac{1}{m}\sum_{k=1}^m a_k \sigma \big( \langle \bm w_k, \bm \cdot \rangle \big) \right\}\,,
\end{equation}
where $\sigma: \mathbb{R} \rightarrow \mathbb{R}$ is the activation function. It allows for the bias term by adding one extra term but we omit it here for simplicity. The hypothesis is often learned from  the sample set via the empirical risk minimization (ERM) with a proper regularization term
\begin{equation}\label{fz1}
	f_{\bm \theta,\bm z, \lambda} = \argmin_{f \in \mathcal{P}_m} \frac{1}{n} \sum_{i=1}^n \ell(y_i, f(\bm x_i,\bm \theta) ) + \lambda \| f \|\,,
\end{equation}
where $\ell: \mathbb{R} \times \mathbb{R} \rightarrow \mathbb{R}^+$ is a \emph{surrogate} loss function, and $\lambda \equiv \lambda(n) > 0$ is a regularization parameter of a norm-based regularizer $\| f \|$.
We will discuss a certain $\ell_1$-path norm regularization later in Section~\ref{sec:probsetting}, which is used in this paper.
To facilitate a general analysis on learning with noisy data in practice, we consider the following moment hypothesis concerning unbounded outputs.

\emph{Moment hypothesis:} There exist constants $M \geqslant 1$, $C > 0$ such that\footnote{This assumption is essentially equivalent to Bernstein's condition \citep{Steinwart2008SVM}:	$ \mathbb{E}\left[|y|^{b} \mid \bm x\right] \leqslant \frac{1}{2}b ! \varsigma^{2} B^{b-2} $ when $b \geqslant 2$\,. For description simplicity, we assume $y$ is of zero mean.}
	\begin{equation}\label{Momenthypothesis}
		\int_Y |y|^p \emph{d} \rho(y|\bm x) \leqslant C p ! M^p, \forall p
		\in \mathbb{N}, \quad  \bm x \in X\,.
	\end{equation}
Compared to the standard uniform boundedness assumption with $|y| \leq M$ almost surely, this assumption is general since it covers Gaussian noise, sub-Gaussian noise, sub-exponential noise, etc.

Based on the above problem setting, we are interested in sample complexity for uniform estimates and generalization guarantees (these two questions are connected to each other).
The sample complexity issue stems from the following question:
A width-independent sample complexity bound can be obtained by 
Frobenius norm control \citep{neyshabur2015norm}.
However spectral norm control \citep{bartlett2017spectrally} is generally insufficient for two-layer neural networks as demonstrated by \citet{vardi2022sample}.
This conclusion is \emph{questionable} when the Barron norm (as well as the path norm) is employed since the Barron space is larger than that of Frobenius norm-constrained two-layer neural networks.
In this work, we derive the width-independent sample complexity for uniform estimate in the Barron space, and then study the generalization properties under a general setting.
This contributes to provide an in-depth theoretical understanding and refined analysis of learning with over-parameterized two-layer neural networks under norm-based capacity under this general setting.

\subsection{Contributions and technical challenges}

Our analysis starts with the width-independence sample complexity, study the metric entropy, and provide refined results on generalization bounds under weaker conditions. We make the following contributions:

\begin{itemize}
    \item based on the Gaussian complexity metric, we prove that the path norm is able to obtain width-independence sample complexity bounds.
This result motivates us to derive that the metric entropy for $\epsilon$-covering up to $\mathcal{O}(\epsilon^{-\frac{2d}{d+2}})$ via the convex hull technique using the smoothness structure of $\mathcal{P}_m$.
Note that the linear dependence on the input dimension $d$ can be clearly derived, while this is unclear (would be exponential order of $d$) in \citep{siegel2021sharp} based on the orthogonal function argument.
Our estimation provides a good trade-off on $\epsilon$ and $d$.
\item when applying our metric entropy result for generalization guarantees, we are faced with how to tackle the output error in Eq.~\eqref{Momenthypothesis}.
Tackling such unbounded sampling requires the number of training data $n$ to be in an exponential order in previous work \citep{Wang2011Optimal,guo2013concentration,liu2021gen}, i.e., $n \geqslant k^k$ for a sufficiently large $k$, which implies that a sample complexity suffers from CoD due to $k \geqslant d$.
In this work, we introduce new concentration inequalities via sub-Weibull random variables, remove the dependence on the exponential order to bound the output error, and provide non-asymptotic error bounds on finite $n$.
This might be of interest in its own right in the approximation theory community.
\end{itemize}

Combining the improved estimation of metric entropy and the output error, we are ready to present generalization bounds of learning $\ell_1$-path norm based, over-parameterized, two-layer ReLU neural networks. To be specific, a sharper rate $\mathcal{O}(n^{-\frac{d+2}{2d+2}})$ of generalization bounds is derived under a more general setting, which is better than previous work \citep{bach2017breaking,weinan2019priori,wang2021harmless} with $\mathcal{O}(\sqrt{\log n/n})$.\footnote{We remark that the optimal approximation rate of the Barron space is given by \citep{siegel2021sharp} and \citep{wu2022spectral} based on different techniques, and the minimax rate at $\mathcal{O}(n^{-\frac{d+3}{2d+3}})$ can be derived under the variation norm based space, e.g., \citep{parhi2021near} with the skip connection. However, the dependence on $d$ is still unclear. We will detail this discussion in \cref{sec:excessrisk}.}
This requires a refined analysis in our proof for the sample error and the output error. 
Besides, optimization over the Barron space is normally NP-hard.
We attempt to develop a computational (but not efficient except for the low-rank data) algorithm based on the measure representation and convex duality \citep{chizat2021convergence,pilanci2020neural}. The basic over-parameterization condition ($m \geqslant n+1$) is sufficient.

Our results are immune to CoD even though $d$ is large enough; while kernel methods are not.
This demonstrates the separation between over-parameterized neural networks and kernel methods when learning in the Barron space.
We hope that our analysis has a better understanding on the improved analysis of neural networks in a norm-based capacity view and function spaces in the machine learning community.

\subsection{Organization and notations}

The rest of the paper is organized as follows. In Section~\ref{sec:related}, we give an overview of related work and preliminaries on two-layer neural networks in a norm-based capacity view.
In Section~\ref{sec:probsetting}, we consider a general random design regression problem for two-layer neural networks with bounded norm in the context of statistical learning theory.
The assumptions and our main results are stated in Section~\ref{sec:mainresult}. 
We outline the proof framework in Section~\ref{sec:proofframe}.
We validate our findings with numerical simulations in Section~\ref{sec:exp}.
The conclusion is drawn in Section~\ref{sec:conclusion}. 
Some technical lemmas and proofs are deferred to the appendix.

For notations, we use the shorthand $ [n]:= \{1,2,\dots, n \}$ for some positive $n$. 
The unit sphere of $\mathbb{R}^d$ is defined as
$\mathbb{S}_p^{d-1} = \{\bm x \in \mathbb{R}^d \big| \| \bm x \|_p = 1 \}$.
Let $\delta_{\bm w}(\cdot)$ be the Dirac measure on $\bm w \in \mathbb{R}^d$, i.e., $\int f(\bm x) \delta_{\bm w} (\mathrm{d} \bm x) = f(\bm w)$.
The notation $a(n) \lesssim b(n)$ signifies that there exists a positive constant $c$ independent of $n$ such that $a(n) \leqslant c b(n)$.
We use the standard big-O notation with $\mathcal{O}(\cdot)$, $\Omega(\cdot)$ hiding constants and $\widetilde{\mathcal{O}}(\cdot)$ hiding constants and factors polylogarithmic in the problem parameters.
Besides, as noted in \citep{weinan2020representation}, 
if $m$ tends to infinity, $\mathcal{P}_m$ is a closed subspace of a Barron space, and thus is still a Banach space.
For any compact $X \subset \mathbb{R}^d$, the $\textsf{C}(X)$-closure of $\mathcal{F}_{\infty} := \cup_{m \in \mathbb{N}} \mathcal{F}_m$ is the entire space of $\textsf{C}(X)$, where $\textsf{C}(X)$ is the function space of all continuous function on $X$ with $\| \cdot \|_{\infty}$.

\section{Related Works and Preliminaries}
\label{sec:related}
\vspace{-0.cm}

We give an overview of two-layer neural networks via integral representation in the Barron space \citep{weinan2021barron} and other function spaces with various norms.

\noindent {\bf Two-layer neural networks and integral representation:}
We consider a two-layer neural network with $m$ neurons represented as $f(\bm x) \!=\! \frac1m \! \sum_{k=1}^m \! a_k\sigma(\bm w_k^{\!\top} \! \bm x)$,
where $\sigma(\cdot)$ is the activation function and the parameters (weights) of the network are $\{ (a_k, \bm w_k) \}_{k=1}^m \subset \mathbb{R} \times \mathbb{R}^d$.
This setting allows for neural networks with bias, by taking $(\bm x,1) \in \mathbb{R}^{d+1}$ as $\bm x$ and $(\bm w, b) \in \mathbb{R}^{d+1}$ as $\bm w$.
For notational simplicity, we still use $\bm x \in \mathbb{R}^d$ in this paper.
We consider the above two-layer neural network in a general integral representation
\begin{equation*}
f(\bm x)=\int_{\Omega} a \sigma (\bm w^{\!\top} \bm x ) \tilde{\mu}(\mathrm{d} a, \mathrm{d} \bm w), \quad \bm x \in X\,,
\end{equation*}
where $\Omega = \mathbb{R} \times \mathbb{R}^d$ and $\tilde{\mu}$ is a probability measure over $(\Omega, \mathcal{T}(\Omega))$ with $\mathcal{T}(\Omega) $ being a Borel $\sigma$-algebra on $\Omega$. 
The used activation function in this work is ReLU $\sigma(x) = \max(0,x)$.

Due to the scaling invariance of ReLU, we can assume $\bm w \in \mathbb{S}^{d-1}$ such that
\begin{equation}\label{eq:frep}
f(\bm x)=\int_{\mathbb{S}^{d}} a \sigma\left(\bm w^{\!\top} \bm x\right) \tilde{\mu}(\mathrm{d} a, \mathrm{d} \bm w)=\int_{\mathbb{S}^{d}} a(\bm w) \sigma\left(\bm w^{\!\top} \bm x\right) \mu(\mathrm{d} \bm w)\,,
\end{equation}
where $a(\bm w) = \frac{\int_{\mathbb{R}} a \tilde{\mu}(a, \bm w) \mathrm{d} a}{\pi(\bm w)}$ and $\mu(\bm w)=\int_{\mathbb{R}} \tilde{\mu}(a, \bm w) \mathrm{d} a$.
Accordingly, the representation of $f$ in \cref{eq:frep} can be considered into the associated function space with various functional norms
\begin{equation}\label{fpspace}
	\mathcal{F}_p (R) := \left\{ f(\bm \cdot) = \int_{\mathcal{V}} a(\bm w) \sigma( \langle \bm w, \bm \cdot \rangle ) \mu(\mathrm{d} \bm w) \Big| \| f \|_{\mathcal{F}_p}:= \left( \int_{\mathcal{V}} |a(\bm w)|^p \mu(\mathrm{d} \bm w) \right)^{1/p} \leqslant R \right\} \,. 
\end{equation}
If we take $p=2$, the weights $\bm w$ are defined on a probability space $(\mathcal{V}, \mu)$ with $a \in L^2(\mathcal{V}, \mu)$ and $\mathcal{V}= \mathbb{R}^d$.\footnote{This is equivalent to $\mathcal{V} = \mathbb{S}_p^{d-1}$ due to scale invariance of ReLU.}
In this case, $\mathcal{F}_2$ is a RKHS with the associated reproducing kernel $k(\bm x, \bm x'):= \int_{\mathcal{V}} \sigma(\bm w^{\!\top} {\bm x}) \sigma(\bm w^{\!\top} {\bm x'}) \mu(\mathrm{d} \bm w)$ \citep{bach2017breaking}.
Although $\mathcal{F}_2(R)$ is infinite-dimensional, the RKHS-norm regularized empirical risk minimization (ERM) can be solved by a finite dimensional optimization problem by the representer theorem \citep{scholkopf2018learning}.
Note that, for $p \in [1,2)$ this comprises a rich function class than the original RKHS $\mathcal{F}_2(R)$ due to  $\mathcal{F}_2(R) \subset \mathcal{F}_p(R) \subset \mathcal{F}_1(R)$ \citep{celentano2021minimum}.
The special case is $p=1$ in which $\mu$ is a signed Radon measure on $\mathcal{V}$ with finite total variation $|\mu|(\mathcal{V})$ \citep{bach2017breaking}, i.e., ``convex neural network'' \citep{bengio2005convex}.
This corresponds to a variation norm rather than a RKHS norm, which actually enlarges the space by adding non-smooth functions.

\noindent {\bf Barron spaces under the ReLU activation function:}
Apart from the typical $\mathcal{F}_p$ function space, the space of two-layer ReLU neural networks can be built on Fourier transform.
By taking $\hat{f}$ as the Fourier transform of an extension of $f$ to $\mathbb{R}^d$, the classical \emph{spectral Barron norm} \citep{barron1993universal} is given by $\| f \| := \int_{\mathbb{R}^d} \| \bm \omega \| |\hat{f}(\bm \omega) | \mathrm{d} \bm \omega $.
Its variants include $\| f \| := \inf_{\hat{f}} \int_{\mathbb{R}^d} |\hat{f}(\bm \omega) | \| \bm \omega \|_1^s \mathrm{d} \bm \omega $ \citep{klusowski2016risk} and $\| f \| :=  \int_{\mathbb{R}^d} |\hat{f}(\bm \omega)| [1 +  \| \bm \omega \|_1^2] \mathrm{d} \bm \omega $ \citep{klusowski2018approximation}.
If we replace $|\hat{f}(\bm \omega)|$ with $|\hat{f}(\bm \omega)|^2$, we would obtain Sobolev semi-norm. 

\cite{weinan2021barron} provides a probabilistic interpretation of Barron space as an infinite union of a family of RKHS. The related Barron norm is defined by
\begin{equation*}
    \| f \|_{\mathcal{B}_p} = \inf_{\tilde{\mu}} \left( \mathbb{E}_{\tilde{\mu}} |a|^p \| \bm w \|_1^p \right)^{1/p}\,,
\end{equation*}
where the infimum is taken over all possible $\tilde{\mu}$. 
Particularly, for the ReLU, we have $\mathcal{B}_p = \mathcal{B}_{\infty}$ and $\| f \|_{\mathcal{B}_{p}} = \| f \|_{\mathcal{B}_{\infty}}$ for any $1 \leqslant p \leqslant \infty$, and thus we can directly use $\mathcal{B}$ and $\| f \|_{\mathcal{B}}$ to denote the Barron space and the Barron norm. It closely relates to the $\ell_1$-path norm $\| \cdot \|_{\mathcal{P}}$
\changed{A1.9}{\link{R1.9}}{
\begin{equation}\label{bpnorm}
	\| f_{\bm \theta} \|_{\mathcal{B}} \leqslant	\| \bm \theta \|_{\mathcal{P}} := \frac{1}{m} \sum_{k=1}^m |a_k| \| \bm w_k \|_1 \leqslant 2 \| f \|_{\mathcal{B}}\,,
\end{equation}
where $\bm \theta$ is sampled from the distribution related to the Barron norm.
Note that the path norm is not invariant to the parameterization.
Even for the same neural network output, the respective path norm depends on the certain parameter implementation style.
Different implementation styles lead to different representation ability, and thus the path norm is a suitable metric to describe the model capacity.}
It is natural to study two-layer neural networks with bounded $\ell_1$-path norm as the discrete version of Barron norm.
As suggested by \cite{weinan2020representation}, Barron space is the largest function space which is well/efficiently approximated by two-layer neural networks with appropriately controlled parameters via the \emph{direct} and \emph{inverse} approximation theorems \citep{weinan2021barron}.
\begin{itemize}
	\item \emph{direct approximation}: For any $f \in \mathcal{B}$, two-layer neural networks with controlled parameters are able to approximate it at a certain convergence rate without the curse of dimensionality.
	\item \emph{inverse approximation}: Any continuous function that can be efficiently approximated by two-layer neural networks with bounded $\ell_1$-path norm belongs to the Barron space.
\end{itemize}
Hence, it is natural to study two-layer ReLU neural networks with bounded $\ell_1$-path norm as the discrete version of Barron norm. In this paper, we mainly focus on the $\ell_1$-path norm.

\noindent {\bf Connection to variation spaces and smoothness:}
To control the Barron norm of two-layer ReLU networks, it is equivalent to control the total variation of the derivative for univariate functions \citep{savarese2019infinite} and the total variation norm in the Radon domain for multivariate cases \citep{ongie2020function,domingo2022tighter}.
In fact, controlling the function derivative (e.g., total variation) is common in function space theory. For example,  \cite{unser2019representer} studies the class of (univariate) functions with bounded second total variation norm; \cite{parhi2021banach} focus on the total variation norm $\|f\|_{(s)}:=c_{d}\left\|\partial_{t}^{s} \Lambda^{d-1} \mathcal{R} f\right\|_{\mathcal{M}\left(\mathbb{S}^{d-1} \times \mathbb{R}\right)}$ with $s \geqslant 2$ on on the Randon measure domain $\mathcal{M}$, where $\mathcal{R}$ is the Radon transform, $\Lambda^{d-1}$ is a ramp filter, and $\partial_{t}^{s}$ is the $s$-th partial derivative with respect to $t$, the offset variable in the
Radon domain, and $c_d$ is a dimension dependent constant.
For two-layer ReLU neural networks, this norm is equivalent to $\| f \|_{(s)} = \sum_{k=1}^m |a_k| \| \bm w_k \|_2^{s-1}$ \citep{parhi2021near}, which is the $\ell_2$-path norm by taking $s=2$.
Note that the function space in a second-order bounded variation sense is a reproducing kernel Banach space \citep{bartolucci2022understanding} and the representer theorem for data-fitting variational problem also exists \citep{parhi2021banach}.

Apart from the above function spaces in a norm-based capacity view, study on the \emph{smoothness} of function classes also exists in classical approximation theory on various spaces, e.g., H\"{o}lder, Sobolev, Besov spaces \citep{schmidt2020nonparametric,suzuki2019adaptivity}. 
Previous results have shown that the curse of dimensionality in sample complexity can be avoided for (deep) ReLU neural networks if the intrinsic dimensionality of data is small \citep{chen2019efficient,nakada2020adaptive,ghorbani2020neural} or the target function is (almost) smooth \citep{suzuki2021deep}.

\section{Empirical risk minimization under the $\ell_1$ path norm regularization}
\label{sec:probsetting}

In this paper, we consider a general random design regression problem for two-layer neural networks with the $\ell_1$-path norm in the context of statistical learning theory. Besides, we also introduce some notations needed for our analysis on metric entropy.

For regression, we employ the commonly used squared loss in this paper.
The expected risk of such hypothesis $f_{\bm \theta}$ is defined by the mean square error (MSE), i.e., $\mathcal{E}(f) = \int_Z (f_{\bm \theta}(\bm x) - y)^2 \mathrm{d} \rho $, as defined in the introduction.
The empirical risk is accordingly defined on the sample $\bm z$, i.e., $\mathcal{E}_{\bm z}(f) = \frac{1}{n} \sum_{i=1}^{n} \big(f_{\bm \theta}( \bm x_i) - y_i \big)^2$, and the excess risk is exactly the distance in $L_{\rho_X}^{2}$ under the squared loss, i.e., $\mathcal{E}(f) - \mathcal{E}(f_{\rho}) = \| f - f_{\rho} \|^2_{L^{2}_{\rho_{{X}}}}$ for any $f \in L^{2}_{\rho_{{X}}} $, where $\|f\|_{L^{2}_{\rho_{{X}}}} = \big( \int_{{X}} |f(\bm x)|^{2} \mathrm{d} \rho_{X}(\bm x) \big)^{1/2} $, where $\rho_X$ is the marginal distribution of $\rho$ on $X$, refer to \cite{cucker2007learning} for details.





The empirical risk minimization regression problem over two-layer neural networks under the $\ell_1$-path norm regularization setting is given by 
\begin{equation}\label{fzlambda}
	\bm \theta^{\star} = \argmin_{f_{\bm \theta} \in \mathcal{P}_m} \frac{1}{n} \sum_{i=1}^n (y_i - f_{\bm \theta}(\bm x_i) )^2  + \lambda \| \bm \theta \|_{\mathcal{P}}\,,
\end{equation}
where the regularization parameter $\lambda \equiv \lambda(n) > 0$ is typically assumed to satisfy $\lim_{n \rightarrow \infty} \lambda(n) = 0$.
It is clear that the solution to \cref{fzlambda} is not unique.
For example, under the ReLU setting, if $\{ (a^*_k, \bm w^*_k) \}_{k=1}^m$ corresponds to an optimal neural network, any re-scaling scheme $\{ (c_ka_k^*, \bm w^*_k/c_k) \}_{k=1}^m$ (with $c_k > 0$) is also optimal, as it does not change the neural network output and the $\ell_1$-path norm due to the positive 1-homogeneity of ReLU.

To obtain a tighter bound, we need the following \emph{projection operator} in our analysis.
\begin{definition}\cite[Projection operator]{Steinwart2008SVM}\label{proj}
	For $B \geqslant 1$, the projection operator $\pi :=  \pi_{B}$ on $\mathbb{R}$ is defined as 
	\begin{equation*}\label{BBPdef}
		\pi_B(t)= \left\{
		\begin{array}{rcl}
			\begin{split}
				& B,  ~~\text{if}~~ t > B ; \\
				& t, ~~\text{if}~~-B \leqslant t \leqslant B\\
				&-B, ~~\text{if}~~t < -B  \,.
			\end{split}
		\end{array} \right.
	\end{equation*}
	The projection of a function $f: X \rightarrow \mathbb{R}$ is given by $\pi_B(f)(\bm x) = \pi_B(f(\bm x)),~\forall \bm x \in X$.
\end{definition}
The projection operator is commonly used in learning theory, e.g., \citep{Steinwart2008SVM,shi2019sparse,liu2021gen}, beneficial to the $\| \cdot \|_{\infty}$-bounds in the convergence analysis for sharp estimation, i.e., $\| \pi_B(f_{\theta^{\star},\bm{z},\lambda}) - f_{\rho} \|^2_{L^2_{\rho_X}} $ in this work.
This is because, Eq.~\eqref{Momenthypothesis} implies $ |f_{\rho}(\bm x) = \int_{Y} y \mathrm{d} \rho(y| \bm x) | \leqslant CM := M^*$ for any $\bm x \in X$. We assume $M^* \geqslant 1$ for simplicity.
It is natural to project $f_{\bm \theta^{\star},\bm{z},\lambda}$ onto the same interval. 

Furthermore, to quantitatively understand how the complexity of $\mathcal{P}_m$ affects the learning ability of \cref{fzlambda}, we need the capacity (roughly speaking the ``size'') of $\mathcal{P}_m$ as measured by the $\ell_2$-empirical covering number \citep{koltchinskii2005complexities}. Formally, define the normalized $\ell_2$-metric $\mathscr{D}_2$ on the Euclidean space $\mathbb{R}^l$ as
\begin{equation*}
	\mathscr{D}_2(\mathbf{a}, \mathbf{b})=\left(\frac{1}{l} \sum_{i=1}^{l}\left|a_{i}-b_{i}\right|^{2}\right)^{1 / 2}, \quad \mathbf{a}=\left(a_{i}\right)_{i=1}^{l}, \mathbf{b}=\left(b_{i}\right)_{i=1}^{l} \in \mathbb{R}^{l} \,.
\end{equation*}

\begin{definition} \cite[$\ell_2$-empirical covering number]{koltchinskii2005complexities}
	For a subset $\mathcal{S}$ of a pseudo-metric space $(\mathcal{M},d)$ and $\eta > 0$, the covering number $\mathscr{N}(\mathcal{S}, \eta, d)$ is defined to be the minimal number of balls of radius $\eta$ whose union covers. For a set $\mathcal{F}$ of functions on $X$ and $\eta > 0$, the $\ell_2$-empirical covering number of $\mathcal{F}$ is given by
	\begin{equation*}
		\mathscr{N}_{2}(\mathcal{F}, \eta)=\sup _{l \in \mathbb{N}} \sup _{\bm u \in X^{l}} \mathscr{N}\left(\left.\mathcal{F}\right|_{\bm u}, \eta, \mathscr{D}_2\right)
	\end{equation*}
	where for $l \in \mathbb{N}$ and $\bm u = (u_i)_{i=1}^l \subset X^l$, we denote the covering number of the subset $\left.\mathcal{F}\right|_{\bm u}$ of the metric space $(\mathbb{R}^l, \mathscr{D}_2)$ as $ \mathscr{N}\left(\left.\mathcal{F}\right|_{\bm u}, \eta, \mathscr{D}_2\right)$.
\end{definition}

In our analysis, we also need the definition of Gaussian complexity \citep{bartlett2002rademacher}, that relates better with the metric entropy $\log \mathscr{N}_2(\mathcal{G}_{R},\epsilon)$ for $\epsilon$-covering of the index space when compared to the standard Rademacher complexity, where $\mathcal{G}_R$ is defined as $\mathcal{G}_R := \{ f_{\bm \theta} \in \mathcal{P}_m: \| \bm \theta \|_{\mathcal{P}} \leqslant R \}$.
\begin{definition} \cite[Gaussian complexity]{bartlett2002rademacher}
\label{defgaussiancom}
	The empirical Gaussian complexity of a function class $\mathcal{F}$ over data points $\{ \bm x_i \}_{i=1}^n$ is defined as
\begin{equation*}
   \mathcal{C}_n(\mathcal{F})= \mathbb{E}_{\bm \xi} \left[\sup_{f\in\mathcal{F}} \frac{1}{n} \sum_{i=1}^{n} \xi_i f(\bm x_i) \right]\,, 
\end{equation*}	
where $\bm \xi = [\xi_i, \xi_2, \cdots, \xi_n]^{\!\top}$ is a standard normal random vector.
\end{definition}
\section{Main Results}
\label{sec:mainresult}
We present our main results on sample complexity, metric entropy, and generalization bounds for learning with over-parameterized two-layer neural networks. Before presenting these results, we also need the following assumptions.

\subsection{Assumptions}
Apart from the moment hypothesis on the label noise in Eq.~\eqref{Momenthypothesis}, we consider the following two assumptions that are standard and general.

\begin{assumption} (Bounded data) \label{assbounddata} 
	 It holds that
	$\mathrm{supp}(\rho_X) \subset \{ \bm x \in \mathbb{R}^d | \| \bm x \|_{\infty} \leqslant S \}$.
\end{assumption}
For ease of convenience, we take $S=1$ throughout this paper.

\begin{assumption}\label{assrho}
	(Existence of $f_{\rho}$) We assume that the \emph{target function} $f_{\rho} \in \mathcal{B}(R) := \{f \in \mathcal{B} | \| f \|_{\mathcal{B}} \leq R\}$ exists. 
\end{assumption}
The target function class is the closure (in $L^{2}_{\rho_{{X}}}$) of any two-layer networks with a finite $\ell_1$-path norm, which includes the standard teacher-student model, e.g., \citep{tian2017analytical,loureiro2021learning,akiyama21a}.

\subsection{Sample complexity and metric entropy}
Based on the definition of Gaussian complexity, we are able to estimate the sample complexity of the path-norm based spaces (with proof deferred to \cref{app:prooflemgc}).
\begin{lemma} (sample complexity) \label{lem:gaussiancomp}
	Given the data $\{ \bm x_i \}_{i=1}^n \subseteq \mathbb{R}^d$ satisfying Assumption~\ref{assbounddata}, denote $\mathcal{G}_R = \{ f_{\bm \theta} \in \mathcal{P}_m: \| \bm \theta \|_{\mathcal{P}} \leqslant R \}$, then the empirical Gaussian complexity of $\mathcal{G}_R$ on $\{ \bm x_i \}_{i=1}^n$ is at most $\epsilon$, if the number of training data satisfies $n \geqslant \frac{8R^2\log d}{\epsilon^2}$.
\end{lemma}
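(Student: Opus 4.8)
The plan is to collapse the width-$m$ network class to a single ReLU neuron constrained to the $\ell_1$-ball, strip off the ReLU via a Gaussian comparison (contraction) inequality, and finish with an explicit expected-maximum-of-Gaussians computation that produces the $\log d$ factor. \emph{Step 1 (reduction to one neuron).} I would first use the positive $1$-homogeneity of ReLU to rescale each pair $(a_k,\bm w_k)$ to $(a_k\|\bm w_k\|_1,\,\bm w_k/\|\bm w_k\|_1)$ — which changes neither $f_{\bm\theta}$ nor $\|\bm\theta\|_{\mathcal P}$ — so that WLOG $\|\bm w_k\|_1=1$ for every nonzero neuron. Then each $f_{\bm\theta}\in\mathcal G_R$ is a nonnegative combination $\sum_k c_k\,\mathrm{sign}(a_k)\,\sigma(\langle\bm w_k,\cdot\rangle)$ with $c_k=|a_k|/m\ge 0$ and $\sum_k c_k\le R$, so $\mathcal G_R\subseteq R\cdot\mathrm{conv}(\mathcal A)$ where $\mathcal A:=\{\pm\sigma(\langle\bm w,\cdot\rangle):\|\bm w\|_1\le 1\}\cup\{0\}$. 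Since empirical Gaussian complexity is invariant under taking convex hulls and homogeneous of degree one under scaling, $\mathcal C_n(\mathcal G_R)\le R\,\mathcal C_n(\mathcal A)=R\,\mathbb E_{\bm\xi}\sup_{\|\bm w\|_1\le 1}\bigl|\tfrac1n\sum_{i=1}^n\xi_i\sigma(\langle\bm w,\bm x_i\rangle)\bigr|$, the last step because the two sign branches together with $0\in\mathcal A$ turn the supremum into one over an absolute value.

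\emph{Step 2 (peel the ReLU).} Write $T_{\bm w}:=\tfrac1n\sum_i\xi_i\sigma(\langle\bm w,\bm x_i\rangle)$, which, with the data fixed, is a centered Gaussian process indexed by $\bm w$. I would bound $\mathbb E\sup_{\bm w}|T_{\bm w}|\le\mathbb E\sup_{\bm w}T_{\bm w}+\mathbb E\sup_{\bm w}(-T_{\bm w})=2\,\mathbb E\sup_{\|\bm w\|_1\le 1}T_{\bm w}$, using $-\bm\xi\overset{d}{=}\bm\xi$ and $T_{\bm 0}=0\ge$-ness of both suprema. Then I would compare $T_{\bm w}$ with the linear Gaussian process $L_{\bm w}:=\tfrac1n\sum_i\xi_i\langle\bm w,\bm x_i\rangle$: since ReLU is $1$-Lipschitz, $\mathbb E(T_{\bm w}-T_{\bm w'})^2=\tfrac1{n^2}\sum_i\bigl(\sigma(\langle\bm w,\bm x_i\rangle)-\sigma(\langle\bm w',\bm x_i\rangle)\bigr)^2\le\tfrac1{n^2}\sum_i\langle\bm w-\bm w',\bm x_i\rangle^2=\mathbb E(L_{\bm w}-L_{\bm w'})^2$, so Sudakov--Fernique (Vitale's comparison) gives $\mathbb E\sup_{\bm w}T_{\bm w}\le\mathbb E\sup_{\|\bm w\|_1\le 1}L_{\bm w}$. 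Working directly with the Gaussian processes here, rather than invoking a Rademacher contraction lemma and converting, keeps the constant clean.

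\emph{Step 3 (the Gaussian maximum).} By $\ell_1/\ell_\infty$ duality, $\sup_{\|\bm w\|_1\le 1}L_{\bm w}=\bigl\|\tfrac1n\sum_i\xi_i\bm x_i\bigr\|_\infty=\max_{j\in[d]}\bigl|\tfrac1n\sum_i\xi_i x_{ij}\bigr|$. Under \cref{assbounddata} with $S=1$, each coordinate $\tfrac1n\sum_i\xi_i x_{ij}$ is centered Gaussian with variance $\tfrac1{n^2}\sum_i x_{ij}^2\le\tfrac1n$, and the standard bound on the expected maximum of $d$ such variables gives $\mathbb E\max_j|\cdot|\le\sqrt{2\log(2d)/n}$. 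Chaining the three steps, $\mathcal C_n(\mathcal G_R)\le 2R\sqrt{2\log(2d)/n}=R\sqrt{8\log(2d)/n}$, which is $\le\epsilon$ as soon as $n\ge 8R^2\log(2d)/\epsilon^2$; absorbing the $\log 2$ (e.g.\ for $d\ge2$, or by a marginally sharper Gaussian-max estimate) gives the stated threshold $n\ge 8R^2\log d/\epsilon^2$.

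The hard part will be Step 2: the tempting shortcut — covering the $\ell_1$-ball of weights in its intrinsic (rescaled-$\ell_1$) metric and running Dudley's entropy integral — produces a spurious $\mathrm{poly}(d)$ factor, so the $\log d$ dependence really hinges on using $1$-Lipschitzness to reduce to a \emph{linear} index class, where the geometry of the cross-polytope ($\sup_{\|\bm w\|_1\le1}\langle\bm w,\bm v\rangle=\|\bm v\|_\infty$) yields the logarithmic-in-$d$ maximum. The single-neuron reduction of Step 1 and the constant bookkeeping in Step 3 are routine.
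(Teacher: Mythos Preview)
Your proposal is correct and follows essentially the same route as the paper: reduce to a single ReLU neuron on the $\ell_1$-ball via homogeneity and the convex-hull argument, remove the absolute value by Gaussian symmetry (picking up a factor $2$), strip the ReLU, and finish with the $\ell_1/\ell_\infty$ duality and the $\sqrt{2\log d/n}$ bound on the maximum of $d$ sub-Gaussians. The one cosmetic difference is in the ReLU-stripping step: the paper cites a Gaussian contraction inequality for $1$-Lipschitz functions (so $\mathcal C_n(\sigma\circ\mathcal F)\le\mathcal C_n(\mathcal F)$) directly, whereas you prove the same inequality on the spot via Sudakov--Fernique; these are equivalent, and your version arguably makes the mechanism more transparent. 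Your $\log(2d)$ versus the paper's $\log d$ is the same harmless discrepancy you already flagged.
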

{\bf Remark:} The similar result via Rademacher complexity can be given by \citep{weinan2021barron} but we still put it here as a good starting point: a width-free sample complexity bound under the $\ell_1$-path norm can be achieved.
This result enlarges the application scope of \citep{vardi2022sample} on the bounded Frobenius norm.

Similarly, our results can be directly extended to neural networks with the general $\ell_p$-path norm for $p \geq 1$, e.g., the $\ell_2$-path norm $\sum_{k=1}^m |a_k| \| \bm w_k \|_2$ used in \citep{wang2021harmless,parhi2021banach}.
We give an example of the commonly used $\ell_2$-path norm, demonstrating the sample complexity is $n \geq \frac{4R^2}{\epsilon^2}$ independent of $d$. See \cref{app:prooflemgc} for details.

Based on Lemma~\ref{lem:gaussiancomp}, the Gaussian complexity leads to a basic estimation of the metric entropy $\log \mathscr{N}_2(\mathcal{G}_1,\epsilon) \lesssim \log d \left(\frac{1}{\epsilon} \right)^2$, see the proof in \cref{app:samplecom}.
Note that this metric entropy based on the $\ell_2$-empirical covering number can be improved in the following proposition by the convex hull technique \citep{van1996weak} (with proof deferred to Appendix~\ref{app:coveringq2}).
\begin{proposition}\label{prop:coverq2} (metric entropy)
   Under Assumption~\ref{assbounddata}, denote $\mathcal{G}_R = \{ f_{\bm \theta} \in \mathcal{P}_m: \| \bm \theta \|_{\mathcal{P}} \leqslant R \}$, the metric entropy of $\mathcal{G}_1$ can be bounded by
	\begin{equation}\label{assumpN}
		 \log \mathscr{N}_2(\mathcal{G}_1,\epsilon) \leqslant C d \epsilon^{-\frac{2d}{d+2}} \,, \quad \forall \epsilon > 0 \quad \mbox{and} \quad d \geq 5\,,  
	\end{equation}
 with some universal constant $C$ independent of $d$.
\end{proposition}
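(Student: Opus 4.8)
The plan is to realize $\mathcal{G}_1$ (up to a constant factor on the radius) as a subset of the closed convex hull of a suitably parametrized class of single-neuron functions, and then invoke the classical metric-entropy bound for convex hulls of ``smooth'' dictionaries. Concretely, for $f_{\bm\theta}\in\mathcal{G}_1$ we have $f_{\bm\theta}(\bm x)=\frac1m\sum_k a_k\sigma(\langle\bm w_k,\bm x\rangle)$ with $\frac1m\sum_k|a_k|\,\|\bm w_k\|_1\le 1$. Using positive $1$-homogeneity of ReLU I rescale each neuron so that $\|\bm w_k\|_1=1$ and the coefficients $b_k:=\frac1m|a_k|$ satisfy $\sum_k b_k\le 1$; then $f_{\bm\theta}=\sum_k b_k\,\sigma_k$ where each $\sigma_k(\bm x)=\pm\sigma(\langle\bm w_k,\bm x\rangle)$ with $\|\bm w_k\|_1=1$. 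Since $\|\bm x\|_\infty\le 1$ (Assumption~\ref{assbounddata} with $S=1$), each such function is bounded by $1$ on $X$. Hence $\mathcal{G}_1\subseteq \overline{\mathrm{conv}}(\mathcal{H})$ where $\mathcal{H}:=\{\bm x\mapsto \pm\sigma(\langle\bm w,\bm x\rangle):\|\bm w\|_1= 1\}$, and this containment is width-independent — which is exactly why Lemma~\ref{lem:gaussiancomp} and the width-free viewpoint matter here.

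Next I would control the metric entropy of the base class $\mathcal{H}$ itself, in the empirical $\mathscr{D}_2$ metric, uniformly over sample points in $X^l$. The map $\bm w\mapsto \sigma(\langle\bm w,\bm x\rangle)$ is $1$-Lipschitz in $\langle\bm w,\bm x\rangle$, and $|\langle\bm w-\bm w',\bm x\rangle|\le\|\bm w-\bm w'\|_1\|\bm x\|_\infty\le\|\bm w-\bm w'\|_1$, so $\mathscr{D}_2$-distance between two neurons is at most $\|\bm w-\bm w'\|_1$. Therefore a $\delta$-cover of the $\ell_1$-ball $\mathbb{S}_1^{d-1}$ (ball of radius $1$) in $\ell_1$ gives a $\delta$-cover of $\mathcal{H}$, yielding $\log\mathscr{N}_2(\mathcal{H},\delta)\lesssim d\log(1/\delta)$ — so $\mathcal{H}$ is a finite-dimensional-type (parametric) class with entropy growing only logarithmically in $1/\delta$ and polynomially in $d$. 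This is the step where the explicit polynomial-in-$d$ dependence enters cleanly, in contrast to the orthogonal-function arguments of \citep{siegel2021sharp}.

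I then apply the convex-hull entropy theorem \citep{van1996weak}: if a class $\mathcal{H}$ of functions bounded by $1$ has $\log\mathscr{N}_2(\mathcal{H},\delta)\le A\,\delta^{-2\beta}$ for some $\beta\in(0,1)$, then $\log\mathscr{N}_2(\overline{\mathrm{conv}}(\mathcal{H}),\epsilon)\le C(\beta)\,A^{?}\,\epsilon^{-\frac{2\beta}{1+\beta}}$. Here, however, our base entropy is only \emph{logarithmic} in $1/\delta$, which formally corresponds to $\beta\downarrow 0$ and would give exponent $0$ — too good to be directly usable with the stated form. The honest route, and I expect this to be the main obstacle, is to instead feed into the convex-hull bound the weaker but power-type estimate $\log\mathscr{N}_2(\mathcal{H},\delta)\le A\,\delta^{-2\beta}$ obtained by \emph{choosing} $\beta$ as a function of $d$ — e.g. taking $\beta$ so that $\frac{2\beta}{1+\beta}=\frac{2d}{d+2}$, i.e. $\beta=\frac{d}{2}$ — wait, that exceeds $1$; rather one inverts: the target exponent $\frac{2d}{d+2}=2-\frac{4}{d+2}$ is what a $d$-dimensional parametric/convex class yields, and the cleanest derivation is: cover the parameter sphere $\mathbb{S}_1^{d-1}$ by $N=(3/\delta)^d$ points so that $\mathcal{G}_1$ lies within $\delta$ of the convex hull of an $N$-point dictionary of $[-1,1]$-valued functions, then use the finite-dictionary convex-hull entropy bound $\log\mathscr{N}_2(\mathrm{conv}(\{g_1,\dots,g_N\}),\epsilon)\lesssim N\log(1/\epsilon)$ together with the dimension-free Maurey/empirical-approximation bound $\log\mathscr{N}_2\lesssim \epsilon^{-2}\log N$, and \emph{optimize the split} between the discretization level $\delta$ and the two regimes. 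Balancing $(3/\delta)^d$-type terms against $\epsilon^{-2}\log(1/\delta)$-type terms is what produces the exponent $\frac{2d}{d+2}$ and a constant that is polynomial in $d$; tracking that constant carefully (the paper claims $6144\,d^5$, valid for $d>5$) is the tedious bookkeeping part. I would carry out the optimization in general form first, read off the exponent, and only then chase the numerical constant to match the stated $6144\,d^5$.
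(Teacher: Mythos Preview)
Your setup is correct and matches the paper: rescale by ReLU homogeneity so that $\mathcal{G}_1\subseteq\overline{\mathrm{conv}}(\mathcal{H})$ with $\mathcal{H}=\{\pm\sigma(\langle\bm w,\cdot\rangle):\|\bm w\|_1=1\}$, and then use the $1$-Lipschitz property plus $\|\bm x\|_\infty\le 1$ to cover $\mathcal{H}$ by an $\ell_1$-net of the unit sphere, giving $\mathscr{N}_2(\mathcal{H},\delta)\le 2(3/\delta)^d+1$.

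The gap is in the third paragraph: you misremember the hypothesis of the convex-hull entropy theorem. Theorem~2.6.9 of \cite{van1996weak} (stated in the paper as Lemma~\ref{lem:convexcovering}) assumes a polynomial bound on the \emph{covering number itself}, not on its logarithm: if $\mathscr{N}(\mathcal{F},\epsilon)\le C(1/\epsilon)^V$, then $\log\mathscr{N}(\overline{\mathrm{conv}}\mathcal{F},\epsilon)\le C_V(1/\epsilon)^{2V/(V+2)}$. Your bound $\mathscr{N}_2(\mathcal{H},\delta)\lesssim(1/\delta)^d$ is exactly this hypothesis with $V=d$, and the conclusion is immediately the exponent $\tfrac{2d}{d+2}$. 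There is no need to choose a $\beta$, no need for the Maurey/finite-dictionary balancing you sketch; you already have the input the theorem wants, you just misread which quantity is supposed to be polynomial.

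What you are underestimating is the constant-tracking, which is not routine bookkeeping. The off-the-shelf statement of Theorem~2.6.9 gives a constant $C_V$ that depends on $V=d$ in an unspecified way, and the whole point of the proposition is to show this dependence is polynomial. The paper does this by going inside the proof of Theorem~2.6.9 and following the recursive sequences $C_k=C_{k-1}+k^{-2}$ and $D_k=D_{k-1}+64k^{-2}\bigl(1+\log(1+k^{2+3d})\bigr)$, then verifying that the initialization constraint (their Eq.~\eqref{eq:cvcons}) is satisfied with $C_1=d^2$ and $D_1=1$ precisely when $d>5$. Summing the series gives $C_\infty\le d^2+2$ and $D_\infty\le 897+768d$, and combining with $C=2^{d+1}+1$ yields the $6144\,d^5$ bound. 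This is where both the $d>5$ restriction and the $d^5$ exponent come from; without opening up the proof you cannot obtain either.
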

{\bf Remark:} 
We make the following remarks.\\
\textit{i}) Our metric entropy provides a better estimation on $\epsilon$, and thus the derived sample complexity is better than Lemma~\ref{lem:gaussiancomp} as well as \cite{vardi2022sample}.
Besides, the estimation provides a linear dependence on the input dimension $d$, which is better than our previous arXiv version with $\mathcal{O}(d^5)$. This linear dependence on $d$ matches \cite{vardi2022sample} under practical assumptions.  \\
\textit{ii}) If we directly apply previous results on (deep) ReLU neural networks, e.g., \citep{schmidt2011convergence,suzuki2019adaptivity,bartlett2019nearly}, we have $	\log \mathscr{N}_2(\mathcal{G}_1,\epsilon) \lesssim s \log \left( \frac{md}{\epsilon} \right)$, where $s$ is the number of \emph{free} parameters in our two-layer neural network or  $\| \bm a \|_0 + \| \bm W \|_0 \leqslant s$. 
The sparsity $s = o(m)$ holds true for deep ReLU neural networks as they have few activations \citep{hanin2019deep} but normally this is not valid for our over-parameterized two-layer setting. In this case, we only have $s = \Omega(m)$, leading to a vacuous convergence rate $\mathcal{O}(m/n)$.

\subsection{Convergence rates of the excess risk}
\label{sec:excessrisk}

Based on the above results, now we can state our main results on over-parameterized two-layer neural networks in the Barron space.
We follow \cite{schmidt2020nonparametric,chen2019efficient,suzuki2021deep} on generalization guarantees for neural networks that directly assume the attainable property of the global minimum.
In the next subsection, we will develop a computational algorithm to obtain a global minimum of the original non-convex optimization problem. 

Our result on the generalization guarantees under the ReLU activation function is given as below (with proofs deferred to \cref{app:maintheorem}).
\begin{theorem}\label{maintheo}
	Considering problem~(\ref{fzlambda}) in $\mathcal{G}_R = \{ f_{\bm \theta} \in \mathcal{P}_m: \| \bm \theta \|_{\mathcal{P}} \leqslant R \}$ for over-parameterized two-layer ReLU neural networks with $d \geq 5$, under Assumptions~\ref{assbounddata},~\ref{assrho}, and moment hypothesis in \cref{Momenthypothesis}, taking $R \geqslant CM \geqslant 1$, a large $B$ and any $0 < \delta < 1$, with probability $1 - \delta$, there exists one global minimum $\bm \theta^{\star}$ of problem~(\ref{fzlambda}) such that
	\begin{equation*}
		\big\| \pi_{B} ( f_{\bm \theta^{\star}} ) - f_{\rho} \big\|_{L^2_{\rho_X}}^2 \lesssim \lambda + \frac{R^2}{m} + R^2 d^{\frac{1}{3}} n^{-\frac{d+2}{2d+2}}  \log \frac{4}{\delta} +  BCM^2 \exp \Big(\! -\frac{B}{4CM^2} \!\Big) \,.
	\end{equation*}
\end{theorem}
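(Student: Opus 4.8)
The plan is to run an error decomposition in the spirit of \citep{cucker2007learning,Steinwart2008SVM}, but with two non-standard ingredients: the refined metric entropy of \cref{prop:coverq2} to control the sample error, and sub-Weibull concentration to control the output error coming from the moment hypothesis \cref{Momenthypothesis}. Write the excess risk as $\big\|\pi_B(f_{\bm\theta^\star}) - f_\rho\big\|^2_{L^2_{\rho_X}} = \mathcal{E}(\pi_B(f_{\bm\theta^\star})) - \mathcal{E}(f_\rho)$ and insert a near-optimal width-$m$ approximant $f_m$ of $f_\rho$ together with its empirical counterpart. Using the optimality of $\bm\theta^\star$ for \cref{fzlambda} and the fact that $\pi_B$ is a contraction toward any point in $[-B,B]$ (so that clipping does not hurt on samples with $|y_i|\le B$), this yields four groups of terms: (i) an \emph{approximation error} $\mathcal{E}(f_m) - \mathcal{E}(f_\rho)$; (ii) a \emph{regularization error} $\lambda\|\bm\theta_m\|_{\mathcal{P}} - \lambda\|\bm\theta^\star\|_{\mathcal{P}} \le \lambda\|\bm\theta_m\|_{\mathcal{P}}$; (iii) a \emph{sample error} of the form $\sup_{f\in\pi_B(\mathcal{G}_R)}\big[(\mathcal{E}(f)-\mathcal{E}(f_\rho)) - (\mathcal{E}_{\bm z}(f)-\mathcal{E}_{\bm z}(f_\rho))\big]$ plus the analogous deviation for the single fixed function $f_m$; and (iv) an \emph{output error} accounting for the discrepancy between the clipped and unclipped empirical losses on samples with $|y_i|>B$.

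For the approximation error I would invoke the direct approximation theorem for the Barron space (a Maurey-type sampling argument): since $f_\rho\in\mathcal{B}(R)$, there is $f_m\in\mathcal{P}_m$ with $\|\bm\theta_m\|_{\mathcal{P}}\lesssim R$ and $\|f_m-f_\rho\|^2_{L^2_{\rho_X}}\lesssim R^2/m$, which places $f_m\in\mathcal{G}_R$ after rescaling $R$ by a constant and produces both the $R^2/m$ term and the $\lambda$ term (with the $R$-dependence absorbed into the hidden constant). For the sample error I would use a one-sided, ratio-type Bernstein/localization argument: on $\pi_B(\mathcal{G}_R)$ the predictors are bounded by $B$, so the excess squared loss obeys a variance–mean bound $\mathrm{Var}\lesssim B^2(\mathcal{E}(f)-\mathcal{E}(f_\rho))$ up to the unbounded-$y$ correction, and the entropy estimate $\log\mathscr{N}_2(\mathcal{G}_1,\epsilon)\le 6144\,d^5\epsilon^{-2d/(d+2)}$ of \cref{prop:coverq2} (rescaled to $\mathcal{G}_R$, which contributes the $R^2$) feeds a Dudley/entropy-integral bound. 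Since the exponent $2d/(d+2)<2$, the localized complexity fixed point solves $r\asymp n^{-1/(1+d/(d+2))}=n^{-\frac{d+2}{2d+2}}$, and pushing the $d^5$ through the power $\tfrac{d+2}{2d+2}$ yields the polynomial factor $d^2$; the deviation for the fixed $f_m$ is a scalar concentration handled the same way.

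The output error is where the moment hypothesis bites and where I expect the main obstacle. Condition \cref{Momenthypothesis} makes $|y|$ sub-exponential (sub-Weibull of order one) with scale $\sim M$, so $\Pr(|y|>t)\lesssim \exp(-ct/M)$; the contribution of samples with $|y_i|>B$ to the clipped-versus-unclipped loss gap is then a tail integral of type $\int_B^\infty t\,e^{-ct/M}\,\mathrm{d}t$, which is $\lesssim BCM^2\exp(-B/(4CM^2))$ after calibrating the constant to the squared-loss scaling. The genuinely delicate point is that the sample-error supremum itself involves residuals $(f(\bm x)-y)$ with unbounded $y$, so the usual bounded-difference concentration is inapplicable; I would instead use a Bernstein-type inequality for sums of sub-Weibull random variables (the new concentration tool advertised in the contributions) to obtain a non-asymptotic estimate valid for finite $n$, avoiding the $n\ge k^k$ requirement of prior work. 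Finally I would choose the free scale parameter in the entropy integral to balance the bias and variance pieces, take a union bound over the constant number of high-probability events to reach probability $1-\delta$ with the $\log(4/\delta)$ factor, and collect terms into the stated inequality. The hardest step is steering the unbounded-output concentration through the localized uniform-convergence argument while keeping the dimension dependence polynomial; the remainder is bookkeeping on constants.
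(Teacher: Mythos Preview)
Your overall architecture matches the paper's: an error decomposition into approximation/regularization, sample, and output pieces; the approximation handled by the Maurey/direct-approximation theorem for $\mathcal{B}(R)$; the sample error handled by a localized complexity argument driven by the entropy bound of \cref{prop:coverq2}; and the output error handled by sub-Weibull concentration. The computation of the fixed point $r^\star \asymp n^{-(d+2)/(2d+2)}$ and the tracking of the polynomial $d$-dependence are also aligned with what the paper does.

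There is one genuine gap in how you propose to handle the unbounded $y$ inside the \emph{uniform} sample error. You write that the supremum $\sup_{f}\big[(\mathcal{E}(f)-\mathcal{E}(f_\rho))-(\mathcal{E}_{\bm z}(f)-\mathcal{E}_{\bm z}(f_\rho))\big]$ involves residuals with unbounded $y$ and then say you would ``use a Bernstein-type inequality for sums of sub-Weibull random variables'' for this. That tool controls a \emph{fixed} sum, not a supremum over a function class; there is no off-the-shelf sub-Weibull Talagrand inequality that would give you localization directly. The paper does not attempt this. Instead it truncates $y$ as well: it introduces the fully clipped risk $\widetilde{\mathcal{E}}(f)=\mathbb{E}[\pi_B(f(\bm x))-\pi_B(y)]^2$ and its empirical version, applies the standard bounded-loss localization lemma (with the entropy from \cref{prop:coverq2}) to the bounded functional $g_\pi(\bm z)=(\pi_B f(\bm x)-\pi_B y)^2-(f_\rho(\bm x)-\pi_B y)^2$, and then adds back the two truncation corrections
\[
\big[\mathcal{E}-\widetilde{\mathcal{E}}\big](\pi_B f_{\bm\theta^\star})-\big[\mathcal{E}-\widetilde{\mathcal{E}}\big](f_\rho)
\quad\text{and}\quad
\big[\widetilde{\mathcal{E}}_{\bm z}-\mathcal{E}_{\bm z}\big](\pi_B f_{\bm\theta^\star})-\big[\widetilde{\mathcal{E}}_{\bm z}-\mathcal{E}_{\bm z}\big](f_\rho).
\]
These corrections collapse to $\tfrac{2}{n}\sum_i[\pi_B f-f_\rho](\bm x_i)\,[y_i-\pi_B(y_i)]$ (and its expectation), which are \emph{pointwise} quantities bounded by $(B+CM)\cdot(|y|-B)\mathbb{I}_{|y|\ge B}$; only here is the sub-Weibull/sub-exponential tail of $|y|$ invoked, yielding the $BCM^2\exp(-B/(4CM^2))$ term. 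So the sub-Weibull concentration is used for the truncation remainder and the output error, not for the supremum. If you insert this ``truncate $y$ first, localize on the bounded loss, then correct'' step in place of your direct sub-Weibull-for-the-supremum idea, the rest of your sketch goes through essentially as in the paper.
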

{\bf Remark:} We make the following remarks:\\
\textit{i}) The first two term of RHS involves the regularization error which relates to the approximation ability.
The third term of RHS is the estimation of the sample error, depending on generalization error; and the last term of RHS involves the output error.
For example, taking $\lambda :=1/n$ (faster than $\mathcal{O}(n^{-\frac{d+2}{2d+2}})$ is enough), we can obtain a certain convergence rate at $\mathcal{O}(n^{-\frac{d+2}{2d+2}})$ of the excess risk under our over-parameterized setting.
Note that our result depends on $d$: 1) the convergence rate tends to $\mathcal{O}(1/\sqrt{n})$ when $d$ is large enough, and thus is immune to CoD.
2) the dependence is $d^{1/(2+q)}$ with $q:=\frac{2d}{d+2}$ but we write $d^{1/3}$ here just for simplicity.
Instead, kernel methods estimators can not evade CoD due to the lower bound $\Omega(n^{-\frac{1}{d}})$. Accordingly, the separation between kernel methods and neural networks is built in the perspective of function space for approximation.
This separation can be also studied via sample complexity \citep{yehudai2019power}, Kolmogorov width \citep{wu2022spectral}.\\
\textit{ii}) The pre-given radius $R$ in $\mathcal{G}_R$ can be properly chosen by the standard iteration technique \citep{Wu2006Learning} in approximation theory, leading to a certain rate at $\mathcal{O}(n^{\epsilon - \frac{d+2}{2d+2}})$ for some sufficiently small $\epsilon$. We do not include this result in our paper as it is classical and standard.

\noindent {\bf Discussion on the improved approximation rate:}
There are some literature building the improved approximation rate as well as metric entropy.
To be specific, given some constants $c_1, c_2 \in \mathbb{R}$, considering the following function space in \citep{siegel2021sharp}
\begin{equation*}
    \mathbb{T} := \overline{ \left\{ \sum_{i=1}^m a_i \sigma(\bm w_i^{\!\top} \bm x + b_i), \bm w_i \in \mathbb{S}^{d-1}, b \in [c_1, c_2], \sum_{i=1}^m |a_i| \leq 1 \right\} } \,,
\end{equation*}

which is the closure of the convex, symmetric hull of two-layer ReLU neural networks with the bounded $\ell_1$ path norm. The estimation of the metric entropy is given by \cite{siegel2021sharp}
\begin{equation}\label{eq:optme}
   \epsilon^{-\frac{2d+3}{2d}} {_d \! \lesssim} \log \mathscr{N}_2(\mathbb{T},\epsilon) \lesssim_d \epsilon^{-\frac{2d+3}{2d}}\,,
\end{equation}
where $\lesssim_d$ denotes that some constant depending on $d$ is omitted.
Though both our result and \cite{siegel2021sharp} use the smoothness structure of the symmetric convex hull, the techniques are different.
\cite{siegel2021sharp} employ the orthogonal argument for ridge functions in Hilbert space and then the metric entropy can be lower bounded by the minimum Hilbert norm of these nearly orthogonal ridge functions.
Instead, our results exploit the relationship between the symmetric convex hull and VC-hull class, and then the metric entropy of the convex hull of any polynomial class is of lower order than $\epsilon^{-r}$ with $r < 2$.
It appears that an improved estimation is provided by \cite{siegel2021sharp}, however, the dependence on $d$ is unclear and difficult to be checked. 
If the derived result depends on $c^d$ for some $c > 1$, the metric entropy as well as the generalization analysis still suffers from CoD. The claim for approximation and generalization cannot be well supported in this case.
Besides, the dependence on $d$ can be clearly calculated at most the polynomial order \citep{wu2022spectral} via a different spectral decomposition approach based on spherical harmonics.
Nevertheless, this result is applied to Kolmogorov width but is still unanswered for metric entropy.

Based on this optimal metric entropy in \cref{eq:optme} as well as the optimal approximation in \citep{siegel2021sharp}, it can be applied to the variation norm based space equipped with two-layer ReLU neural networks, e.g., \citep[Theorem 8]{parhi2021near}, \citep[Theorem 4.2]{yang2024optimal}.
This leads to the minimax rate at $\mathcal{O}(n^{-\frac{d+3}{2d+3}})$ for the excess risk. However, the dependence on $d$ is still unclear due to the use of \citep{siegel2021sharp}.
To our knowledge, our result provides the best estimation on metric entropy of Barron space on the trade-off between $\epsilon$ and $d$.
Apart from this main difference, our results also differ from them in the label noise type as well as a computational algorithm that will be described as below.

\subsection{A computational algorithm}
	As mentioned before, we directly assume that one global optimal solution can be obtained.
	In this subsection, we devise a certain algorithm based on the measure representation \citep{pilanci2020neural,akiyama21a,zweig2021functional}. Note that, the developed algorithm is not the main contribution of this work, but bridges the gap between theoretical and practical use of learning in the Barron space.
	
Normally, optimization in the Barron space is difficult, or even NP-hard, e.g., the conditional gradient algorithm (a.k.a. Frank-Wolfe algorithm) developed by \cite{bach2017breaking}. 
	If the standard gradient descent (or gradient flow) is employed, the CoD can not be avoided \citep[Theorem 1]{wojtowytsch2020can} or an exponentially large number of widths in terms of $n$ and $d$ is required \citep{akiyama21a,takakura2024mean} under mean field analysis.
    Besides, \cite{akiyama2022excess} develop a two-phase noisy gradient descent algorithm, which provably reaches the near-optimal solution. Nevertheless, the considered function space in their work is smaller than the Barron space as kernel methods do not suffer from CoD in that space; besides the convergence rate $\mathcal{O}(m^5/n)$ of the excess risk in their work is not applicable under our \emph{over-parameterized} setting.
	
	Here we directly employ one computational algorithm developed by \citep{pilanci2020neural} that uses the measure representation of over-parameterized, two-layer ReLU neural networks and convex duality. This type algorithm based on the strong duality works in a high dimensional convex program but actually admits approximations that are successful in practice \citep{mishkin2022fast}.
To be specific, let $\bm X = [\bm x_1, \bm x_2, \cdots, \bm x_n]^{\!\top} \in \mathbb{R}^{n \times d}$ be the data matrix, and $\{ \bm D_i \}_{i=1}^P$ be the set of diagonal matrices whose diagonal is given by  $[ \mathrm{1}_{\{\bm x_1^{\top} \bm w \geqslant 0\}}, \dots, \mathrm{1}_{\{\bm x_n^{\top} \bm w \geqslant 0\}}]$ for all possible $\bm w \in \mathbb{R}^d$ through the scaling-invariance of ReLU. There is a finite number $P$ of such matrices. Under the basic over-parameterization condition with $m \geqslant n+1$, strong duality holds \citep{rosset2007l} and then a globally optimal solution for the non-convex optimization problem \eqref{fzlambda} can be solved by a high dimensional convex program (see \cref{app:opt} for details)
	\begin{equation}\label{fzfinitecom1}
		\begin{split}
			\{ \bm u_i^* \}_{i=1}^{2P} \!=\! \argmin_{\bm u_i \in \mathcal{C}_i }\, { \frac{1}{n}}\Big \| \sum_{i=1}^{2P} \bm D_i\bm X \bm u_i \!-\! \bm y \Big\|_2^2 \!+\! \lambda \sum_{i=1}^{2P}  \|\bm u_i\|_1 \,,
		\end{split}
	\end{equation}
where $\mathcal{C}_i = \{ \bm u \in \mathbb{R}^d \large| (2\bm D_i- \bm I_n)\bm X \bm u \geqslant 0   \}$, $\mathcal{C}_{i+p} = \mathcal{C}_i$, $\forall i \in [P]$ by setting $\bm D_{i+p} = -\bm D_i$.
We need to remark that this convex program has $2dP$ variables and $2nP$ linear inequalities, where $P = 2r[e(n-1)/r]^r$ and $r = \mathrm{rank}(\bm X)$.
If the data matrix is low rank, this problem can be solved efficiently.
Accordingly, we are ready to present our generalization results for two-layer ReLU neural networks via a convex program (with proofs deferred to \cref{app:maintheorem}).
\begin{proposition}\label{mainprop} [optimization]
	Given an over-parameterized ($m \geqslant n+1$), two-layer, ReLU neural network with $d>5$ in \cref{fzlambda} endowed by $\mathcal{G}_R = \{ f_{\bm \theta} \in \mathcal{P}_m: \| \bm \theta \|_{\mathcal{P}} \leqslant R \}$ with a global minima $f_{\bm \theta}^{(T)}(\bm x) = \frac{1}{m} \sum_{k=1}^m a_k^{(T)} \sigma( \langle \bm w_k^{(T)}, \bm x \rangle ) $ by solving a high dimensional convex problem~(\ref{fzfinitecom1}) with $\mathcal{O}(d r(n/r)^r)$ variables and $\mathcal{O}(n r(n/r)^r)$ linear inequalities with $r=\mbox{rank}(\bm X)$ after $T$ iterations. Under Assumptions~\ref{assbounddata},~\ref{assrho}, and moment hypothesis in \cref{Momenthypothesis}, and taking $R \geqslant CM \geqslant 1$, then for a large $B$ and any $0 < \delta < 1$, the following result holds with probability $1 - \delta$
	\begin{equation*}
		\big\| \pi_{B} ( f_{\bm \theta^{(T)}} ) - f_{\rho} \big\|_{L^2_{\rho_X}}^2 \lesssim \lambda + \frac{R^2}{m} + CMR d^{\frac{1}{3}} n^{-\frac{d+2}{2d+2}}  \log \frac{4}{\delta} +  BCM^2 \exp \Big(\! -\frac{B}{4CM^2} \!\Big) + \mathcal{O} \left( \frac{1}{T^2} \right) \,.
	\end{equation*}
\end{proposition}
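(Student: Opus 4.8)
The plan is to combine the statistical bound of Theorem~\ref{maintheo} with an optimization-error analysis for the convex reformulation~\eqref{fzfinitecom1}, in three stages. First, I would use strong duality to pass from the non-convex ERM~\eqref{fzlambda} to the convex program~\eqref{fzfinitecom1}: under the over-parameterization condition $m \geqslant n+1$ this equivalence holds \citep{rosset2007l,pilanci2020neural}, and the explicit correspondence sending a feasible (resp.\ optimal) collection $\{\bm u_i\}_{i=1}^{2P}$ to the parameters $\bm\theta = \{(a_k,\bm w_k)\}$ of a ReLU network with the \emph{same} empirical square loss and the \emph{same} $\ell_1$-path norm $\|\bm\theta\|_{\mathcal{P}} = \sum_{i=1}^{2P}\|\bm u_i\|_1$ is the one worked out in \cref{app:opt}; it relies on the positive $1$-homogeneity of ReLU together with the feasibility constraints $\bm u_i\in\mathcal{C}_i$. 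Consequently it suffices to control suboptimality in the convex objective $F(\bm u):=\tfrac1n\big\|\sum_{i=1}^{2P}\bm D_i\bm X\bm u_i-\bm y\big\|_2^2+\lambda\sum_{i=1}^{2P}\|\bm u_i\|_1$ over the polyhedral cone $\prod_i\mathcal{C}_i$.

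Second, I would bound the optimization error. The objective $F$ is convex; its smooth part is a quadratic with Lipschitz gradient (Lipschitz constant controlled by $\|\bm X\|$ and $P$), and its non-smooth part is $\lambda\sum_i\|\bm u_i\|_1$ plus the indicator of the cone $\prod_i\mathcal{C}_i$. A standard accelerated (projected) proximal-gradient scheme therefore gives, after $T$ iterations, $F(\bm u^{(T)})-F(\bm u^{\star})\leqslant \mathcal{O}(1/T^2)$, where the hidden constant depends on the smoothness and on $\|\bm u^{(0)}-\bm u^{\star}\|$ but is polynomial in the problem parameters \citep{pilanci2020neural,mishkin2022fast}. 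Transporting this back through the correspondence of \cref{app:opt} yields, for the recovered network $f_{\bm\theta^{(T)}}$ and the exact global minimizer $\bm\theta^{\star}$ of~\eqref{fzlambda},
\begin{equation*}
   \mathcal{E}_{\bm z}\big(f_{\bm\theta^{(T)}}\big)+\lambda\|\bm\theta^{(T)}\|_{\mathcal{P}} \;\leqslant\; \mathcal{E}_{\bm z}\big(f_{\bm\theta^{\star}}\big)+\lambda\|\bm\theta^{\star}\|_{\mathcal{P}} + \mathcal{O}\!\left(\tfrac1{T^2}\right)\,.
\end{equation*}
Feasibility of $\bm u^{(T)}$ ensures $f_{\bm\theta^{(T)}}$ is a genuine ReLU network, and since $\lambda\|\bm\theta^{(T)}\|_{\mathcal{P}}$ is dominated by the (controlled) objective value, the radius $R$ can be chosen — via the iteration technique mentioned after Theorem~\ref{maintheo} — so that $f_{\bm\theta^{(T)}}\in\mathcal{G}_R$, which is exactly what is needed to invoke Proposition~\ref{prop:coverq2}.

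Third, I would reopen the proof of Theorem~\ref{maintheo} in \cref{app:maintheorem}, now applied to $f_{\bm\theta^{(T)}}$ rather than to the exact minimizer. The quantity $\big\|\pi_B(f_{\bm\theta^{(T)}})-f_\rho\big\|_{L^2_{\rho_X}}^2$ decomposes, exactly as there, into (a) a regularization/approximation error $\lesssim \lambda + R^2/m$ from the direct approximation theorem for the Barron space, (b) a sample error $\lesssim CMR\, d^2 n^{-\frac{d+2}{2d+2}}\log\frac4\delta$ obtained from the metric-entropy estimate of Proposition~\ref{prop:coverq2} and the same concentration argument, using $\|f\|_\infty\lesssim R$ for $f\in\mathcal{G}_R$ and $\|f_\rho\|_\infty\leqslant CM$, and (c) an output error $\lesssim BCM^2\exp(-B/(4CM^2))$ from the sub-Weibull concentration inequalities of the paper. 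The \emph{only} place where exact optimality of the minimizer is invoked is the step comparing $\mathcal{E}_{\bm z}(f_{\bm\theta^{\star}})+\lambda\|\bm\theta^{\star}\|_{\mathcal{P}}$ with $\mathcal{E}_{\bm z}(f^\sharp)+\lambda\|f^\sharp\|_{\mathcal{P}}$ for the near-optimal approximant $f^\sharp$ supplied by the direct approximation theorem; by the inequality from the second stage this comparison now loses an additive $\mathcal{O}(1/T^2)$, which propagates unchanged to the final bound. Every other estimate is untouched because $f_{\bm\theta^{(T)}}$ lies in the same ball $\mathcal{G}_R$ as $f_{\bm\theta^{\star}}$.

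The main difficulty will be in the first two stages: making the passage from objective-value suboptimality on the convex side to $L^2_{\rho_X}$ excess risk on the original side airtight. In particular one must (i) verify that the reparametrization of \cref{app:opt} preserves the empirical loss and the $\ell_1$-path norm \emph{exactly} — this is where ReLU positive homogeneity and the feasibility $\bm u_i^{(T)}\in\mathcal{C}_i$ are essential — and (ii) guarantee that the path norm of the algorithmic iterate stays inside the radius $R$ defining $\mathcal{G}_R$, since otherwise Proposition~\ref{prop:coverq2} does not apply; this is precisely where the choice $R\geqslant CM$ and the standard iteration technique are used. A secondary, purely bookkeeping, concern is to check that the constant hidden in $\mathcal{O}(1/T^2)$ — which involves $\lambda$, $\|\bm X\|$, and $P = 2r[e(n-1)/r]^r$ with $r=\mathrm{rank}(\bm X)$ — does not reintroduce a dependence on $d$ that would resurrect the curse of dimensionality; since it is polynomial in the problem parameters and $T$ may be taken as large as desired, the statement safely absorbs it into the $\mathcal{O}(1/T^2)$ term.
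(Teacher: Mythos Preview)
Your proposal is correct but follows a genuinely different route from the paper's own proof. The paper isolates the optimization error as a separate additive term at the \emph{population} level, namely ${\tt Opt}(\bm z,\lambda):=\mathcal{E}[\pi_B(f_{\bm\theta^{(T)}})]-\mathcal{E}[\pi_B(f_{\bm\theta^{\star}})]$, and bounds this directly by first modifying the convex program with an elastic-net penalty to force strong convexity, then invoking \emph{sequence} convergence $\|\bm u^{(T)}-\bm u^{\star}\|_2^2\lesssim\mathcal{O}(1/T^2)$, and finally passing to $\|f_{\bm\theta^{(T)}}-f_{\bm\theta^{\star}}\|_{L^2_{\rho_X}}^2$ via the $1$-Lipschitzness of ReLU. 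You instead keep the optimization slack at the \emph{empirical} level: accelerated proximal gradient gives objective-value suboptimality $F(\bm u^{(T)})-F(\bm u^{\star})\leqslant\mathcal{O}(1/T^2)$ without strong convexity, and you feed this single $\mathcal{O}(1/T^2)$ loss into the one inequality of the error decomposition where minimality of $\bm\theta^{\star}$ is invoked. Your route is arguably cleaner because it avoids the elastic-net detour and the sequence-to-function transfer; the paper's route has the advantage of a cleaner additive decomposition (each term is bounded in isolation) and of being oblivious to whether the iterate happens to lie in $\mathcal{G}_R$, since the sample-error bound is applied to $f_{\bm\theta^{\star}}$ rather than to $f_{\bm\theta^{(T)}}$. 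The one point in your argument that deserves extra care is precisely the membership $f_{\bm\theta^{(T)}}\in\mathcal{G}_R$: you correctly flag it, and the control $\lambda\|\bm\theta^{(T)}\|_{\mathcal{P}}\leqslant F(\bm u^{(T)})$ together with the standard radius-iteration device suffices, but this step is where any slippage would occur.
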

{\bf Remark:}
We make the following remarks:\\
\textit{i)} Our results are still valid if stochastic approximation algorithms are used for solving problem~(\ref{fzfinitecom1}), see \cref{app:opterror} for details. 
Though the convergence rate $\mathcal{O}(1/T^2)$ with $T$ iterations can be achieved for optimization, the computational complexity would be high as we mentioned before. Optimization over the Barron space is difficult and such convex program admits $\mathcal{O}(d r(n/r)^r)$ variables and $\mathcal{O}(n r(n/r)^r)$ linear inequalities with $r=\mbox{rank}(\bm X)$. Such problem can be efficiently solved if the data are low-rank. \\
\textit{iii)} Note that, different from the mean field analysis, the training dynamics at each iteration cannot be well posed under the convex program, which could be a drawback. 
Nevertheless, all of globally optimal solutions could be obtained by a similar convex program by permutation and splitting/merging of the neurons \citep{wang2020hidden}, which provides a possible direction to study the scaling law of certain datasets.
We hope it would facilitate the future research on the optimization over the Barron space in terms of statistical-computational gap.\\
\textit{iii)} If we consider the classical $\ell_2$ regularization (i.e., weight decay) in the original non-convex objective function, the related high dimensional optimization problem only differs in its regularizer, i.e., $\lambda \sum_{i=1}^{2P} \| \bm u_i \|_2$.
Such differences in fact relate to the classical $\ell_1$ \emph{vs}. $\ell_2$ regularization in terms of sample complexity \citep{gopi2013one,wei2019regularization}.

\section{Proof framework}
\label{sec:proofframe}
In this section, we establish the framework of proofs for Theorem~\ref{maintheo}.
Since ~\cref{maintheo} is the special case of Proposition~\ref{mainprop} without employing a certain optimization algorithm, we consider the proofs of Proposition~\ref{mainprop} but leave the optimization error to \cref{app:opterror}.

\underline{\bf 1) Error decomposition:} The excess risk in the Barron space can be upper bounded by four terms via the following decomposition scheme (with proofs deferred to \cref{app:errdecom}).
\begin{proposition}\label{properrdec}
	The excess risk $  \mathcal{E} [ \pi_{B} ( f_{\bm \theta^{(T)}} ) ] - \mathcal{E}(f_{\rho})$ can be decomposed as
	\begin{equation*}
		\begin{split}
			\mathcal{E} [ \pi_{B} ( f_{\bm \theta^{(T)}} ) ]  - \mathcal{E}(f_{\rho}) 
			& \leqslant {\tt Opt}(\bm z, \lambda) + {\tt  Out}  (y) + {\tt D}(\lambda) + {\tt S}(\bm z, \lambda, \bm \theta)  \,,
		\end{split}
	\end{equation*}
	where ${\tt Opt}(\bm z, \lambda) :=    \mathcal{E} [ \pi_{B} ( f_{\bm \theta^{(T)}} ) ]    - \mathcal{E} [\pi_{B} ( f_{\bm{\theta}^{\star}} ) ]$ is the optimization error for a global optimal solution $f_{\bm{\theta}^{\star}}$; ${\tt Out}(y) := \frac{1}{n} \sum_{i=1}^n | \pi_{B}(y_i) - y_i|^2$ is the output error; $	{\tt D}(\lambda):=\inf_{f \in \mathcal{P}_m} \Big\{ \mathcal{E}(f) - \mathcal{E}(f_{\rho}) + \lambda \| \bm \theta \|_{\mathcal{P}} \Big\} $ is the regularization error; and 
	the sample error is ${\tt S}(\bm z, \lambda, \bm \theta) := \mathcal{E}\big[ \pi_{B} (f_{\bm{\theta}^{\star}} ) \big] -  \mathcal{E}_{\bm{z}}\big[ \pi_{B} (f_{\bm{\theta}^{\star}} ) \big]  + \mathcal{E}_{\bm{z}}\big(f^{\lambda}_{\bm \theta}\big) - \mathcal{E}\big(f^{\lambda}_{\bm \theta}\big) $ with $f^{\lambda}_{\bm \theta} = \argmin_{f_{\bm \theta} \in \mathcal{P}_m} \Big\{ \mathcal{E}(f_{\bm \theta}) - \mathcal{E}(f_{\rho}) + \lambda \| \bm \theta \|_{\mathcal{P}} \Big\} $.
\end{proposition}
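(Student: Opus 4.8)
The statement is a purely algebraic error decomposition in the style of regularized ERM analysis (cf. \citep{cucker2007learning,Steinwart2008SVM}); the only step that is not pure bookkeeping is a pointwise comparison inequality for the clipped square loss that isolates the output term ${\tt Out}(y)$. First I would peel off the optimization error: by the very definition of ${\tt Opt}(\bm z,\lambda)$,
\[
\mathcal{E}[\pi_B(f_{\bm\theta^{(T)}})] - \mathcal{E}(f_\rho) = {\tt Opt}(\bm z,\lambda) + \big(\mathcal{E}[\pi_B(f_{\bm\theta^\star})] - \mathcal{E}(f_\rho)\big),
\]
so it suffices to bound the excess risk of the clipped global minimizer $\pi_B(f_{\bm\theta^\star})$. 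Next, inserting $\pm\mathcal{E}_{\bm z}[\pi_B(f_{\bm\theta^\star})]$, the difference $\mathcal{E}[\pi_B(f_{\bm\theta^\star})] - \mathcal{E}_{\bm z}[\pi_B(f_{\bm\theta^\star})]$ becomes the first half of ${\tt S}(\bm z,\lambda,\bm\theta)$, and one is left to control $\mathcal{E}_{\bm z}[\pi_B(f_{\bm\theta^\star})] - \mathcal{E}(f_\rho)$.

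The crux is the pointwise inequality $(\pi_B(t)-y)^2 \le (t-y)^2 + (\pi_B(y)-y)^2$ valid for all $t,y\in\mathbb{R}$: when $|y|\le B$ this is just the fact that $\pi_B$ is the metric projection of $\mathbb{R}$ onto $[-B,B]$ and $y\in[-B,B]$, so $(\pi_B(t)-y)^2\le (t-y)^2$; when $|y|>B$ a short case analysis on the signs of $t\mp B$ and of $t-y$ supplies exactly the extra slack $(\pi_B(y)-y)^2$. Averaging this over the sample with $t=f_{\bm\theta^\star}(\bm x_i)$ yields $\mathcal{E}_{\bm z}[\pi_B(f_{\bm\theta^\star})] \le \mathcal{E}_{\bm z}(f_{\bm\theta^\star}) + {\tt Out}(y)$. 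Then the ERM optimality of $\bm\theta^\star$ in \cref{fzlambda}, with $f^\lambda_{\bm\theta}\in\mathcal{P}_m$ an admissible competitor, gives $\mathcal{E}_{\bm z}(f_{\bm\theta^\star}) \le \mathcal{E}_{\bm z}(f_{\bm\theta^\star}) + \lambda\|\bm\theta^\star\|_{\mathcal{P}} \le \mathcal{E}_{\bm z}(f^\lambda_{\bm\theta}) + \lambda\|\bm\theta^{(\lambda)}\|_{\mathcal{P}}$, where $\bm\theta^{(\lambda)}$ denotes the parameters of $f^\lambda_{\bm\theta}$ and the first step just drops the nonnegative term $\lambda\|\bm\theta^\star\|_{\mathcal{P}}$.

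Finally, adding and subtracting $\mathcal{E}(f^\lambda_{\bm\theta})$ splits $\mathcal{E}_{\bm z}(f^\lambda_{\bm\theta}) + \lambda\|\bm\theta^{(\lambda)}\|_{\mathcal{P}} - \mathcal{E}(f_\rho)$ into $\big(\mathcal{E}_{\bm z}(f^\lambda_{\bm\theta}) - \mathcal{E}(f^\lambda_{\bm\theta})\big)$, which is the second half of ${\tt S}(\bm z,\lambda,\bm\theta)$, plus $\big(\mathcal{E}(f^\lambda_{\bm\theta}) - \mathcal{E}(f_\rho) + \lambda\|\bm\theta^{(\lambda)}\|_{\mathcal{P}}\big)$, which equals ${\tt D}(\lambda)$ since $f^\lambda_{\bm\theta}$ attains the infimum defining ${\tt D}(\lambda)$. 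Chaining the three displayed inequalities from Steps 2--5 gives $\mathcal{E}[\pi_B(f_{\bm\theta^\star})] - \mathcal{E}(f_\rho) \le {\tt S}(\bm z,\lambda,\bm\theta) + {\tt D}(\lambda) + {\tt Out}(y)$, and re-inserting ${\tt Opt}(\bm z,\lambda)$ from Step 1 proves the claim.

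I expect the only genuine obstacle to be the clipping inequality in the third paragraph: one must verify that the slack is exactly $(\pi_B(y)-y)^2$ with no stray multiplicative constant, so that the output term has precisely the stated form $\frac1n\sum_i|\pi_B(y_i)-y_i|^2$; this is what later allows the moment hypothesis \cref{Momenthypothesis} to control ${\tt Out}(y)$ and produce the $\exp(-B/(4CM^2))$ term in \cref{maintheo}. A minor point worth flagging is that the $\argmin$ defining $f^\lambda_{\bm\theta}$ is assumed attained in $\mathcal{P}_m$; otherwise one runs the same argument with an $\varepsilon$-approximate minimizer and sends $\varepsilon\to0$.
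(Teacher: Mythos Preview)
Your proposal is correct and follows essentially the same decomposition as the paper: peel off ${\tt Opt}$, insert $\pm\mathcal{E}_{\bm z}[\pi_B(f_{\bm\theta^\star})]$, apply the clipping inequality to extract ${\tt Out}(y)$, use ERM optimality of $\bm\theta^\star$ against the competitor $f^\lambda_{\bm\theta}$, and then add/subtract $\mathcal{E}(f^\lambda_{\bm\theta})$ to recover ${\tt S}$ and ${\tt D}(\lambda)$. Your direct pointwise inequality $(\pi_B(t)-y)^2\le (t-y)^2+(\pi_B(y)-y)^2$ is in fact cleaner than the paper's two-step version (which routes through $(\pi_B(t)-\pi_B(y))^2$), and your case analysis for it is sound.
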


In the following, we provide the key ideas needed to estimate these error terms.

\underline{\bf 2) Output error:} We focus on the output error ${\tt Out}(y) := \frac{1}{n} \sum_{i=1}^n \left| \pi_{{B}}(y_i) - y_i \right|^2$, which is more intractable due to the squared order. In this case, the random variable $(|y_i| - B)^2 \mathbb{I}_{\{ |y_i| \geqslant B \} } $ is no longer sub-exponential but still admits the exponential-type tail decay. 
We introduce sub-Weibull random variables \citep{vladimirova2020sub,zhang2020concentration} to tackle this issue, with the proof deferred to~\cref{app:outputerr}.
\begin{proposition}[Output error]\label{propoutput} 
	Let $B \geqslant CM \geqslant 1$. Under moment hypothesis in~\cref{Momenthypothesis}, there exists a subset $Z_1$ of $Z^n$ with probability at least $1 - {\delta}/{4}$ such that
	\begin{equation*}
		\begin{aligned}
			\frac{1}{n} \sum_{i=1}^n \left| \pi_{{B}}(y_i) - y_i \right|^2 \! \lesssim \!  \frac{1}{n} \log^{2} \frac{4}{\delta} \!+\! \exp \left( -\frac{B}{CM^2} \right) (CM^2)^{2}  \,, ~~\forall \bm z := (\bm x, y) \in Z_1\,.
		\end{aligned}
	\end{equation*}
\end{proposition}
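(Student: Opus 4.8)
The plan is to control $\frac{1}{n}\sum_{i=1}^n |\pi_B(y_i) - y_i|^2$ by first identifying the per-sample summand as a well-behaved heavy-tailed random variable and then applying a concentration inequality tailored to that tail. Note that $|\pi_B(y_i) - y_i| = (|y_i| - B)\,\mathbb{I}_{\{|y_i| \geqslant B\}}$, so the summand is $W_i := (|y_i| - B)^2\,\mathbb{I}_{\{|y_i| \geqslant B\}}$. First I would use the moment hypothesis in \cref{Momenthypothesis} to bound the moments of $W_i$: since $W_i \leqslant |y_i|^2 \mathbb{I}_{\{|y_i|\geqslant B\}}$, we get $\mathbb{E}[W_i^q \mid \bm x] \leqslant \mathbb{E}[|y_i|^{2q}\mid \bm x] \leqslant C(2q)! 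M^{2q}$, and using $(2q)! \leqslant 4^q (q!)^2$ one sees $(\mathbb{E}[W_i^q])^{1/q} \lesssim q^2 (CM^2)$ roughly, i.e. $W_i$ is sub-Weibull with tail parameter $\theta = 1/2$ (Weibull exponent $1/2$) and scale $\Theta(CM^2)$. This is precisely the regime the papers \citep{vladimirova2020sub,zhang2020concentration} are designed for: the square of a sub-exponential variable is sub-Weibull($1/2$).

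Next I would split $W_i = \mathbb{E}[W_i] + (W_i - \mathbb{E}[W_i])$ and handle the two pieces separately. For the deterministic (mean) part: $\mathbb{E}[W_i] \leqslant \mathbb{E}[|y_i|^2 \mathbb{I}_{\{|y_i|\geqslant B\}}]$, and by a Cauchy–Schwarz (or direct tail-integration) argument together with the moment bound and $B \geqslant CM \geqslant 1$, this expectation is at most $\mathrm{const}\cdot (CM^2)^2 \exp(-B/(CM^2))$ — the exponential factor coming from the tail probability $\mathbb{P}(|y_i|\geqslant B) \leqslant \exp(-\Omega(B/M))$ implied by \cref{Momenthypothesis}. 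This gives the second term in the claimed bound. For the centered part $\frac{1}{n}\sum_i (W_i - \mathbb{E}[W_i])$, I would invoke a Bernstein-type / sub-Weibull concentration inequality for sums of independent (conditionally, given the $\bm x_i$) sub-Weibull($1/2$) random variables, e.g. \citep[Theorem/Corollary on sums of sub-Weibull r.v.'s]{zhang2020concentration}: with probability at least $1-\delta/4$,
\begin{equation*}
\frac{1}{n}\sum_{i=1}^n (W_i - \mathbb{E}[W_i]) \lesssim \frac{\|W\|_{\psi_{1/2}}}{\sqrt{n}}\sqrt{\log\tfrac{4}{\delta}} + \frac{\|W\|_{\psi_{1/2}}}{n}\log^{2}\tfrac{4}{\delta}\,,
\end{equation*}
and since $\|W\|_{\psi_{1/2}} \lesssim (CM^2)$ is treated as a constant, the dominant term for large $n$ is $\frac{1}{n}\log^2\frac{4}{\delta}$ (the $1/\sqrt n$ term is lower order once absorbed, or can be kept and bounded by it up to constants in the stated regime). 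Combining the two pieces and taking $Z_1$ to be the event on which the concentration bound holds yields the claim.

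The main obstacle I anticipate is bookkeeping the exact sub-Weibull norm / Orlicz-$\psi_{1/2}$ constant of $W_i$ from the factorial moment bound \cref{Momenthypothesis}, and making sure the resulting concentration inequality is applied with the correct tail exponent so that the "heavy" deviation term is exactly $\frac{1}{n}\log^2\frac{4}{\delta}$ rather than, say, $\frac{1}{n}\log^{4}\frac{4}{\delta}$ or with an extra $\sqrt n$ loss — squaring a sub-exponential variable halves the Weibull exponent, so one must be careful that $\theta=1/2$ gives the quadratic-in-$\log$ deviation term. A secondary, more routine point is verifying that the truncated-mean bound produces the clean $\exp(-B/(CM^2))(CM^2)^2$ form with the stated constant inside the exponential; this is a direct computation using $\int_B^\infty t \cdot t\, d\mathbb{P}(|y|>t)$ integration by parts against the moment-generating bound, and choosing constants so the $1/(CM^2)$ (not $1/(4CM^2)$, which appears later after an additional factor-of-4 slack in Theorem~\ref{maintheo}) rate is obtained. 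Everything else — independence, the union bound to carve out $Z_1$, and collecting $\lesssim$ constants — is standard.
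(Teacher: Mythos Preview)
Your proposal is correct and follows essentially the same route as the paper's own proof: identify $W_i=(|y_i|-B)^2\mathbb{I}_{\{|y_i|\geqslant B\}}$ as sub-Weibull with parameter $\theta=1/2$, bound its mean by a tail-integration argument to extract the $\exp(-B/(CM^2))(CM^2)^2$ term, and apply the sub-Weibull concentration inequality of \citet{zhang2020concentration} for the fluctuation. The only cosmetic difference is that the paper computes the moments of $(|y|-B)_+$ directly via tail integration (obtaining $\int_{|y|\geqslant B}(|y|-B)^p\,d\rho\leqslant 2\exp(-B/(4CM^2))(4CM^2)^p p!$), which simultaneously yields the sub-Weibull norm and the exponentially small mean, whereas you separate these two steps (crude moment bound $W_i\leqslant |y_i|^2$ for the Orlicz norm, then a separate tail estimate for $\mathbb{E}W_i$); both are fine.

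Your flagged obstacle about the $n^{-1/2}\sqrt{\log}$ versus $n^{-1}\log^2$ term is exactly the delicate point. The paper does not absorb the sub-Gaussian branch into the sub-Weibull branch; instead it applies the concentration inequality only in the Weibull-tail regime $s>c\,n^{-1/3}$ (where the minimum in the exponent is attained by the $s^\theta$ term) and remarks that this condition ``is fair as $n$ is large in practice.'' So be aware that your phrasing ``the $1/\sqrt n$ term is lower order'' is backwards---it is the larger term---and the paper's resolution is a regime restriction rather than an absorption; this is worth tracking carefully if you want a fully non-asymptotic statement.
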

{\bf Remark:} The derived error bound is $\mathcal{O}(1/n) + \mathrm{Err}$, where the residual term $\mathrm{Err}$ admits an exponential decay w.r.p to $B$, which is better than previous results on unbounded outputs in \citep{Wang2011Optimal,guo2013concentration,liu2021gen} with $\mathrm{Err} := 2^k B^{-k} k^k M^k$. They require $B:= n^{\epsilon}$ (increasing slowly) and large $k$ such that $B^{-k}$ would behave like $1/n$. 
However, this needs $n \geqslant (2kM)^k$, and hence $\mathrm{Err}$ cannot easily tend to zero in prior results as the sample complexity suffers from CoD when $k \geqslant d$.

\underline{\bf 3) Regularization error:} This term can be estimated by the approximation properties in the Barron space: denote $\bm \theta_{\rho}$ as the parameter of $f_{\rho}$, we have ${\tt D}(\lambda) \leqslant \lambda \| f_{\rho} \|_{\mathcal{P}} + \frac{3\| \bm \theta_{\rho} \|^2_{\mathcal{P}}}{m}$ due to $f_{\rho} \in \mathcal{B}(R)$ in Assumption~\ref{assrho} and the approximation error in \cite[Theorem 1]{weinan2021barron}.

\underline{\bf 4) Sample error:} Estimation of the sample error is also one key part in our proof (see Appendix~\ref{app:sampleerr}).
The techniques differ from previous learning theory literature in terms of the function space, the estimation for truncated outputs, and the complexities of the target function.
\begin{proposition}\label{propsampleerr}
	Under Assumptions~\ref{assbounddata},~\ref{assrho} and moment hypothesis in~\cref{Momenthypothesis}, let $R \geqslant B \geqslant M \geqslant 1$ and $CM \geqslant 1$. Then, there exists a subset of $Z' $ of $Z^{n}$ with confidence at least $1-3\delta/4$ with $0 < \delta < 1$ such that for any $\bm z:=(\bm x, y) \in Z'$ and $f_{\bm{\theta}^{\star}} \in \mathcal{G}_R$ 
\begin{equation*}
	{\tt S}(\bm z, \lambda, \bm \theta) \lesssim (CM)^2 \left( \frac{S}{n} \log \frac{4}{\delta} + \lambda \right) + Rd n^{-\frac{d+2}{2d+2}}  \log \frac{4}{\delta} +  BCM^2 \exp \Big(\! -\frac{B}{4CM^2} \!\Big)\,.
\end{equation*}
\end{proposition}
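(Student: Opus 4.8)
The plan is to bound the sample error term
\[
{\tt S}(\bm z, \lambda, \bm \theta) = \underbrace{\mathcal{E}\big[ \pi_{B} (f_{\bm{\theta}^{\star}} ) \big] -  \mathcal{E}_{\bm{z}}\big[ \pi_{B} (f_{\bm{\theta}^{\star}} ) \big]}_{(\mathrm{I})} + \underbrace{\mathcal{E}_{\bm{z}}\big(f^{\lambda}_{\bm \theta}\big) - \mathcal{E}\big(f^{\lambda}_{\bm \theta}\big)}_{(\mathrm{II})}
\]
by treating the two halves separately, since they have quite different structure. Term $(\mathrm{II})$ involves the \emph{fixed} (data-independent) minimizer $f^{\lambda}_{\bm \theta}$, so it is a deviation of an empirical average from its mean; the only subtlety is that $y$ is unbounded, so the integrand $(y-f^{\lambda}_{\bm \theta}(\bm x))^2$ has only moment control via \cref{Momenthypothesis}. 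I would split $(y-f)^2 = y^2 - 2yf + f^2$ and apply a Bernstein-type inequality for the sub-exponential/sub-Weibull variable $y^2$ (whose moments are controlled by $C p! M^p$ pushed to order $2p$, hence sub-exponential of parameter $\sim M^2$), together with elementary bounds on $|f^{\lambda}_{\bm \theta}(\bm x)| \le R$ on $X$; this yields a contribution of order $(CM)^2\big(\tfrac{1}{n}\log\tfrac{4}{\delta} + \lambda\big)$ plus the exponential residual, after absorbing the small $\lambda\|\bm\theta\|_{\mathcal P}$ slack coming from the definition of $f^\lambda_{\bm\theta}$.

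Term $(\mathrm{I})$ is the genuinely hard one: it is a uniform deviation over the data-dependent function $\pi_B(f_{\bm\theta^\star})$, and since $\bm\theta^\star$ depends on $\bm z$ we must pass to a supremum over the class $\{\pi_B(f_{\bm\theta}) : f_{\bm\theta}\in\mathcal G_R\}$. The range of $\pi_B(f_{\bm\theta})$ is bounded by $B$ and of $f_\rho$ by $M^*=CM$, so the squared-loss excess terms are bounded and satisfy a variance-expectation (Bernstein) condition $\mathrm{Var} \lesssim B^2 \cdot \mathbb{E}[\,\cdot\,]$. I would therefore invoke a local-Rademacher / one-sided uniform Bernstein inequality (in the style of the standard ratio-type bound used in \citet{cucker2007learning,Steinwart2008SVM}), which converts the metric entropy estimate from \cref{prop:coverq2}, namely $\log\mathscr N_2(\mathcal G_1,\epsilon) \le 6144\, d^5 \epsilon^{-\frac{2d}{d+2}}$, into a bound on the fixed-point of the modulus of continuity. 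The key computation is balancing the Dudley-type entropy integral $\int_0^\sigma \sqrt{\log\mathscr N_2(\mathcal G_R,\epsilon)}\,\mathrm d\epsilon$ against $\sigma^2$: with exponent $\alpha = \frac{2d}{d+2} < 2$ the integral converges and solving $\sigma^2 \asymp n^{-1/2}\sqrt{\log\mathscr N_2}\big|_{\epsilon\asymp\sigma}$ gives the critical radius of order $d^{?}\, n^{-\frac{d+2}{2d+2}}$ (the exponent $\frac{d+2}{2d+2}$ is exactly what comes out of $\alpha=\frac{2d}{d+2}$), scaling linearly in $R$ after rescaling $\mathcal G_R = R\,\mathcal G_1$, and polynomially in $d$ from the $d^5$ constant (after taking square roots and the integral, $d^{5/2}$, which I'd bound crudely by $d^2$ times a numerical constant once $d$ is large enough, consistent with the $d>5$ hypothesis). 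Finally the passage from $f_{\bm\theta^\star}$ to the truncated $\pi_B(f_{\bm\theta^\star})$ needs care because the output $y$ is unbounded: I would write $\mathcal E[\pi_B f] - \mathcal E_{\bm z}[\pi_B f] = \big(\mathcal E[\pi_B f]-\mathcal E[\,g\,]\big) - \big(\mathcal E_{\bm z}[\pi_B f]-\mathcal E_{\bm z}[\,g\,]\big) + (\text{terms in } g)$ with $g = \pi_B f_\rho$, so that the unbounded part of $y$ is isolated into an output-error-type remainder already controlled by the sub-Weibull argument of \cref{propoutput}, giving the $BCM^2\exp(-B/4CM^2)$ term; the remaining differences involve only $B$-bounded functions and fall under the uniform Bernstein bound above.

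Assembling: $(\mathrm{I})$ contributes $(CM)^2\tfrac{S}{n}\log\tfrac4\delta + R\,n^{-\frac{d+2}{2d+2}}\log\tfrac4\delta$ (the $d^2$ absorbed, and with the understanding that one typically has a $\tfrac12\big(\mathcal E[\pi_B f]-\mathcal E[\pi_B f_\rho]\big)$ self-bounding term that can be moved to the left side in the final \cref{maintheo} argument, or kept as a constant factor), $(\mathrm{II})$ contributes $(CM)^2\lambda$, and the truncation remainder contributes $BCM^2\exp(-B/4CM^2)$, which together give the claimed bound. The main obstacle is the uniform-deviation step for $(\mathrm{I})$: correctly setting up the ratio/local-complexity inequality so that the sub-$2$ entropy exponent produces the rate $n^{-\frac{d+2}{2d+2}}$ rather than the naive $n^{-1/2}$ (which a global chaining bound would give, since $\alpha<2$ is precisely the regime where local complexity helps), while simultaneously keeping the $d$-dependence polynomial — this requires propagating the explicit $6144\,d^5$ constant through the entropy integral and the fixed-point equation, and then crudely simplifying to $d^2$ under $d>5$. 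A secondary technical point is verifying that the Bernstein condition holds for $\pi_B(f_{\bm\theta})$ against $f_\rho$ despite $f_\rho$ itself only being bounded by $CM \le B$ rather than exactly $B$, which is routine but must be stated carefully.
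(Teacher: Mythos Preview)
Your plan for term~(I) is essentially the paper's: center by $f_\rho$, truncate the label to $\pi_B(y)$, verify a Bernstein condition $\mathbb{E}[g_\pi^2]\lesssim \tau\,\mathbb{E}[g_\pi]$ for the class $g_\pi(\bm z)=(\pi_B f(\bm x)-\pi_B y)^2-(f_\rho(\bm x)-\pi_B y)^2$, and feed the metric entropy $\log\mathscr N_2(\mathcal G_1,\epsilon)\lesssim d^5\epsilon^{-2d/(d+2)}$ into a sub-root/local-complexity lemma to obtain the fixed point $n^{-2/(2+q)}=n^{-(d+2)/(2d+2)}$. The paper carries this out via a three-way split (replace $y$ by $\pi_B(y)$ in expectation, in the empirical mean, and then the bounded uniform deviation) together with the sub-root lemma of \citet{wu2007multi}; your ``ratio-type bound / critical radius'' description is the same mechanism. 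One clarification: your proposed centering by $g=\pi_B f_\rho$ does \emph{not} by itself isolate the unbounded $y$, since $(\pi_B f-y)^2-(f_\rho-y)^2=(\pi_B f-f_\rho)(\pi_B f+f_\rho-2y)$ is still linear in $y$; the paper instead truncates $y$ to $\pi_B(y)$ and controls the two resulting truncation remainders via Lemma~\ref{lemmaboundy} and Lemma~\ref{lemmaboundy2}.

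The real gap is in your treatment of term~(II). A direct Bernstein-type inequality on the uncentered summand $(y-f^\lambda_{\bm\theta}(\bm x))^2$ (or on $y^2$ after your expansion) can only yield a deviation of order $M^2/\sqrt n$, because the variance of this variable is $\Theta(M^4)$ irrespective of how close $f^\lambda_{\bm\theta}$ is to $f_\rho$. Since $n^{-1/2}>n^{-(d+2)/(2d+2)}$, this term would dominate and destroy the stated rate. The paper avoids this by centering \emph{before} splitting: it bounds ${\tt S}_2=\{\mathcal E_{\bm z}(f^\lambda)-\mathcal E_{\bm z}(f_\rho)\}-\{\mathcal E(f^\lambda)-\mathcal E(f_\rho)\}$ via Bernstein applied to $\xi(\bm z)=(y-f^\lambda(\bm x))^2-(y-f_\rho(\bm x))^2=(f_\rho-f^\lambda)(f_\rho+f^\lambda-2y)$. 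Because the factor $|f_\rho-f^\lambda|^2$ enters every moment, one gets $\mathbb E|\xi|^p\lesssim p!\,\widetilde M^{\,p-2}\,{\tt D}(\lambda)$, and Bernstein then gives $\lesssim \frac{\widetilde M}{n}\log\frac4\delta+{\tt D}(\lambda)\lesssim (CM)^2\big(\tfrac1n\log\tfrac4\delta+\lambda\big)$. Note also that this same centering is what makes your ``terms in $g$'' remainder from~(I) harmless: that remainder is exactly $\mathcal E(f_\rho)-\mathcal E_{\bm z}(f_\rho)$, which \emph{cancels} against the uncentered part of~(II) rather than being controlled on its own by a sub-Weibull argument. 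If you bound~(I) and~(II) separately as you propose, you are left with two $O(n^{-1/2})$ contributions that do not individually meet the target rate.
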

Combining the above results, we conclude the proof of Theorem~\ref{maintheo} (as well as Proposition~\ref{mainprop}).

\section{Numerical Validation}
\label{sec:exp}

In this section, we conduct numerical experiments to validate our theoretical results in the perspective of the convergence rate of the excess risk.

To validate whether the derived (sharper) convergence rate is attainable or not, we construct a simple synthetic dataset under a known $f_{\rho}$ in the over-parameterized regime.
To be specific, we assume that the data are sampled from a normal Gaussian distribution, i.e., $\bm x \sim \mathcal{N}(\bm 0, \bm I_d)$ and normalized with $\| \bm x \|_2 = 1$.
The feature dimension is $d=3$, a low dimension setting to ensure $P$ in \cref{theooptsol} is not large as mentioned before.
We set the number of training points to range from $10$ to $1000$ while the number of test points is held fixed at $20$.
Albeit simple, such an experimental setting still works in the over-parameterized regime, see Table~\ref{table:reslut}~{\color{blue}(Left)}.
We consider the noiseless case, where the target function is generated by a single ReLU, i.e., $y = f_{\rho}(\bm x) = \sigma(\langle \bm w^*, \bm x \rangle )$ with $\bm w^* \sim \mathcal{N}(\bm 0,\bm I_d)$.
The regularization parameter is set to $\lambda= 10^{-8}$ for both two methods, kernel ridge regression via the NTK and the path norm based algorithm.
We solve the convex program in ~\cref{fzfinitecom} using CVX~\citep{grant2014cvx} to obtain the exact global minima and then compute the test MSE for regression over $5$ runs.

\begin{table}
	\vspace{-2mm}
	\begin{minipage}{.2\textwidth}
		\scalebox{0.7}{\label{table:param}
			\centering
			\begin{tabular}{ccccccc}
				\hline
				$n$ (\#training data) & 10 & 20  & 30  & 40 & 50 \\
				\hline
				$m$ (\#parameters) & 32 & 116 & 192 & 250 & 325\\
				\hline
			\end{tabular}
		}
	\end{minipage}\hfill \hspace{-255mm} \vspace{-3mm}
	\begin{minipage}{.30\textwidth}
		\includegraphics[width=0.93\textwidth]{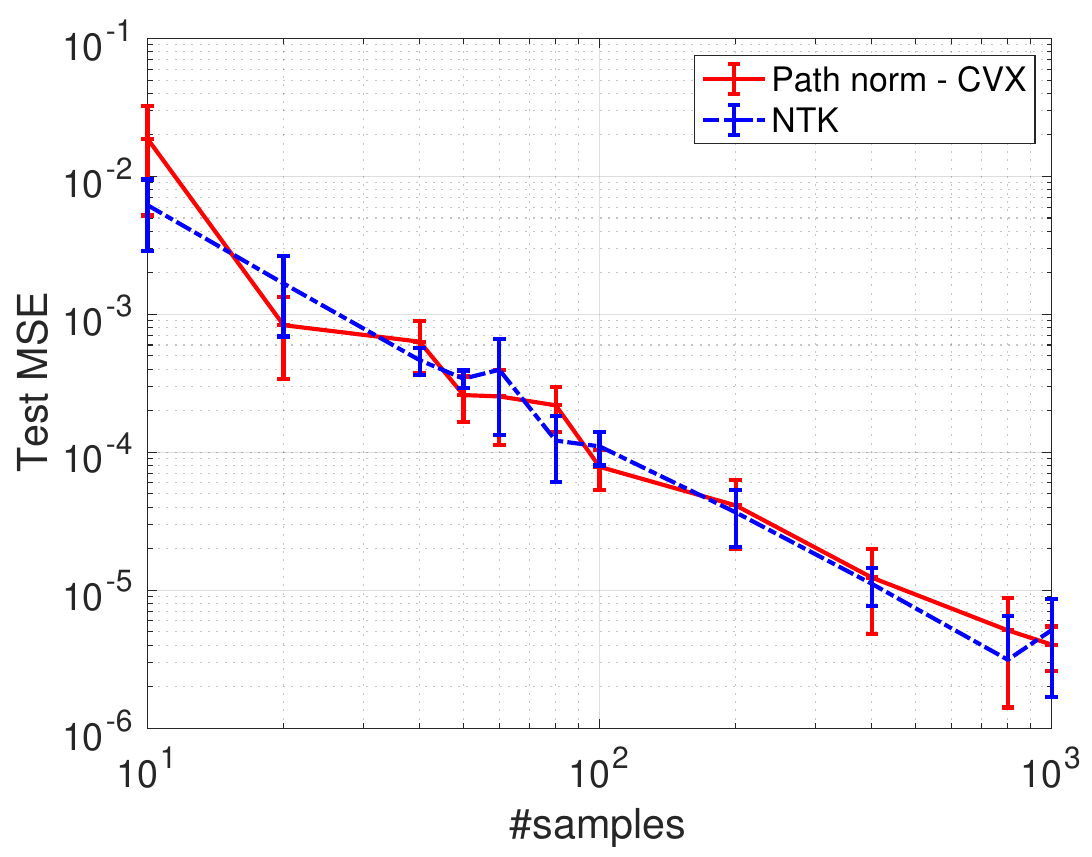}
		\label{fig:rate}
	\end{minipage}
	\begin{minipage}{.30\textwidth}
	\includegraphics[width=1.09\textwidth]{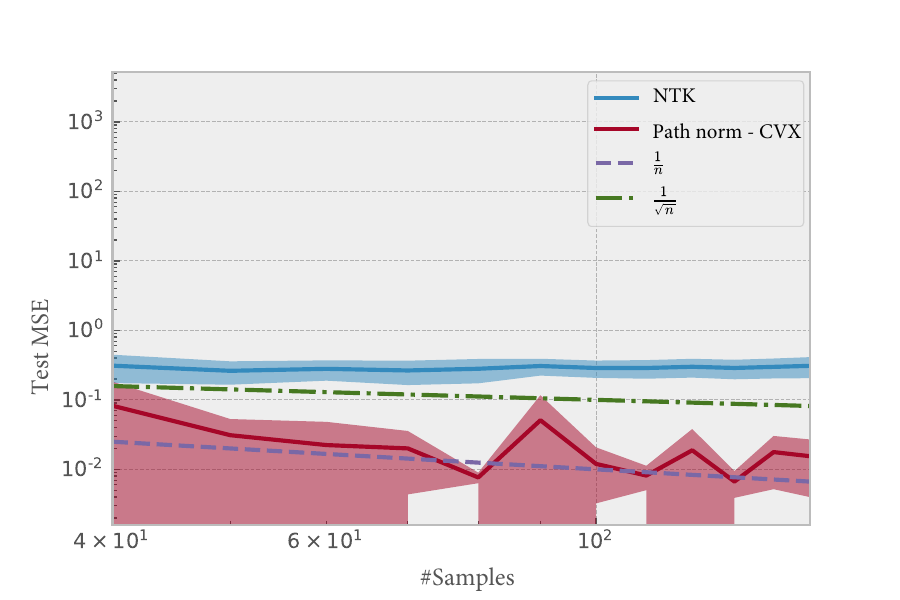}
\label{fig:opt}
\end{minipage}
	\caption{{\color{blue} Left:} the number of (activated) parameters \emph{v.s.} the number of training data in the synthetic dataset. Convergence rates of problem~\eqref{fzlambda} under the path norm \emph{v.s.} NTK on a synthetic dataset ({\color{blue} Middle}) and the UCI ML Breast Cancer dataset ({\color{blue} Right}), respectively.}
	\label{table:reslut}
\end{table}

The {\color{blue}~(middle)} figure of Table~\ref{table:reslut} shows that, when learning a single ReLU beyond RKHS, our algorithm still achieves the same convergence rate as the NTK in RKHS regime. This is because, the input dimension $d=3$ is not large, so there is no significant difference on the convergence rate. 

Besides, we also conduct this experiment on a real-world dataset, i.e., the UCI ML Breast Cancer dataset with 569 samples and the dimension $d=30$.
We set 80\% of samples used for training and 20\% of samples for test.
Here the number of training data ranges from 40 to 300, and the number of test data ranges from 10 to 75, accordingly.
The remaining experimental setting is the same as that of the synthetic dataset.

The {\color{blue}~(right)} figure of Table~\ref{table:reslut} shows that, when increasing the number of training data, the test MSE of NTK slightly decreases.
Instead, the path norm based algorithm achieves a significant lower test MSE, which demonstrates the attainability of our theoretical results.
Nevertheless, we also need to point out that, the path norm based algorithm is quite inefficient and unstable when compared to NTK. 
The performance is based on an extreme accurate solution by CVX, which restricts the utility of this convex program algorithm in practice.
Additionally, we remark here that we do not claim this algorithm is better than SGD.

\section{Conclusion and discussion}
\label{sec:conclusion}
This work provides a theoretical understanding on the separation between kernel methods and neural networks from the perspective of function space.
Our work sheds light on the theoretical guarantees of learning with over-parameterized two-layer neural networks under a general norm based capacity, and demonstrates the possibility of achieving sharper convergence rates with a clear dependence on $d$ via a computational high-dimensional convex algorithm.

Our results have several findings, 1) learning with the $\ell_1$-path norm is able to achieve faster convergence rate than $\mathcal{O}(\frac{1}{\sqrt{n}})$ on the excess risk under the general setting.
2) while kernel methods suffers from CoD, including the popular random features model, the NTK approach and other kernel estimators, neural networks can avoid this CoD and outperform kernel estimators from the perspective of function space theory.
Hence, we hope that our analysis opens the door to improved analysis of neural networks in a norm-based capacity view and function spaces in the machine learning community. 

We admit that transforming from the parameter space to the measure space leads to a very high dimensional convex problem, but the developed computational algorithm nonetheless provides a possible way to obtain the global minima.
Optimization over the Barron space is quite difficult. Maybe the Barron space is still a bit large from the perspective of optimization.
Identifying a suitable function space that is \emph{data-adaptive} than RKHS as well as computationally efficient to learn a high-dimensional function, e.g., \citep{spek2022duality,chen2023duality} has its own interest in approximation theory and deep learning theory and still is unanswered well, see more references \citep{steinwart2024reproducing,scholpple2023spaces}.
This is always our target to understand approximation-optimization trade-off and statistical-computational gap from kernel methods to neural networks, e.g., from the perspective of single/multi-index models \citep{abbe2022merged,lee2024neural,damian2024computational,bietti2023learning}.

\section*{Acknowledgement}
We thank the anonymous reviewers for their constructive feedback and Denny Wu for his engagement on the dimension dependence.
This work was supported by the Hasler Foundation Program: Hasler Responsible AI (project number 21043), by the Army Research Office and was accomplished under Grant Number W911NF-24-1-0048, by the Swiss National Science Foundation (SNSF) under grant number 200021\_205011.

\appendix


\section{Estimation of sample complexity and covering number}
\label{app:cover}

\subsection{Proof of Lemma~\ref{lem:gaussiancomp}}
\label{app:prooflemgc}
Here we provide the upper bound of the Gaussian complexity and then transform this bound to sample complexity.

\begin{proof}[Proof of Lemma~\ref{lem:gaussiancomp}]
	According to \cite[Lemma 1]{barron2019complexity}, due to 1-Lipschitz of the ReLU activation function $\sigma$, we have
	\begin{equation}\label{lem:contraction}
		\mathcal{C}_n(\sigma \circ \mathcal{G}_R) \leqslant \mathcal{C}_n(\mathcal{G}_R)\,.
	\end{equation}
	Based on the definition of $\mathcal{C}_n(\mathcal{G}_R)$ in Definition~\ref{defgaussiancom}, we have
	\begin{equation*}
		\mathcal{C}_n(\mathcal{G}_R)= \mathbb{E}_{\bm \xi} \left[\sup_{f_{\bm \theta}\in\mathcal{G}_R} \frac{1}{n} \sum_{i=1}^{n} \xi_i f_{\bm \theta}(\bm x_i) \right] = \mathbb{E}_{\bm \xi} \left[\sup_{f_{\bm \theta}\in \mathcal{G}_R} \frac{1}{m} \sum_{k=1}^{m} a_k \frac{1}{n} \sum_{i=1}^{n} \xi_i \sigma(\bm w_k^\top \bm x_i) \right] \,,
	\end{equation*}
	which can be further upper bounded by
	\begin{equation*}
		\begin{split}
			\mathcal{C}_n(\mathcal{G}_R) & \leqslant \mathbb{E}_{\bm \xi} \left[\sup_{f_{\bm \theta}\in \mathcal{G}_R} \frac{1}{m} \sum_{k=1}^{m} |a_k|\|\bm w_k\|_1 \frac{1}{n} \left| \sum_{i=1}^{n} \xi_i \sigma(\frac{\bm w_k}{\|\bm w_k\|_1}^\top \bm x_i) \right|\right] \\
			& \leqslant R\; \mathbb{E}_{\bm \xi} \left[\sup_{\|\bm w\|_1 \leqslant 1}\frac{1}{n} \left| \sum_{i=1}^{n} \xi_i \sigma({\bm w}^\top \bm x_i) \right|\right] \\
			& \leqslant 2R\; \mathbb{E}_{\bm \xi} \left[\sup_{\|\bm w\|_1 \leqslant 1}\frac{1}{n} \sum_{i=1}^{n} \xi_i \sigma({\bm w}^\top \bm x_i)\right] \; \text{[using symmetry of Gaussians]} \\
			&\leqslant 2R\; \mathbb{E}_{\bm \xi} \left[\sup_{\| \bm w\|_1 \leqslant 1}\frac{1}{n} \sum_{i=1}^{n} \xi_i {\bm w}^\top \bm x_i\right]\; \text{[using \cref{lem:contraction}]} \\
			&\leqslant 2R\; \mathbb{E}_{\bm \xi} \left[ \left\|\frac{1}{n} \sum_{i=1}^{n} \xi_i \bm x_i \right \|_\infty \right] \mbox{[using H\"{o}lder's inequality]}\\
			& = 2R\; \mathbb{E}_{\bm \xi} \left[ \max_{k=1, \dots d} \frac{1}{n} \sum_{i=1}^{n} \xi_i x^{(k)}_i \right] \\
			& \leqslant 2R \sqrt{\frac{2 \log d}{n}} \,, \quad \mbox{[using Assumption~\ref{assbounddata}]}
		\end{split}
	\end{equation*}
	where the last inequality uses the upper bounds for $d$ sub-Gaussian maxima, see \cite[Exercise 2.12]{wainwright2019high}.
	Taking the RHS of the above equation as $\epsilon$, we conclude the proof.
\end{proof}
{\bf Remark:} If we consider other types of the path norm for the two-layer ReLU neural networks, e.g., $\ell_p$-path norm, we can obtain the similar result by
\begin{equation*}
    \mathbb{E}_{\bm \xi} \sup_{\| \bm w\|_p \leq 1} \left\langle \bm w, \frac{1}{n} \sum_{i=1}^n \xi_i \bm x_i \right\rangle \leq \mathbb{E}_{\bm \xi} \| \frac{1}{n} \sum_{i=1}^n \xi_i \bm x_i \|_q = \frac{1}{n} \mathbb{E}_{\bm \xi} \| \bm X^{\!\top} \bm \xi\|_q\,,
\end{equation*}
where we use H\"{o}lder's inequality.
Then we can estimate $\mathbb{E}_{\bm \xi} \| \bm X^{\!\top} \bm \xi\|_q$ under different $q$ to conclude the proof.
For the commonly used $\ell_2$-path norm, we have $\frac{1}{n} \mathbb{E}_{\bm \xi} \| \bm X^{\!\top} \bm \xi\|_2 \leq \frac{1}{n}  \sqrt{\mbox{tr}(\bm X^{\!\top} \bm X)} \leq \frac{1}{\sqrt{n}} $ due to \cref{assbounddata}. 
Accordingly, we have $\mathcal{C}_n(\sigma \circ \mathcal{G}_R) \leq \frac{2R}{\sqrt{n}}$ under the $\ell_2$-path norm. Finally we conclude the proof.

\subsection{Relation between Gaussian complexity and metric entropy}
\label{app:samplecom}

Here we build the connection between Gaussian complexity and metric entropy by the following proposition, which would be beneficial to our proof.

\begin{proposition}\label{propcomplexity}
	Under Assumption~\ref{assbounddata},  denote $\mathcal{G}_R = \{ f_{\bm \theta} \in \mathcal{P}_m: \| \bm \theta \|_{\mathcal{P}} \leqslant R \}$, then the metric entropy of $\mathcal{G}_1$ is estimated by
	\begin{equation*}
		\log \mathscr{N}_2(\mathcal{G}_1,\epsilon) \leqslant c \log d \left( \frac{1}{\epsilon} \right)^2, \quad \forall \epsilon>0\,, \quad \mbox{with $c:= 32\pi \log 2$}\,.    
	\end{equation*}
\end{proposition}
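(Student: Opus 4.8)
The plan is to deduce this bound from the Gaussian complexity estimate of Lemma~\ref{lem:gaussiancomp} via the classical duality between Gaussian (or Rademacher) averages and metric entropy — concretely, a Sudakov-type minoration. First I would recall that for any class $\mathcal{F}$ of functions bounded on $\{\bm x_i\}_{i=1}^n$ and any scale $\epsilon>0$, Sudakov minoration for the Gaussian process $(\frac{1}{n}\sum_i\xi_i f(\bm x_i))_{f\in\mathcal{F}}$ gives a lower bound on the empirical Gaussian complexity in terms of the $\ell_2$-empirical covering number at scale $\epsilon$; rearranged, this reads
\begin{equation*}
	\log \mathscr{N}\big(\mathcal{F}|_{\bm u},\epsilon,\mathscr{D}_2\big) \;\leqslant\; \frac{c_0\, n}{\epsilon^2}\,\big(\mathcal{C}_n(\mathcal{F})\big)^2
\end{equation*}
for an absolute constant $c_0$ (the $\sqrt{\log}$ factors appearing in Sudakov's inequality get absorbed when one uses the sharper Talagrand/Mendelson form, or one keeps them explicit; the stated constant $c=32\pi\log 2$ suggests the authors track a particular version). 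I would take $\mathcal{F}=\mathcal{G}_1$, i.e.\ $R=1$, so that by Lemma~\ref{lem:gaussiancomp} the Gaussian complexity on any $n$ points satisfies $\mathcal{C}_n(\mathcal{G}_1)\leqslant 2\sqrt{2\log d/n}$ (this is exactly the bound extracted in the proof of that lemma before optimizing over $n$).

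Substituting $\mathcal{C}_n(\mathcal{G}_1)^2 \leqslant 8\log d/n$ into the displayed minoration inequality, the $n$ cancels and one obtains
\begin{equation*}
	\log \mathscr{N}\big(\mathcal{G}_1|_{\bm u},\epsilon,\mathscr{D}_2\big) \;\leqslant\; \frac{8\,c_0\,\log d}{\epsilon^2}\,,
\end{equation*}
uniformly in $l\in\mathbb{N}$ and $\bm u\in X^l$. Taking the supremum over $l$ and $\bm u$ — which is how $\mathscr{N}_2(\mathcal{G}_1,\epsilon)$ is defined — yields $\log\mathscr{N}_2(\mathcal{G}_1,\epsilon)\leqslant c\log d\,(1/\epsilon)^2$ with $c=8c_0$, and matching the constant to $32\pi\log 2$ just amounts to using the precise constant in the version of Sudakov's inequality invoked. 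Alternatively, and perhaps more in keeping with how the constant $\pi$ and $\log 2$ arise, one can argue directly: pack a maximal $\epsilon$-separated set $\{f_j\}$ in $\mathcal{G}_1|_{\bm u}$, restrict the Gaussian process to this finite set, and compare $\mathbb{E}\max_j \langle \bm g, f_j\rangle$ from below (each pairwise gap is $\geqslant \epsilon$, so by a Gaussian anti-concentration / Slepian-type comparison the max is $\gtrsim \epsilon\sqrt{\log N}$) against the global bound $\mathcal{C}_n(\mathcal{G}_1)$ from above; solving for $\log N$ gives the claim, and a packing number bounds the covering number.

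The main obstacle, and the only place any care is needed, is the Sudakov-type step itself: the empirical Gaussian complexity in Definition~\ref{defgaussiancom} is a one-sided supremum (no absolute value), whereas the cleanest minoration statements are for $\mathbb{E}\sup_{f}\langle\bm g,f\rangle$ over a symmetric index set or use $\mathbb{E}\sup_{f,f'}\langle\bm g,f-f'\rangle$. I would handle this by symmetrizing — note $\mathcal{G}_1$ is symmetric ($f_{\bm\theta}\in\mathcal{G}_1\iff -f_{\bm\theta}\in\mathcal{G}_1$, by negating the $a_k$), so $\mathcal{C}_n(\mathcal{G}_1)=\frac12\mathbb{E}\sup_{f,f'\in\mathcal{G}_1}\frac1n\sum_i\xi_i(f-f')(\bm x_i)$ up to the usual factor — and then apply the standard minoration to the centered process indexed by differences. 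A secondary point is keeping track of the normalization in $\mathscr{D}_2$ (the $1/l$ inside the norm) versus the $1/n$ in $\mathcal{C}_n$; these are consistent because both are defined with the same normalization, so the $n$'s genuinely cancel and the bound is width- and sample-size-independent, which is precisely the feature that makes it useful downstream for Proposition~\ref{prop:coverq2}.
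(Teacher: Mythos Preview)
Your proposal is correct and matches the paper's approach essentially verbatim. The paper proves a packing lemma (Lemma~\ref{lem:packing}) by restricting the Gaussian process to a maximal $\epsilon$-packing of $\mathcal{G}_1$, comparing via Sudakov--Fernique (Lemma~\ref{lem:sudakov}) to an iid Gaussian family $Y_f=\frac{\epsilon}{2\sqrt{n}}W_f$, and lower-bounding $\mathbb{E}\max_f W_f$ by $\frac{1}{\sqrt{\pi\log 2}}\sqrt{\log N}$ (Lemma~\ref{lem:gaussians}); combining the resulting inequality $\mathcal{C}_n(\mathcal{G}_1)\geqslant \frac{\epsilon}{2\sqrt{\pi n\log 2}}\sqrt{\log\mathscr{N}_2(\mathcal{G}_1,\epsilon)}$ with the upper bound $\mathcal{C}_n(\mathcal{G}_1)\leqslant 2\sqrt{2\log d/n}$ from Lemma~\ref{lem:gaussiancomp} gives exactly $c=32\pi\log 2$. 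Your worry about the one-sided supremum is unnecessary here: Sudakov--Fernique as stated in Lemma~\ref{lem:sudakov} is for one-sided maxima, so no symmetrization step is needed.
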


To prove \cref{propcomplexity}, we need lower bound the empirical Gaussian complexity using Gaussian comparison theorems. To this end, we need the following metric on packing number \cite[Definition 5.4]{wainwright2019high}.

For $\epsilon > 0$, denote $\mathscr{D}_2(\mathcal{G}_R, \epsilon)$ as the cardinality of the largest $\epsilon$-packing of $\mathcal{G}_R$, it admits 
\begin{equation*}
	\mathscr{N}_2(\mathcal{G}_R, \epsilon) \leqslant \mathscr{D}(\mathcal{G}_R,\epsilon)\,.
\end{equation*}

\begin{lemma}\label{lem:packing}
	Let $ \{ \bm x_i \}_{i=1}^n \subseteq \mathbb{R}^d$ satisfying Assumption~\ref{assbounddata}, denote $\mathcal{G}_R = \{ f_{\bm \theta} \in \mathcal{P}_m: \| \bm \theta \|_{\mathcal{P}} \leqslant R \}$, then for any $\epsilon > 0$, we have 
	\[
	\mathcal{C}_n(\mathcal{G}_1) \geqslant  \frac{1}{2\sqrt{\pi \log 2}} \frac{\epsilon}{\sqrt{n}} \sqrt{\log \mathscr{N}_2(\mathcal{G}_1, \epsilon)} \,.
	\]
\end{lemma}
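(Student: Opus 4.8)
\textbf{Proof plan for Lemma~\ref{lem:packing}.}
The plan is to lower bound the empirical Gaussian complexity $\mathcal{C}_n(\mathcal{G}_1)$ by a Sudakov-type minoration argument. The starting point is to observe that, for fixed data $\{\bm x_i\}_{i=1}^n$, the Gaussian process $G_f := \frac{1}{n}\sum_{i=1}^n \xi_i f(\bm x_i)$ indexed by $f \in \mathcal{G}_1$ has increments controlled by the (rescaled) $\ell_2$-metric on the data: for $f, g \in \mathcal{G}_1$,
\begin{equation*}
  \mathbb{E}\big[(G_f - G_g)^2\big] = \frac{1}{n^2}\sum_{i=1}^n |f(\bm x_i) - g(\bm x_i)|^2 = \frac{1}{n}\,\mathscr{D}_2\big(f|_{\bm u}, g|_{\bm u}\big)^2 \,,
\end{equation*}
where $\bm u = (\bm x_i)_{i=1}^n$. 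So the natural Gaussian metric on the index set is $\frac{1}{\sqrt{n}}\mathscr{D}_2$, and an $\epsilon$-separated set in $\mathscr{D}_2$ becomes an $\epsilon/\sqrt{n}$-separated set in the Gaussian metric.

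Next I would invoke Sudakov minoration (e.g.\ \citep{wainwright2019high}, the form relating packing number to expected suprema of a canonical Gaussian process). Concretely, if $\mathcal{G}_1$ restricted to $\bm u$ contains an $\epsilon$-packing of cardinality $N = \mathscr{D}(\mathcal{G}_1,\epsilon) \geqslant \mathscr{N}_2(\mathcal{G}_1,\epsilon)$, then picking those $N$ functions as a finite sub-index set and applying the lower bound for the maximum of $N$ Gaussian random variables whose pairwise distances are at least $\delta := \epsilon/\sqrt{n}$, one gets $\mathbb{E}\sup_{f} G_f \gtrsim \delta\sqrt{\log N}$. The explicit constant $\frac{1}{2\sqrt{\pi\log 2}}$ comes from the standard sharp version of this bound: for $N$ points pairwise separated by $\delta$ in the Gaussian metric, $\mathbb{E}\max_j G_{f_j} \geqslant \frac{\delta}{2\sqrt{\pi}}\sqrt{\log_2 N} = \frac{\delta}{2\sqrt{\pi\log 2}}\sqrt{\log N}$, which is exactly the inequality claimed once $\delta = \epsilon/\sqrt{n}$ and $N \geqslant \mathscr{N}_2(\mathcal{G}_1,\epsilon)$ are substituted, together with $\mathcal{C}_n(\mathcal{G}_1) = \mathbb{E}_{\bm \xi}\sup_{f\in\mathcal{G}_1} G_f \geqslant \mathbb{E}_{\bm \xi}\max_j G_{f_j}$.

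I would also need to address a minor technical point: the definition of $\mathscr{N}_2(\mathcal{G}_1,\epsilon)$ involves a supremum over sample size $l$ and over $\bm u \in X^l$, whereas here $n$ and $\{\bm x_i\}$ are fixed. This is resolved by reading the statement as: for \emph{these} data points the Gaussian complexity dominates $\frac{1}{\sqrt n}$ times the square-root metric entropy of the restriction to these points, and then the worst-case covering number is attained (or approached) by an appropriate choice of data — but for the lemma as used downstream it suffices to have the inequality with the supremal covering number on the right, which only makes the right-hand side larger, so the bound is still something we must establish; the cleanest route is to note that in the proof of \cref{propcomplexity} one is free to choose the data realizing the sup, so no loss occurs. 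The main obstacle I expect is getting the sharp constant right: the pedestrian union-bound version of Sudakov minoration loses constants, so one must use the precise two-point / induction argument (or cite the exact inequality from \citep{wainwright2019high}) that yields the $\frac{1}{2\sqrt{\pi\log 2}}$ factor, rather than an unspecified universal constant. Everything else is bookkeeping: matching the metric normalization $\frac{1}{\sqrt n}$, and passing from packing to covering via $\mathscr{N}_2 \leqslant \mathscr{D}$.
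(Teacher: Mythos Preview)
Your proposal is correct and matches the paper's proof essentially step for step: the paper takes a maximal $\epsilon$-packing $\mathscr{D}$ of $\mathcal{G}_1$ in the empirical $\mathscr{D}_2$ metric, computes $\mathbb{E}[(X_f-X_g)^2]\geqslant \epsilon^2/n$ on the packing, compares via Sudakov--Fernique (Lemma~\ref{lem:sudakov}) to the i.i.d.\ process $Y_f=\tfrac{\epsilon}{2\sqrt n}W_f$, and then applies the sharp lower bound $\mathbb{E}[\max_j W_j]\geqslant \tfrac{1}{\sqrt{\pi\log 2}}\sqrt{\log N}$ (Lemma~\ref{lem:gaussians}) to obtain exactly the constant you derived; the final step is $\mathscr{D}\geqslant\mathscr{N}_2$, as you note. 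Your observation about the supremum over $l,\bm u$ in the definition of $\mathscr{N}_2$ is a genuine subtlety that the paper glosses over, and your resolution (choosing the data to realize the sup when combining with Lemma~\ref{lem:gaussiancomp}) is the right reading.
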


\begin{proof}
	Let $\epsilon > 0$. Let $\mathscr{D}$ be the largest $\epsilon$-packing of $\mathcal{G}_1$. Due to $\mathscr{D} \subseteq \mathcal{G}_1$, we have that
	\[
	\mathcal{C}_n(\mathcal{G}_1) \geqslant \mathbb{E} \left[\sup_{f_{\bm \theta}\in\mathscr{D}} \frac{1}{n} \sum_{i=1}^{n} Z_i f_{\bm \theta}(\bm x_i) \right]\,.
	\]
	Denote $X_f:= \frac{1}{n} \sum_{i=1}^{n} Z_i f_{\bm \theta}(\bm x_i)$, we can observe that for any $f, g \in \mathscr{D}$,
	\[
	\mathbb{E}[(X_f - X_g)^2] \geqslant \frac{1}{n} \epsilon^2.
	\]
	Define iid random variables $(Y_f)_{f\in\mathscr{D}}$ such that 
	\[
	Y_f = \frac{\epsilon}{2\sqrt{n}} W_f \,, \quad \text{with}~W_f \sim \mathcal{N}(0, 1)\,,
	\]
 then we have
	\[
	\mathbb{E} \left[\sup_{f\in\mathscr{D}} X_f \right] \geqslant \mathbb{E} \left[\sup_{f\in\mathscr{D}} Y_f \right]\,.
	\]
	Since the condition for Sudakov-Fernique's comparison in Lemma~\ref{lem:sudakov} is verified, we have the following result by Lemma~\ref{lem:gaussians}
	\[
	\mathbb{E}\left[\sup_{f\in\mathscr{D}} Y_f \right] = \frac{\epsilon}{2\sqrt{n}} \mathbb{E} \left[\sup_{f \in \mathscr{D}} Z_f \right] \geqslant \frac{1}{2\sqrt{\pi \log 2}} \frac{\epsilon}{\sqrt{n}} \sqrt{\log \mathscr{D}(\mathcal{G}_1, \epsilon)}\,,
	\]
	which concludes the proof by recalling that $\mathscr{D}(\mathcal{G}_1, \epsilon) \geqslant \mathscr{N}_2(\mathcal{G}_1, \epsilon)$.
\end{proof}

Combing Lemmas~\ref{lem:gaussiancomp} and \ref{lem:packing}, we can directly finish the proof of Proposition~\ref{propcomplexity}.

\subsection{Proof of Proposition~\ref{prop:coverq2}}
\label{app:coveringq2}

To improve the estimation of the metric entropy in Proposition~\ref{propcomplexity}, we need a useful lemma \citep{van1996weak} on the convex hull technique. 
The considered convex hull is sequentially closed and its envelope has a weak second moment, and thus this function class is Donsker. Accordingly, the metric entropy of such convex hull of any polynomial class is of lower order than $\epsilon^{-r}$ with $r<2$, which is enough to ensure that Dudley’s entropy integral is finite.

\begin{lemma}  \citep[Theorem 2.6.9]{van1996weak} \label{lem:convexcovering}
Let $\rho_X$ be a probability measure over the input space $\bm x \in X$, and $\mathcal{F}$ be a class of measurable functions with a measurable squared integrable envelope $F$ such that $\rho_X F^2 < \infty$ and for some $V>0$,
\begin{equation*}
    \mathscr{N}(\mathcal{F}, \epsilon \| F \|_{L^2_{\rho_X}}, L^2_{\rho_X})  \leqslant C \left( \frac{1}{\epsilon} \right)^V\,, \quad 0 < \epsilon < 1\,.
\end{equation*}
Then there exists a constant $C_{V}$ depending on $C$ and $V$ such that
\begin{equation}\label{eq:vv2}
    \log \mathscr{N} (\overline{\mathrm{conv}}(\mathcal{F}), \epsilon \| F \|_{L^2_{\rho_X}}, L^2_{\rho_X}) \leqslant C_{V} \left( \frac{1}{\epsilon} \right)^{\frac{2V}{V+2}}\,.
\end{equation}
Here we rewrite it by clearly indicating the constant.
Let $W := 1/2 + 1/V$ and $L := C^{1/V} \| F \|_{L^2_{\rho_X}}$, the function space $\mathcal{F}$ can be covered by $n$ balls of radius at most $L n^{-1/V}$.
Form the set $\mathcal{F}_1 \subset \mathcal{F}_2 \subset \cdots \subset \mathcal{F}$ such that $\mathcal{F}_n$ is a maximal, $L n^{-1/V}$-separated net over $\mathcal{F}$. Then \cref{eq:vv2} is equivalent to prove
\begin{equation*}
     \log \mathscr{N} (C_k L n^{-W}, \overline{\mathrm{conv}}(\mathcal{F}_{nk^q}), L^2_{\rho_X}) \leqslant D_k n \,, \quad n,k \geq 1\,,
\end{equation*}
where $\{C_k\}_{k=1}^{\infty}$ and $\{D_k\}_{k=1}^{\infty}$ are two sequences admitting
\begin{equation*}
    \begin{split}
        C_k & = C_{k-1} + \frac{1}{k^2}\,, \\
        D_k & = D_{k-1} + 2^{2q/V+1} \frac{1+ \log (1+k^{2q/V-4+q})}{k^{2q/C-4}} \,,
    \end{split}
\end{equation*}
where $q$ is some constant satisfying $q \geq 3+V$ and we need to choose proper parameters to ensure the convergence of $\{C_k\}_{k=1}^{\infty}$ and $\{D_k\}_{k=1}^{\infty}$.
\end{lemma}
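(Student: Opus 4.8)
This is the classical Ball--Pajor / van der Vaart--Wellner convex-hull entropy estimate (Theorem~2.6.9 of \citep{van1996weak}), and I would follow that route while keeping the constants explicit, exactly along the reformulation stated above. After rescaling we may assume $\|F\|_{L^2_{\rho_X}}=1$; writing $H:=L^2(\rho_X)$ (a Hilbert space) and centering $\mathcal{F}$ by a fixed element (which only translates $\overline{\mathrm{conv}}(\mathcal{F})$), we may also assume all functions occurring below have $H$-norm bounded by an absolute constant, and all covering numbers are taken with respect to $\mathscr{D}_2$ on $H$. Fix the free integer $n\ge 1$; it is optimized against $\epsilon$ only at the very end. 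For $k\ge 1$ let $\mathcal{F}_{nk^q}$ be a maximal $L(nk^q)^{-1/V}$-separated subset of $\mathcal{F}$; it is then an $L(nk^q)^{-1/V}$-net of $\mathcal{F}$, and substituting $\epsilon=C^{1/V}(nk^q)^{-1/V}$ into the hypothesis $\mathscr{N}(\mathcal{F},\epsilon,H)\le C\epsilon^{-V}$, together with the comparison between packing and covering numbers, gives $|\mathcal{F}_{nk^q}|\lesssim nk^q$ uniformly. Since $L(nk^q)^{-1/V}\to 0$ as $k\to\infty$, $\overline{\mathrm{conv}}(\mathcal{F})$ is the $H$-closure of $\bigcup_k\overline{\mathrm{conv}}(\mathcal{F}_{nk^q})$, so it suffices to control the latter uniformly in $k$, which is the content of the recursion below.

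\textbf{The recursion (main step).} I would prove by induction on $k$ the claim
\[\log\mathscr{N}\bigl(C_k L n^{-W},\ \overline{\mathrm{conv}}(\mathcal{F}_{nk^q}),\ H\bigr)\ \le\ D_k\,n ,\qquad k\ge 1,\]
with $W=\tfrac12+\tfrac1V$ and $C_k,D_k$ the stated sequences. For the step $k\to k+1$: any $f\in\overline{\mathrm{conv}}(\mathcal{F}_{n(k+1)^q})$ writes as $f=\sum_i\lambda_i h_i$ with $h_i\in\mathcal{F}_{n(k+1)^q}$; projecting each $h_i$ onto a nearest point $\pi_k(h_i)\in\mathcal{F}_{nk^q}$ splits $f=f'+h$ with $f'=\sum_i\lambda_i\pi_k(h_i)\in\overline{\mathrm{conv}}(\mathcal{F}_{nk^q})$ and $h=\sum_i\lambda_i\bigl(h_i-\pi_k(h_i)\bigr)$ a convex combination of at most $|\mathcal{F}_{n(k+1)^q}|\lesssim nk^q$ difference vectors, each of $H$-norm $\le L(nk^q)^{-1/V}$. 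The term $f'$ is covered by the inductive hypothesis ($e^{D_k n}$ balls of radius $C_k L n^{-W}$), and the residual set is covered by Maurey's empirical approximation lemma: in a Hilbert space, the convex hull of a set $S$ of vectors of norm $\le r$ admits, for every $\eta>0$, an $\eta$-net consisting of $m:=\lceil r^2/\eta^2\rceil$-fold averages of elements of $S$, of which there are at most $\binom{|S|+m-1}{m}\le\bigl(e(|S|+m)/m\bigr)^m$; using the latter bound rather than $|S|^m$ is what removes a spurious $\log n$ factor. Taking the residual scale $\eta\sim L n^{-W}/k^2$ (so the radii telescope, giving $C_{k+1}=C_k+1/k^2$), the identity $2W=1+2/V$ forces $m\lesssim k^{4-2q/V}n$ and $|S|/m\lesssim k^{\,q+2q/V-4}$, whence the residual contributes $\log\mathscr{N}\lesssim k^{4-2q/V}n\bigl(1+\log(1+k^{\,2q/V-4+q})\bigr)$; added to $D_k n$ this yields the stated $D_{k+1}$. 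Choosing $q$ with $q\ge 3+V$ and large enough that $2q/V-4>1$ makes both $\sum_k(C_{k+1}-C_k)=\sum_k k^{-2}$ and $\sum_k(D_{k+1}-D_k)/n$ converge, so $C_k\uparrow C_\infty<\infty$ and $D_k\uparrow D_\infty<\infty$.

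\textbf{Limit and optimization.} Letting $k\to\infty$ and invoking the density observed above gives $\log\mathscr{N}(C_\infty L n^{-W},\overline{\mathrm{conv}}(\mathcal{F}),H)\le D_\infty n$. Finally, for $0<\epsilon<1$ choose $n$ so that $C_\infty L n^{-W}\le\epsilon\|F\|_{L^2_{\rho_X}}$, i.e.\ $n=\lceil(C_\infty C^{1/V}/\epsilon)^{1/W}\rceil$; since $1/W=2V/(V+2)$ and $L=C^{1/V}\|F\|_{L^2_{\rho_X}}$, this gives $\log\mathscr{N}(\overline{\mathrm{conv}}(\mathcal{F}),\epsilon\|F\|_{L^2_{\rho_X}},L^2_{\rho_X})\le D_\infty\bigl(C_\infty C^{1/V}\bigr)^{2V/(V+2)}\epsilon^{-2V/(V+2)}=:C_V\epsilon^{-2V/(V+2)}$, which is \eqref{eq:vv2}.

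\textbf{Main obstacle.} The delicate part is the bookkeeping in the recursion: one must balance the net-refinement rate ($k^q$) against the number of Maurey terms per level (which scales like $k^{4-2q/V}n$) and the combinatorial overhead $\log(e|S|/m)$, so that the accumulated constants $C_k,D_k$ stay bounded; this is precisely what forces $q$ large relative to $V$. A second subtle point is the base case $k=1$, i.e.\ bounding $\log\mathscr{N}(C_1 L n^{-W},\overline{\mathrm{conv}}(\mathcal{F}_n),H)$: it cannot be obtained from a single application of Maurey (the convex hull of $n$ arbitrary points is too large at the fine scale $n^{-W}$) and must instead exploit that $\mathcal{F}_n$ itself inherits the polynomial entropy bound of $\mathcal{F}$, i.e.\ one re-runs the same projection-plus-Maurey decomposition internally, starting from a trivial one-point net.
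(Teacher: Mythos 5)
Your proposal is correct and follows essentially the same route as the source of this lemma: the paper does not reprove the result but cites \citep[Theorem~2.6.9]{van1996weak}, and the restatement in the lemma (the nested nets $\mathcal{F}_{nk^q}$, the telescoping radii $C_k = C_{k-1} + k^{-2}$, and the increment $D_k - D_{k-1} \asymp k^{4-2q/V}\bigl(1+\log(1+k^{2q/V-4+q})\bigr)$, where the exponent $2q/C-4$ in the displayed recursion is a typo for $2q/V-4$) is precisely the projection-plus-Maurey induction you reconstruct. Your bookkeeping ($m\asymp nk^{4-2q/V}$ from $2W-2/V=1$, the combinatorial count $\binom{|S|+m-1}{m}$, and the final choice $n\asymp\epsilon^{-1/W}$) matches the cited argument, which is also how the paper later extracts the explicit $d$-dependence in the proof of Proposition~\ref{prop:coverq2}.
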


Now we are ready to prove Proposition~\ref{prop:coverq2}.
\begin{proof}[Proof of Proposition~\ref{prop:coverq2}]
Recall the considered function space $\mathcal{P}_{m}$
\begin{equation}\label{eqfmnew}
	\mathcal{P}_{m} = \left\{ f_{\bm \theta}(\bm \cdot) = \frac{1}{m}\sum_{k=1}^m a_k \sigma \big( \langle \bm w_k, \bm \cdot \rangle \big) = \frac{1}{m}\sum_{k=1}^m \widetilde{a}_k \sigma \left( \left \langle \widetilde{\bm w}_k, \bm \cdot \right \rangle \right) \right\} \,,
\end{equation}
where $\widetilde{a}_k = \| {\bm w}_k \|_1 a_k /m $ and $\widetilde{\bm w}_k = {\bm w}_k/{\| {\bm w}_k \|_1}$ by the scaling variance property of ReLU for any $\bm w_k \in \mathbb{R}^d/\{\bm 0\}$, $\forall k \in [m]$, so we have
$\| \bm \theta \|_{\mathcal{P}} = \sum_{k=1}^m |\widetilde{a}_k| $.
In this case, we consider $\mathcal{G}_R = \{ f \in \mathcal{P}_m: \| \bm \theta \|_{\mathcal{P}} \leqslant R \} $, and the related parameter space $\widetilde{\bm w} \in \mathcal{W} = \mathbb{S}^{d-1}_1$ is the $\ell_1$-norm ball. \footnote{If $\| {\bm w}_k \|_1 = 0$ with $k \in [m]$, we can directly set $\widetilde{a}_k = 0$ and $\widetilde{\bm w}_k = \bm 0$. In this case, we only consider non-zero $\widetilde{\bm w}_k$ for convenience.}

We consider the following function space 
\begin{equation*}
\mathcal{F} = \{ \sigma(\langle \widetilde{\bm w}, \cdot \rangle): \widetilde{\bm w} \in \mathcal{W} \} \cup \{ 0 \} \cup \{ - \sigma(\langle \widetilde{\bm w}, \cdot \rangle) : \widetilde{\bm w} \in \mathcal{W} \} \,,
\end{equation*}
and the convex hull of $\mathcal{F}$ is 
\begin{equation}\label{eq:l1regu}
    \overline{\mathrm{conv}}\mathcal{F} = \left\{ \sum_{i=1}^m \alpha_i f_i \bigg| f_i \in \mathcal{F}, \sum_{i=1}^{m} \alpha_i =1, \alpha_i \geqslant 0, m \in \mathbb{N} \right\}\,. 
\end{equation}
It can be found that $\mathcal{G}_1 \subset \overline{\mathrm{conv}} \mathcal{F}$, and hence we focus on bounding the covering number of $\overline{\mathrm{conv}} \mathcal{F}$.
One can see that the convex hull of $\mathcal{F}$ in Eq.~\eqref{eq:l1regu} is actually the same as the data-dependent hypothesis space with the $\ell_1$ coefficient regularization \citep{shi2011concentration} and we employ the result here.

Recall that in our setting the data are bounded in Assumption~\ref{assbounddata}, which implies that the used ReLU in our work is 1-Lipschitz continuous with respect to the weights. Consequently, if $\{ \widetilde{\bm w}_j \}_{j=1}^m$ is an $\epsilon$-net of $\mathcal{W}$, then $\{ \sigma( \langle \widetilde{\bm w}_j, \cdot \rangle) \}_{j=1}^m$ is an $\epsilon$-net of $ \{ \sigma( \langle \widetilde{\bm w}, \cdot \rangle) : \widetilde{\bm w} \in \mathcal{W} \}$ in ${\tt C}(X)$.
Accordingly, $\mathcal{F}_m$, defined as
\begin{equation*}
\mathcal{F}_m = \{ \sigma(\langle \widetilde{\bm w}_j, \cdot \rangle) \}_{j=1}^m \cup \{ 0 \} \cup \{ - \sigma(\langle \widetilde{\bm w}_j, \cdot \rangle) \}_{j=1}^m \,,
\end{equation*}
is an $\epsilon$-net of $\mathcal{F}$ in ${\tt C}(X)$. That means,
\begin{equation*}
    \mathscr{N} (\mathcal{F}, \epsilon, \| \cdot \|_{\infty}) \leqslant 2 \mathscr{N} (\mathcal{W}, \epsilon) + 1\,,
\end{equation*}
where $\mathscr{N} (\mathcal{W}, \epsilon)$ is the covering number of $\mathcal{W}$ with respect to the Euclidean distance.
Due to $\mathcal{W} = \mathbb{S}^{d-1}_1$ in our problem, we have  $\mathscr{N} (\mathbb{S}^{d-1}_1, \epsilon) \leqslant (3/\epsilon)^d$ \cite[Chapter 5]{wainwright2019high}.

The class $\mathcal{F}$ satisfies the conditions of Lemma~\ref{lem:convexcovering} with $F$ being the constant $1$ as both weights and data are bounded.
Accordingly, we have
\begin{equation*}
   \mathscr{N}(\mathcal{F}, \epsilon \| F \|_{L^2_{\rho_X}}, L^2_{\rho_X}) \leqslant 2^{d+1} \left( \frac{1}{\epsilon} \right)^d + 1\,.
\end{equation*}
By taking $V:=d$ and $C := 2^{d+1} + 1$ in Lemma~\ref{lem:convexcovering} and the empirical measure $\mu:= \frac{1}{n} \sum_{i=1}^n \delta_{\bm x_i}$, then we have
\begin{equation}\label{eqlogn2}
    	\log \mathscr{N}_2(\mathcal{G}_1,\epsilon) \leqslant \log \mathscr{N}_2(\overline{\mathrm{conv}}\mathcal{F},\epsilon, \mu) \leqslant c \left( \frac{1}{\epsilon} \right)^{\frac{2d}{d+2}} \,,
\end{equation}
where $c$ is some constant depending on $d$.
In the next, we will track this dimension dependence.

To ensure our estimate in \cref{eqlogn2} non-vacuous, we need to carefully check the proof of \cite[Theorem 2.6.9]{van1996weak} to ensure that $c$ cannot depend on the exponential order of $d$. To be specific, we have
\begin{equation*}
    \log \mathscr{N}_2(\overline{\mathrm{conv}}\mathcal{F},\epsilon, \mu) \leqslant D_k [C_k (2^{d+1}+1)^{\frac{1}{d}}]^{\frac{2d}{d+2}} \left( \frac{1}{\epsilon} \right)^{\frac{2d}{d+2}}\,,
\end{equation*}
where $\{C_k\}_{k=1}^{\infty}$ and $\{D_k\}_{k=1}^{\infty}$ are two converged sequences under some proper parameters admitting
\begin{equation*}
    \begin{split}
        C_k & = C_{k-1} + \frac{1}{k^2}\,, \\
        D_k & = D_{k-1} + 64 \frac{1+ \log (1+k^{2+3d})}{k^2} \,.
    \end{split}
\end{equation*}
By some calculation, we have $C_k \leq C_1 + 2$ for any $k$, and $D_k$ admits
\begin{equation*}
D_k \leq D_{k-1} + \frac{64}{k^2} + \frac{192(1+d)\log k}{k^2} \leq D_{k-1} + \frac{64}{k^2} + \frac{384(1+d)}{k^{3/2}} \leq D_1 + 896 + 768d\,.
\end{equation*}
Therefore, we only need to check $C_1$ and $D_1$, respectively.

For $C_1$, the proof of \cite[Theorem 2.6.9]{van1996weak} requires that the following inequality holds
\begin{equation}\label{eq:cvcons}
    e^{n/d} (6d^{\frac{1}{2} + \frac{1}{d}}) \left( e + \frac{eC_1^2}{d^{\frac{2}{d}}} \right)^{8 d^{\frac{2}{d}} C_1^{-2}n} \leq e^{n}\,,
\end{equation}
where $\mathcal{F}$ is covered by $n$ balls with the radius at most $(2^{d+1}+1)^{\frac{1}{d}} n^{-\frac{1}{d}}$.
Then \cref{eq:cvcons} is equivalent to
\begin{equation*}
    8 d^{\frac{2}{d}} C_1^{-2} \log \left(e + \frac{eC_1^2}{d^{\frac{2}{d}}} \right) \leq 1 - \frac{1}{d} \left[ 1+ \log 6 + ( \frac{1}{2} + \frac{1}{d}) \log d \right]\,.
\end{equation*}
Clearly, taking $C_1$ in a constant order is sufficient to ensure this inequality for any $d \geq 5$. For example, we can choose $C_1 = 50$.
For $D_1$, \cref{eq:cvcons} implies that taking $D_1 = 1$ is enough, precisely, using Eq.~(2.6.10) in \cite[Theorem 2.6.9]{van1996weak}.

Based on the above discussion, we have for any $d \geq 5$
\begin{equation*}
\begin{split}
     \log \mathscr{N}_2(\overline{\mathrm{conv}}\mathcal{F},\epsilon, \mu) 
     & \leq D_k [C_k (2^{d+1}+1)^{\frac{1}{d}}]^{\frac{2d}{d+2}} \left( \frac{1}{\epsilon} \right)^{\frac{2d}{d+2}} \\
     & \leq (897+768d)[52 (2^{d+1}+1)^{\frac{1}{d}}]^{\frac{2d}{d+2}} \left( \frac{1}{\epsilon} \right)^{\frac{2d}{d+2}} \\
     & \leq 10^7 d \left( \frac{1}{\epsilon} \right)^{\frac{2d}{d+2}} \,.
\end{split}
\end{equation*}
Finally we conclude the proof. 
\end{proof}

\section{Optimization via a high dimensional convex program}
\label{app:opt}

In this section, we give the exact equivalence between the original non-convex optimization problem and a high dimensional convex problem by adapting the work of \citep{pilanci2020neural} to our setting. 

 Let $\bm X = [\bm x_1, \bm x_2, \cdots, \bm x_n]^{\!\top} \in \mathbb{R}^{n \times d}$ be the data matrix, and $\{ \bm D_i \}_{i=1}^P$ be the set of diagonal matrices whose diagonal is given by  $[ \mathrm{1}_{\{\bm x_1^{\top} \bm w \geqslant 0\}}, \dots, \mathrm{1}_{\{\bm x_n^{\top} \bm w \geqslant 0\}}]$ for all possible $\bm w \in \mathbb{R}^d$. There is a finite number $P$ of such matrices.  We then have the following proposition on the globally optimal solutions for the non-convex optimization problem \eqref{fzlambda}.
\begin{proposition}\label{theooptsol}
	In the over-parameterized regime with $m \geqslant n+1$, the non-convex regularized problem \eqref{fzlambda} admits a global minimum that is recovered from the solution of the convex program
	\begin{equation}\label{fzfinitecom}
		\begin{split}
			\{ \bm u_i^{\star} \}_{i=1}^{2P} \!=\! \argmin_{\bm u_i \in \mathcal{C}_i }\, { \frac{1}{n}}\Big \| \sum_{i=1}^{2P} \bm D_i\bm X \bm u_i \!-\! \bm y \Big\|_2^2 \!+\! \lambda \sum_{i=1}^{2P}  \|\bm u_i\|_1 \,,
		\end{split}
	\end{equation}
where $\mathcal{C}_i = \{ \bm u \in \mathbb{R}^d \large| (2\bm D_i- \bm I_n)\bm X \bm u \geqslant 0   \}$, $\mathcal{C}_{i+p} = \mathcal{C}_i$, $\forall i \in [P]$ by setting $\bm D_{i+p} = -\bm D_i$.
	 The solution $\{ ( a^{\star}_k, \bm w^{\star}_k ) \}_{k=1}^m$ can be constructed by non-zero elements of $\{ \bm u^*_i \}_{i=1}^{2P}$ in~\cref{fzfinitecom} such that  $m:= \sum_{i: \bm u^{\star}_i \neq \bm 0}^{2P} 1$. To be specific, denote the index set $\mathcal{J} = [j_1, j_2, \cdots, j_m]$ for non-zero elements of $\{ \bm u^{\star}_i \}_{i=1}^{2P}$ with $j_1 \geqslant 1$ and $j_m \leqslant 2P$, we have for any $ k \in [m]$, the weights are $ \bm w^{\star}_{k}  = \bm u^{\star}_{j_k}/\| \bm u^{\star}_{j_k} \|_1$; and $a^{\star}_k = m\| \bm u^{\star}_{j_k} \|_1$ if $j_k \leqslant P$ and $a^{\star}_k = -m\| \bm u^{\star}_i \|_1$ if $j_k \geqslant P+1$.
\end{proposition}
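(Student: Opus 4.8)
The plan is to recognize Problem~\eqref{fzlambda} as an instance of the convex reformulation of regularized two-layer ReLU training due to \citep{pilanci2020neural}, after accounting for two bookkeeping differences: the $\frac{1}{m}$ scaling in the definition of $\mathcal{P}_m$ in \eqref{eq:function}, and the use of the $\ell_1$-path norm $\|\bm\theta\|_{\mathcal{P}} = \frac1m\sum_k |a_k|\,\|\bm w_k\|_1$ as the regularizer rather than the squared-Frobenius norm $\frac12\sum_k(a_k^2+\|\bm w_k\|_2^2)$. First I would normalize the hidden weights using the positive $1$-homogeneity of ReLU: for any neuron $(a_k,\bm w_k)$ with $\bm w_k\neq\bm 0$, rescale to $(\tilde a_k,\tilde{\bm w}_k)$ with $\tilde{\bm w}_k = \bm w_k/\|\bm w_k\|_1$ on the sphere $\mathbb{S}^{d-1}_1$ and $\tilde a_k = a_k\|\bm w_k\|_1/m$; this leaves both the network output and the path norm unchanged, and turns the regularizer into the plain $\ell_1$ penalty $\sum_k|\tilde a_k|$ on the output-layer weights with weights constrained to the $\ell_1$-sphere. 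With this normalization the objective is exactly of the min-$\ell_1$-norm form to which the gauge/strong-duality argument of \citep{rosset2007l,pilanci2020neural} applies.

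\textbf{Main steps.} (i) Establish strong duality: the constraint $m\geqslant n+1$ guarantees that the number of neurons exceeds the number of active pieces needed, so the infinite-width (measure) relaxation and the finite-width problem have the same value, and the bidual is attained — this is the content of \citep[][and the strong-duality argument underlying the convex program]{rosset2007l}. (ii) Partition $\mathbb{R}^d$ by the sign patterns of the data: each $\bm w$ induces a diagonal matrix $\bm D(\bm w) = \diag(\mathbf{1}_{\{\bm x_i^\top\bm w\geqslant 0\}})$, and there are finitely many, say $P$, distinct such matrices $\bm D_1,\dots,\bm D_P$; on the cone $\mathcal{C}_i = \{\bm u: (2\bm D_i-\bm I_n)\bm X\bm u\geqslant 0\}$ one has $\sigma(\bm X\bm u) = \bm D_i\bm X\bm u$, so ReLU acts linearly. (iii) Absorb each output weight's magnitude into the hidden weight (possible since we only need $\bm w$ up to scaling and $|a|\sigma(\bm w^\top\bm x) = \sigma(|a|\bm w^\top\bm x)$ for $a>0$, with the sign handled by splitting $\bm D_{i+P} := -\bm D_i$, $\mathcal{C}_{i+P}:=\mathcal{C}_i$), which converts the $\ell_1$ penalty on $\tilde a$ into $\sum_{i=1}^{2P}\|\bm u_i\|_1$ and yields the convex program \eqref{fzfinitecom}. (iv) Verify the explicit primal reconstruction: given optimal $\{\bm u_i^\star\}$, read off the nonzero ones, index them by $\mathcal{J}=[j_1,\dots,j_m]$, and set $\bm w_k^\star = \bm u_{j_k}^\star/\|\bm u_{j_k}^\star\|_1$, $a_k^\star = \pm m\|\bm u_{j_k}^\star\|_1$ with the sign determined by whether $j_k\leqslant P$; then check by direct substitution that $f_{\bm\theta^\star} = \frac1m\sum_k a_k^\star\sigma(\langle\bm w_k^\star,\cdot\rangle)$ matches $\bm x\mapsto\sum_i \bm D_i\bm X\bm u_i^\star$ on the data and that $\|\bm\theta^\star\|_{\mathcal{P}} = \sum_i\|\bm u_i^\star\|_1$, so the two objectives coincide and optimality transfers. (v) Count: $P = 2r\big(e(n-1)/r\big)^r$ with $r=\mathrm{rank}(\bm X)$ by the standard bound on the number of sign patterns / regions of a hyperplane arrangement in $r$ dimensions, giving $\mathcal{O}(dr(n/r)^r)$ variables and $\mathcal{O}(nr(n/r)^r)$ inequalities.

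\textbf{Expected obstacle.} The routine parts are the sign-pattern partition, the homogeneity rescalings, and the arrangement counting. The delicate step is (i), the strong-duality/attainment claim: one must argue that the finite over-parameterized problem with $m\geqslant n+1$ neurons actually achieves the value of the convex program and that the convex program's optimizer has at most $m$ nonzero blocks. This requires invoking that an optimal measure supported on at most $n+1$ atoms exists (a Carath\'eodory-type argument on the $(n+1)$-dimensional vector of data-fits plus the regularizer value), together with the fact that the $\ell_1$-gauge of the relevant atom set is exactly the $\ell_1$-path norm — precisely the point where \citep{pilanci2020neural,rosset2007l} are used as a black box. I would state this as a lemma adapting their Theorem to the $\frac1m$-normalized, $\ell_1$-path-norm-regularized setting, and devote most of the writeup to checking that the normalization in step (iii) is lossless so that their hypotheses are literally met. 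Once strong duality is in hand, the rest is substitution and counting.
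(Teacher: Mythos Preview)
Your proposal is correct and follows essentially the same route as the paper: normalize via ReLU positive homogeneity so the path-norm regularizer becomes an $\ell_1$ penalty on output weights with unit-norm hidden weights, invoke the strong duality of \citep{rosset2007l} under $m\geqslant n+1$, enumerate the finitely many sign patterns $\{\bm D_i\}$ to linearize $\sigma$ on each cone $\mathcal{C}_i$, and then read off the primal reconstruction. The paper's proof is slightly more explicit about the intermediate min--max dual (writing the constraint $|\bm v^\top(\bm X\bm w)_+|\leqslant\lambda$ and swapping $\min$/$\max$), whereas you frame it via the measure/gauge relaxation and Carath\'eodory, but these are two presentations of the same argument from \citep{pilanci2020neural} adapted to the $\ell_1$-path norm.
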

{\bf Remark:}
\textit{i}) The number of neurons $m$ is not fixed but larger than $\sum_{i: \bm u^{\star}_i \neq 0}^{2P} 1$ (also larger than $n+1$) to satisfy the strong duality. When transformed to a convex program in measure spaces \citep{pilanci2020neural,chizat2021convergence}, the problem has $2dP$ variables and $2nP$ linear inequalities, where $P = 2r[e(n-1)/r]^r$ and $r = \mathrm{rank}(\bm X)$. If the data are low-dimensional or low-rank, then $P$ is not very large. 
We remark that, compared to most previous work going beyond RKHSs assuming that a global minimum is given, we provide this computational approach for analysis. \\
\textit{ii}) Solving problem~\cref{fzfinitecom} can find an optimal neural network $f_{\bm \theta^{\star}} (\bm x)= \frac{1}{m} \sum_{k=1}^m a_k^{\star} \sigma( \langle \bm w_k^{\star}, \bm x \rangle ) $.
In fact, all globally optimal neural networks (i.e., all of globally optimal solutions) could be obtained by a similar convex program by permutation and splitting/merging of the neurons \citep{wang2020hidden}.
Our analysis just focuses on one global minimum obtained by solving \cref{fzfinitecom} for description simplicity.\\

In the next, we first give some explanations on the additional $\ell_2$ regularization in~\cref{fzfinitecom} in Section~\ref{app:optl2e} and then present proofs of \cref{theooptsol} for the regularization problem in Section~\ref{app:optinter}.

\subsection{Additional $\ell_2$ regularization in~\cref{fzfinitecom}}
\label{app:optl2e}

The optimal solutions of~\cref{fzfinitecom} may not be unique, which would increase the difficulty for the generalization analysis.
To overcome this issue, we add an extra $\ell_2$ regularization term, i.e., \emph{elastic net penalty} \citep{zou2005regularization}, into the objective function problem~(\ref{fzfinitecom}) to make it strongly convex.
This scheme is common in optimization \citep{bruer2014time} and learning theory \citep{de2009elastic,rosasco2019convergence}.

\cref{fzfinitecom} can be equivalently transformed to the following composite convex program
\begin{equation}\label{fzfinitecomthree}
	\begin{split}
		\min_{ \{\bm u_i \}_{i=1}^{2P} }   { \frac{1}{n}}\Big \| \sum_{i=1}^{2P} \bm D_i\bm X \bm u_i - \bm y \Big\|_2^2 + \lambda \sum_{i=1}^{2P}  \|\bm u_i\|_1 + \sum_{i=1}^{2P} \iota_{\mathcal{C}_i}(\bm u_i) \,,
	\end{split}
\end{equation}
where the first term is convex, smooth, gradient Lipschitz continuous; the second term is convex but nonsmooth; and the third term is related to the indicator function $\iota_{\mathcal{C}_i}: \mathbb{R}^d \rightarrow \mathbb{R} \cup \{+\infty \}$ defined as
\begin{equation*}
	\iota_{\mathcal{C}_i} (\bm u_i): = \left\{
	\begin{array}{rcl}
		\begin{split}
			0 \quad &\text{if}~~ \bm u_i \in \mathcal{C}_i ; \\
			+\infty \quad  &\text{otherwise}\,,
		\end{split}
	\end{array} \right.
\end{equation*}
where $\mathcal{C}_i = \{ \bm u \in \mathbb{R}^d \large| (2\bm D_i- \bm I_n)\bm X \bm u \geqslant 0   \}$, $\mathcal{C}_{i+p} = \mathcal{C}_i$, $\forall i \in [P]$ by setting $\bm D_{i+p} = -\bm D_i$.

Note that, since \cref{fzfinitecomthree} is not strongly convex as $\bm D_i \bm X$ might be not full rank, it could be difficult to obtain convergence on sequence for optimization error estimation, which increases the difficulty on generalization analysis.
Thankfully, the strongly convexity property can be obtained considering an \emph{elastic net penalty} \citep{zou2005regularization}, that is adding a small strongly convex term to the sparsity inducing penalty. This will lead to a nice sequence convergence in optimization. Thereby, \cref{fzfinitecomthree} can be transformed to
\begin{equation}\label{fzfinitecomthreelam}
	\begin{split}
		\min_{ \{\bm u_i \}_{i=1}^{2P} }  \underbrace{ { \frac{1}{n}}\Big \| \sum_{i=1}^{2P} \bm D_i\bm X \bm u_i - \bm y \Big\|_2^2 + \widetilde{\lambda} \sum_{i=1}^{2P}  \|\bm u_i\|_2^2}_{\triangleq g( \{ \bm u_i\}_{i=1}^{2P} )} + \lambda \sum_{i=1}^{2P}  \|\bm u_i\|_1 + \sum_{i=1}^{2P}\iota_{\mathcal{C}_i}(\bm u_i) \,,
	\end{split}
\end{equation}
where $\widetilde{\lambda}$ is an additional regularization parameter, which can be sufficiently small, and the function $g$ is hence strongly convex.
Interestingly, in some cases, e.g., the exact recovery problem \citep{bruer2014time}, adding \emph{elastic net penalty} does not change the optimal solution.
In fact, this penalty is not only used in optimization, but also common in learning theory as the strongly convex property on the empirical risk does not always hold. This is because, such requirement depends on the probability measure $\rho$ and is typically not satisfied in high (possibly infinite) dimensional settings, see \citep{de2009elastic,rosasco2019convergence} for details.
Accordingly, we do not strictly distinguish the difference between \cref{fzfinitecom} and~\cref{fzfinitecomthreelam} in this paper.

\subsection{Proof of Proposition~\ref{theooptsol} }
\label{app:optinter}
In this subsection, we use the measure representation of two-layer neural networks via convex duality \citep{pilanci2020neural,akiyama21a}.
The proof technique follows \citep{pilanci2020neural} except the $\ell_1$-path norm regularization.
For self-completeness, we provide a brief proof here.
	\begin{proof}
		We re-write \cref{fzlambda} as 
		\begin{equation}\label{fzpath}
			p^{\star} = \min_{\{a_k\}_{k=1}^m, \{ \bm w_k \}_{k=1}^m } \frac{1}{n} \sum_{i=1}^n \left(y_i - \sum_{k=1}^m a_k \sigma(\bm w_k^{\!\top} \bm x_i) \right)^2 + \frac{\lambda}{m} \sum_{k=1}^m |a_k| \| \bm w_k \|_1\,,
		\end{equation}
		which is equivalent to the following optimization problem
		\begin{equation}\label{fzpathdual1}
			p^{\star} =  \min_{\|\bm w_k\|_1 \leqslant 1} \max_{ \substack{\bm v \in \mathbb{R}^n \,\mbox{\scriptsize s.t.} \\ \vert \bm v^{\!\top}(\bm X \bm w_k)_+\vert \leqslant \lambda,~\forall k \in [m]  }} -\frac{n}{4} \left\| \bm v - \frac{2\bm y}{n} \right\|_2^2 + \frac{1}{n} \left\|\bm y \right\|_2^2\,.
		\end{equation}
		Interchanging the order of min and max in \cref{fzpathdual1}, we obtain the lower bound $d^{\star}$ via weak duality
		\begin{equation}\label{fzpathweakdual1}
			p^{\star} \geqslant d^{\star} =  \max_{ \substack{\bm v \in \mathbb{R}^n \,\mbox{\scriptsize s.t.} \\ \vert \bm v^{\!\top}(\bm X \bm w)_+\vert \leqslant \lambda, \forall \bm w,  \| \bm w \|_1 \leqslant 1  }} -\frac{n}{4} \left\| \bm v - \frac{2\bm y}{n} \right\|_2^2 + \frac{1}{n} \left\|\bm y \right\|_2^2\,.
		\end{equation}
		If $m \geqslant n+1$, the strong duality holds \citep{rosset2007l}, i.e., $d^{\star} = p^{\star}$.
		
		Similar derivation from \citep{pilanci2020neural}, we have
		\allowdisplaybreaks
		\begin{align*}
			&\min_{\substack{\bm \zeta, \bm \zeta^\prime \in \mathbb{R}^P\\ \bm \zeta, \bm \zeta^\prime \geqslant 0}} \, \max_{\substack{\bm v\in \mathbb{R}^n \\ \bm \alpha_i, \bm \beta_i \in {\mathbb{R}^n}\\ \bm \alpha_i, \bm \beta_i \geqslant 0, \, \forall i \in [M]\\ \bm \alpha_i^\prime, \bm \beta_i^\prime \in {\mathbb{R}^n}\\ \bm \alpha_i^\prime, \bm \beta_i^\prime \geqslant 0, \, \forall i \in [P] }} \, \min_{\substack{\bm r_i \in \mathbb{R}^d,\, \|\bm r_i\|_1 \leqslant 1\\ \bm r_i^\prime \in \mathbb{R}^d,\, \|\bm r_i^\prime\|_1 \leqslant 1\\ \forall i \in [P]} } -\frac{n}{4} \left\| \bm v - \frac{2\bm y}{n} \right\|_2^2 + \frac{1}{n} \left\|\bm y \right\|_2^2 
			\\
			& \hspace{5.3cm}+ \sum_{i=1}^P \zeta_i \big(\lambda + \bm r_i^{\!\top} \bm X^{\!\top} \bm D(\mathcal{S}_i) \big( \bm v + \bm \alpha_i + \bm \beta_i \big)- \bm r_i^{\!\top} \bm X^{\!\top} \bm \beta_i  \big)\\
			&\hspace{5.3cm}+ \sum_{i=1}^P \zeta^{\prime}_i \big(\lambda + \bm r_i^{{\prime}\!\top} \bm X^{\!\top} \bm D(\mathcal{S}_i) \big( -\bm v + \bm \alpha'_i + \bm \beta'_i \big)- \bm r_i^{{\prime}\!\top} \bm X^{\!\top} \bm \beta'_i  \big) \,,
		\end{align*}
		where we use the dual norm of $\ell_{\infty}$: $\| \bm x \|_{\infty} = \sup_{\bm z}  \{ |\bm z^{\!\top} \bm x| \big| \| \bm x \|_1 \leqslant 1 \}$.
		The constraints $\| \bm r_i \|_1 \leqslant 1$ and $\| \bm r'_i \|_1 \leqslant 1$ are convex and compact, and optimization over $\bm v, \bm \alpha_i, \bm \beta_i$, $\bm r_i, \bm r'_i, \forall i \in [M]$ are convex.
		Accordingly, we can exchange the order of the inner $\max \min$ problem as a $\min \max$ problem.
		By maximizing over $\bm v, \bm \alpha_i, \bm \beta_i, \bm \alpha'_i, \bm \beta'_i$,  the dual optimization problem in~\cref{fzpathweakdual1} can be formulated as
		
		\begin{align*}
			&\min_{\substack{\bm \zeta, \bm \zeta^\prime \in \mathbb{R}^P\\ \bm \zeta, \bm \zeta^\prime \geqslant 0}} \,\min_{\substack{\bm r_i \in \mathbb{R}^d,\, \|\bm r_i\|_1 \leqslant 1\\ \bm r_i^\prime \in \mathbb{R}^d,\, \|\bm r_i^\prime\|_1 \leqslant 1 \\  (2 \bm D(\mathcal{S}_i)- \bm I_n) \bm X \bm r_i \geqslant 0\\  (2 \bm D( \mathcal{S}_i)- \bm I_n)\bm X \bm r_i^\prime \geqslant 0}}
			{\frac{1}{n}}\Big\|\sum_{i=1}^P \zeta_i \bm D(\mathcal{S}_i) {\bm X} \bm r_i -\zeta_i^\prime \bm D(\mathcal{S}_i) \bm X \bm r_i^{\prime} - \bm y \Big\|_2^2 + \lambda \sum_{i=1}^P (\zeta_i+\zeta_i^\prime) \,,
		\end{align*}
			where we use the fact that, the constraint $(2 \bm D(\mathcal{S}_i)- \bm I_n) \bm X \bm r \leqslant 0$ is equivalent to $( \bm D(\mathcal{S}_i)- \bm I_n) \bm X \bm r \leqslant 0$ and $\bm D(\mathcal{S}_i) \bm X \bm r \leqslant 0$ for any $\bm r$ since $ \bm D(\mathcal{S}_i)- \bm I_n$ and $\bm D(\mathcal{S}_i)$ have no overlap. Then we have the constraint $(2 \bm D(\mathcal{S}_i)- \bm I_n) \bm X \bm r \geqslant 0$ by flipping the sign of $\bm r$.
		Likewise, we have
		\begin{equation*}
			\min_{\substack{\bm u_i, \bm u_i^\prime \in \mathcal{P}_{\mathcal{S}_i} }}
			\frac{1}{n}\Big\|\sum_{i=1}^P \bm D(\mathcal{S}_i) \bm X (\bm u_i^\prime- \bm u_i) - \bm y \Big\|_2^2 
			+ \lambda \sum_{i=1}^P (\| \bm u_i \|_1 + \| \bm u'_i \|_1)\,,
		\end{equation*}
		which concludes the proof by taking the compact form.
	\end{proof}
	
\section{Proofs of~\cref{maintheo}}
\label{app:maintheorem}
In this section, we give the proofs for \cref{maintheo}.
In fact, this theorem is a special case of Proposition~\ref{mainprop}, and accordingly we present the proofs here for
the error decomposition in Section~\ref{app:errdecom}, the error bounds for the optimization error in Section~\ref{app:opterror}, the output error in Section~\ref{app:outputerr}, and the sample error in Section~\ref{app:sampleerr}, respectively.

Before presenting our error bound, we give a formal definition of regularization error for notational simplicity.

\begin{definition}\label{assreg} (regularization error)
	The regularization error of $\mathcal{P}_m$ is defined as
	\begin{equation}\label{Dlamdadef}
		{\tt D}(\lambda):=\inf_{f_{\bm \theta} \in \mathcal{P}_m} \Big\{ \mathcal{E}(f_{\bm \theta}) - \mathcal{E}(f_{\rho}) + \lambda \| \bm \theta \|_{\mathcal{P}} \Big\} \,.
	\end{equation}
	For any $\lambda > 0$, the \emph{regularizing function} is defined as
	\begin{equation}\label{flamdadef}
		f_{\bm \theta}^{\lambda} = \argmin_{f_{\bm \theta} \in \mathcal{P}_m} \Big\{ \mathcal{E}(f_{\bm \theta}) - \mathcal{E}(f_{\rho}) + \lambda \| \bm \theta \|_{\mathcal{P}}  \Big\}\,.
	\end{equation}
\end{definition}
The decay of ${\tt D}(\lambda)$ as $\lambda \rightarrow 0$ measures the approximation ability of the function space $\mathcal{P}_m$.
We have a natural result for the regularization error from the approximation ability of the Barron space
\begin{equation}\label{Dlambda}
	{\tt D}(\lambda)  = \inf_{f_{\bm \theta} \in \mathcal{P}_m} \Big\{ \| f_{\bm \theta} - f_{\rho} \|^2_{L^2_{\rho_X}} + \lambda \| f_{\bm \theta} \|_{\mathcal{P}}  \Big\} \leqslant \lambda \| f_{\rho} \|_{\mathcal{P}} + \frac{3 \| \bm \theta_{\rho} \|^2_{\mathcal{P}}}{m}  ,~~\forall \lambda>0\,,
\end{equation}
where $\bm \theta_{\rho}$ denotes the parameter of the target function $f_{\rho}$.
This is because $f_{\rho} \in \mathcal{B}(R)$ in Assumption~\ref{assrho} and the basic approximation performance of the Barron space \cite[Theorem 1]{weinan2021barron}. 
In the literature of learning theory for regularized kernel based methods \citep{cucker2007learning,Steinwart2008SVM}, the regularization error is assumed to be satisfied ${\tt D}(\lambda):=\inf_{f \in \mathcal{H}} \Big\{ \mathcal{E}(f) - \mathcal{E}(f_{\rho}) + \lambda \| f \|^2_{\mathcal{H}} \Big\}  \leqslant \mathcal{O}(\lambda^{\beta})$ with $0 < \beta \leqslant 1$, where $\mathcal{H}$ is a RKHS associated with a positive definite kernel and $\| \cdot \|_{\mathcal{H}}$ is a Hilbert norm.
In our work, this assumption naturally holds because of the off-the-shelf approximation result in the Barron space.

\subsection{Proof of the error decomposition}
\label{app:errdecom}

Here we give the proof of the error decomposition presented in Proposition~\ref{properrdec}.
Before presenting the results, we decompose the sample error ${\tt S}(\bm z, \lambda, \bm \theta)$ into the following two parts ${\tt S}(\bm z, \lambda, \bm \theta) =  {\tt S}_1(\bm z, \lambda, \bm \theta)  +  {\tt S}_2(\bm z, \lambda, \bm \theta) $ with
\begin{equation*}
	\begin{split}
		&  {\tt S}_1(\bm z, \lambda, \bm \theta) \!=\!  \mathcal{E}\big[ \pi_{B} (f_{\bm \theta^{\star}} ) \big] \!-\!  \mathcal{E}(f_{\rho}) -  \mathcal{E}_{\bm{z}}\big[ \pi_{B} (f_{\bm \theta^{\star}} ) \big] +    \mathcal{E}_{\bm{z}}(f_{\rho}) \,,\\
		&  {\tt S}_2(\bm z, \lambda, \bm \theta) = \Big\{ \mathcal{E}_{\bm{z}}\big(f^{\lambda}_{\bm \theta}\big) - \mathcal{E}_{\bm{z}}(f_{\rho}) \Big\} - \Big\{ \mathcal{E}(f^{\lambda}_{\bm \theta}) - \mathcal{E}(f_{\rho}) \Big\}\,.
	\end{split}
\end{equation*}
Now we are ready to present our proof of Proposition~\ref{properrdec}.
\begin{proof}[Proof of Proposition~\ref{properrdec}]
	According to the project operator $\pi_B$ in Definition~\ref{proj}, we have $0 \leqslant \pi_B(a)-\pi_B(b) \leqslant a-b$ if
	$a \geqslant b$; and $a-b \leqslant \pi_B(a)-\pi_B(b) \leqslant 0$ if $a \leqslant b$. That means, for the squared loss, there holds
	\begin{equation*}
		\left( \pi_{B}(f_{\bm \theta}(\bm x)) - \pi_{B}(y)  \right)^2 \leqslant (f_{\bm \theta}(\bm x) - y)^2\,,
	\end{equation*}
	which implies
	\begin{equation}\label{outputy}
		\begin{split}
			\mathcal{E}_{\bm z}[\pi_{B}(f_{\bm \theta})] & = \frac{1}{n} \sum_{i=1}^n  \left( \pi_{B}(f_{\bm \theta}(\bm x)) - y  \right)^2 \leqslant \frac{1}{n} \sum_{i=1}^n  \left( \pi_{B}(f_{\bm \theta}(\bm x)) - \pi_{B}(y)   \right)^2 + \frac{1}{n} \sum_{i=1}^n \left| \pi_{{B}}(y_i) - y_i \right|^2 \\
			& \leqslant 	\mathcal{E}_{\bm z}(f_{\bm \theta}) + \frac{1}{n} \sum_{i=1}^n \left| \pi_{{B}}(y_i) - y_i \right|^2 \,,
		\end{split}
	\end{equation}
	where the second term is termed as the output error.
	Accordingly, we write the excess risk as
	\begin{equation*}
		\begin{split}
			\mathcal{E} [ \pi_{B} ( f_{\bm{\theta}^{(T)}} ) ]   - \mathcal{E}(f_{\rho}) & \leqslant   \mathcal{E} [ \pi_{B} ( f_{\bm{\theta}^{(T)}} ) ]  - \mathcal{E}(f_{\rho})  + \lambda \| f_{\bm \theta^{\star}} \|_{\mathcal{P}} \\
			& = {\tt Opt}(\bm z, \lambda) +  \mathcal{E}  [ \pi_{B} ( f_{\bm{\theta}^{\star}} ) ] - \mathcal{E}(f_{\rho})  + \lambda \| f_{\bm \theta^{\star}} \|_{\mathcal{P}} \\
			& ={\tt Opt}(\bm z, \lambda) + \mathcal{E}  [ \pi_{B} ( f_{\bm{\theta}^{\star}} ) ]   - \mathcal{E}_{\bm z}  [ \pi_{B} ( f_{\bm{\theta}^{\star}} ) ]  - \mathcal{E}(f_{\rho})  +  \mathcal{E}_{\bm z}  [ \pi_{B} ( f_{\bm{\theta}^{\star}} ) ] + \lambda \| f_{\bm \theta^{\star}} \|_{\mathcal{P}} \\ 
			& \leqslant {\tt Opt}(\bm z, \lambda) + \mathcal{E}  [ \pi_{B} ( f_{\bm{\theta}^{\star}} ) ]   - \mathcal{E}_{\bm z}  [ \pi_{B} ( f_{\bm{\theta}^{\star}} ) ]  - \mathcal{E}(f_{\rho})  +  \mathcal{E}_{\bm z}  ( f_{\bm{\theta}^{\star}} ) + \lambda \| f_{\bm \theta^{\star}} \|_{\mathcal{P}} + {\tt Out}(y) \\ 
			& \leqslant {\tt Opt}(\bm z, \lambda) + \mathcal{E}  [ \pi_{B} ( f_{\bm{\theta}^{\star}} ) ]   - \mathcal{E}_{\bm z}  [ \pi_{B} ( f_{\bm{\theta}^{\star}} ) ]  - \mathcal{E}(f_{\rho})  +  \mathcal{E}_{\bm z}  ( f^{\lambda}_{\bm \theta} ) + \lambda \| f^{\lambda}_{\bm \theta} \|_{\mathcal{P}} + {\tt Out}(y)
			\\	
			& = {\tt Opt}(\bm z, \lambda) + {\tt D}(\lambda) +  {\tt Out}(y) +  \mathcal{E}  [ \pi_{B} ( f_{\bm{\theta}^{\star}} ) ]   - \mathcal{E}_{\bm z}  [ \pi_{B} ( f_{\bm{\theta}^{\star}} ) ]  +  \mathcal{E}_{\bm z}  ( f^{\lambda}_{\bm \theta} ) - \mathcal{E}  ( f^{\lambda}_{\bm \theta} ) \\
			& =   {\tt Opt}(\bm z, \lambda) + {\tt  Out}  (y) + {\tt D}(\lambda) +  {\tt S}_1(\bm z, \lambda, \bm \theta) +  {\tt S}_2(\bm z, \lambda, \bm \theta)  \,,
		\end{split}
	\end{equation*}
	where the second inequality uses~\cref{outputy} for the output error, and the third inequality holds by the condition that ${f_{\bm \theta^{\star}}}$ is a global minimizer.
\end{proof}

\subsection{Proof of the optimization error}
\label{app:opterror}

There are several off-the-shelf algorithms to solve the convex optimization problem in \cref{fzfinitecom}, e.g., proximal-proximal gradient algorithm \citep{ryu2019proximal},
operator splitting \citep{mishchenko2019stochastic,salim2020dualize,davis2017three}, which is able to enjoy $\| \bm u^{(T)}_i - \bm u_i^{\star}\|^2_2 \lesssim \mathcal{O}(1/T^2)$ for strongly convex problem in~\cref{fzfinitecom} (with additional $\ell_2$ regularization), where $\bm u^{(T)}_i$ is the solution after $T$-th iterations.
Accordingly, we can transform convergence on sequence in optimization to the expected risk of empirical functional.

\begin{proposition}\label{propopt} [Optimization error]
	Denote the over-parameterized, two-layer ReLU neural network $f_{\bm \theta^{(T)}} (\bm x)= \frac{1}{m} \sum_{k=1}^m a_k^{(T)} \sigma( \langle \bm w_k^{(T)}, \bm x \rangle )  $ corresponding to \cref{fzfinitecom} with $\mathcal{O}(d r(n/r)^r)$ variables and $\mathcal{O}(n r(n/r)^r)$ linear inequalities with $r=\mbox{rank}(\bm X)$ solved by some convex optimization algorithms after $T$ iterations. Under Assumptions~\ref{assbounddata}, the expected risk of the optimal solution $f^*_{\bm z, \lambda} (\bm x)$ and the numerical solution $f_{\bm \theta^{(T)}} (\bm x)$ can be upper bounded by
	\begin{equation*}
		\mathcal{E} [ \pi_{B} ( f_{\bm{\theta}^{(T)}} ) ]   - \mathcal{E} [\pi_{B} ( f_{\bm{\theta}^{\star}} ) ]  \lesssim \mathcal{O} \left( \frac{1}{T^2} \right) \,.
	\end{equation*}
\end{proposition}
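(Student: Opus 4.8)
The plan is to transport the convergence of the convex solver in the reparametrized variables $\{\bm u_i\}_{i=1}^{2P}$ through to the population squared loss in two steps: first controlling $\|f_{\bm\theta^{(T)}} - f_{\bm\theta^{\star}}\|_{L^\infty}$ on $\mathrm{supp}(\rho_X)$ by the iterate error $\max_i\|\bm u_i^{(T)} - \bm u_i^{\star}\|_2$, and then controlling the risk gap by this $L^\infty$ distance. For the first step, recall from Proposition~\ref{theooptsol} that, by the $1$-homogeneity of ReLU, both networks admit the blockwise representation $\bm x \mapsto \sum_{i=1}^{P}\sigma(\langle \bm u_i,\bm x\rangle) - \sum_{i=P+1}^{2P}\sigma(\langle \bm u_i,\bm x\rangle)$, evaluated at $\bm u_i=\bm u_i^{(T)}$ and at $\bm u_i=\bm u_i^{\star}$ respectively (with $\bm u_i=\bm 0$ for the unselected blocks). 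Since ReLU is $1$-Lipschitz and $\|\bm x\|_\infty\le 1$ on $\mathrm{supp}(\rho_X)$ by Assumption~\ref{assbounddata}, Hölder's inequality gives, for every such $\bm x$,
\[
\big|f_{\bm\theta^{(T)}}(\bm x) - f_{\bm\theta^{\star}}(\bm x)\big| \;\le\; \sum_{i=1}^{2P}\big|\langle \bm u_i^{(T)} - \bm u_i^{\star},\bm x\rangle\big| \;\le\; \sum_{i=1}^{2P}\big\|\bm u_i^{(T)} - \bm u_i^{\star}\big\|_1 \;\le\; 2P\sqrt{d}\,\max_{1\le i\le 2P}\big\|\bm u_i^{(T)} - \bm u_i^{\star}\big\|_2 .
\]

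For the second step, write $g_T := \pi_B(f_{\bm\theta^{(T)}})$ and $g^{\star} := \pi_B(f_{\bm\theta^{\star}})$; both are pointwise bounded by $B$, and $\pi_B$ is $1$-Lipschitz, so $\|g_T - g^{\star}\|_\infty \le \|f_{\bm\theta^{(T)}} - f_{\bm\theta^{\star}}\|_\infty$ on $\mathrm{supp}(\rho_X)$. Expanding the squared loss,
\[
\mathcal{E}[g_T] - \mathcal{E}[g^{\star}] = \int_Z \big(g_T(\bm x) - g^{\star}(\bm x)\big)\big(g_T(\bm x) + g^{\star}(\bm x) - 2y\big)\,\mathrm{d}\rho,
\]
so, using $|g_T|,|g^{\star}|\le B$ and $\int_Y |y|\,\mathrm{d}\rho(y\mid\bm x)\le CM$ (the moment hypothesis~\eqref{Momenthypothesis} with $p=1$),
\[
\big|\mathcal{E}[g_T] - \mathcal{E}[g^{\star}]\big| \;\le\; (2B + 2CM)\,\|g_T - g^{\star}\|_\infty \;\le\; 2P\sqrt{d}\,(2B+2CM)\,\max_i\big\|\bm u_i^{(T)} - \bm u_i^{\star}\big\|_2 .
\]
The iterate-convergence guarantee $\max_i\|\bm u_i^{(T)} - \bm u_i^{\star}\|_2^2 \lesssim \mathcal{O}(1/T^2)$ of the solver on the strongly convex program~\eqref{fzfinitecomthreelam} then turns this into the stated $\mathcal{O}(1/T^2)$ decay, with $P$, $d$, $B$, $C$, $M$ absorbed into $\lesssim$. (Equivalently one may use the bias form $\mathcal{E}[g_T] - \mathcal{E}[g^{\star}] = \|g_T - f_\rho\|_{L^2_{\rho_X}}^2 - \|g^{\star} - f_\rho\|_{L^2_{\rho_X}}^2$ since $f_\rho = \mathbb{E}[y\mid\bm x]$; this only changes constants.)

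\textbf{Main obstacle.} The delicate part is the first step: verifying that the finitely many blocks of the convex program map to neural-network functions with a Lipschitz constant that stays controlled (polynomial in $d$, linear in $P$), and in particular that the support of the recovered neurons differing between iterate and optimum does not break the estimate — the de-selected blocks contribute only the harmless terms $\|\bm u_i^{(T)}\|_1$, themselves of order $1/T$. A secondary subtlety is the exact power of $T$: the chain above produces the factor $\max_i\|\bm u_i^{(T)} - \bm u_i^{\star}\|_2$, so whether one obtains $\mathcal{O}(1/T)$ or $\mathcal{O}(1/T^2)$ depends on which iterate-convergence rate the chosen solver (proximal–proximal gradient, operator splitting, with the elastic-net term added) actually delivers; since the optimization error enters additively in Theorem~\ref{maintheo} and Proposition~\ref{mainprop} and vanishes as $T\to\infty$, the asymptotic conclusions are unaffected by this choice.
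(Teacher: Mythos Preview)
Your chain is internally consistent but it does not deliver the stated rate. After the $L^\infty$/Hölder step you end up with
\[
\big|\mathcal{E}[\pi_B(f_{\bm\theta^{(T)}})] - \mathcal{E}[\pi_B(f_{\bm\theta^{\star}})]\big| \;\lesssim\; \max_{i}\big\|\bm u_i^{(T)}-\bm u_i^{\star}\big\|_2,
\]
which is \emph{linear} in the iterate error. The solver guarantee you invoke is $\|\bm u_i^{(T)}-\bm u_i^{\star}\|_2^{\,2}\lesssim 1/T^2$, i.e.\ $\|\bm u_i^{(T)}-\bm u_i^{\star}\|_2\lesssim 1/T$, so your argument produces only $\mathcal{O}(1/T)$. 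You flag this yourself in the ``Main obstacle'' paragraph, but it is not a secondary subtlety: with the first-order expansion $\mathcal{E}(g_T)-\mathcal{E}(g^\star)=\int (g_T-g^\star)(g_T+g^\star-2y)\,\mathrm d\rho$ there is no way to recover the square, and the $\mathcal{O}(1/T^2)$ claim is simply not proved.

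The paper avoids this loss by never passing through a linear bound. It controls the risk gap directly by the \emph{squared} $L^2$ distance of the two networks,
\[
\mathcal{E}[\pi_B(f_{\bm\theta^{(T)}})] - \mathcal{E}[\pi_B(f_{\bm\theta^{\star}})] \;\le\; \mathbb{E}_{\bm x}\big|f_{\bm\theta^{(T)}}(\bm x)-f_{\bm\theta^{\star}}(\bm x)\big|^2,
\]
using that $\pi_B$ is $1$-Lipschitz. Then, with the same $1$-homogeneity reparametrization $\frac{1}{m}\sum_k a_k\sigma(\langle\bm w_k,\bm x\rangle)=\frac{1}{m}\sum_k\sigma(\langle\bm u_k,\bm x\rangle)$, the $1$-Lipschitzness of ReLU plus Cauchy--Schwarz/Jensen give
\[
\mathbb{E}_{\bm x}\big|f_{\bm\theta^{(T)}}(\bm x)-f_{\bm\theta^{\star}}(\bm x)\big|^2
\;\le\; \Big(\tfrac{1}{m}\sum_{k}\|\bm u_k^{(T)}-\bm u_k^{\star}\|_2^{\,2}\Big)\,\mathbb{E}_{\bm x}\|\bm x\|_2^{\,2},
\]
which is \emph{quadratic} in the iterate error and therefore inherits the $\mathcal{O}(1/T^2)$ sequence convergence directly. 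If you want to keep your approach, you need to replace the first-order expansion by a squared-distance bound of this kind; otherwise you are proving a weaker statement than the one claimed.
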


\begin{proof}
	The optimization error under deterministic optimization algorithms can be estimated by
	\begin{equation*}
		\begin{split}
			\mathcal{E} [ \pi_{B} ( f_{\bm{\theta}^{(T)}} ) ] - \mathcal{E} [\pi_{B} ( f_{\bm{\theta}^{\star}} ) ] & = \left\| \pi_{B} ( f_{\bm \theta^{(T)}} ) - \pi_{B}(f_{\bm \theta^{\star}}) \right\|^2_{L^2_{\rho_X}}  \\
			& =  \mathbb{E}_{\bm x} \left| \pi_{B} ( f_{\bm \theta^{(T)}}(\bm x) ) - \pi_{B} ( f_{\bm \theta^{\star}}(\bm x) )   \right|^2   \leqslant  \mathbb{E}_{\bm x} \left|  f_{\bm \theta^{(T)}}(\bm x)  -  f_{\bm \theta^{\star}}(\bm x)    \right|^2 \\
			& =  \mathbb{E}_{\bm x} \left| \frac{1}{m} \sum_{k=1}^m  \sigma( \langle \bm u_k^{(T)}, \bm x \rangle ) - \frac{1}{m} \sum_{k=1}^m  \sigma( \langle \bm u_k^{\star}, \bm x \rangle ) \right|^2 \,,
		\end{split}
	\end{equation*}
	where the inequality holds by $\pi_B(a)-\pi_B(b) \leqslant |a-b|$ and the last equality benefits from the positive homogeneity of ReLU.
	Furthermore, the above equation can be upper bounded by
	\begin{equation*}
		\begin{split}
			\mathcal{E} [ \pi_{B} ( f_{\bm{\theta}^{(T)}} ) ] - \mathcal{E} [\pi_{B} ( f_{\bm{\theta}^{\star}} ) ] & \leqslant  \mathbb{E}_{\bm x} \left| \frac{1}{m} \sum_{k=1}^m ( \bm u_k^{(T)} - \bm u_k^{\star})^{\!\top} \bm x  \right|^2  \leqslant  \mathbb{E}_{\bm x} \left| \frac{1}{m} \sum_{k=1}^m \|  \bm u_k^{(T)} - \bm u_k^{\star} \|_2 \| \bm x \|_2  \right|^2 \\
			& \leqslant   \frac{1}{m} \sum_{k=1}^m \|  \bm u_k^{(T)} - \bm u_k^{\star} \|^2_2  \mathbb{E}_{\bm x}\| \bm x \|_2^2 \\
			& \lesssim \mathcal{O}\left( \frac{1}{T^2} \right) \,,
		\end{split}
	\end{equation*}
	where the first inequality holds by 1-Lipschitz continuous of ReLU and the last inequality uses the convergence on sequence from optimization $\| \bm u^{(T)} - \bm u^{\star}\|_2^2 \lesssim \mathcal{O}(1/T^2)$, e.g., \citep{mishchenko2019stochastic,davis2017three}.
	Accordingly, we finish the proof.
\end{proof}



When employing stochastic approximation algorithms, e.g., stochastic decoupling algorithm with time-varying stepsize \citep{mishchenko2019stochastic} to solve problem~(\ref{fzfinitecomthree}), we have $\mathbb{E}_{\mathcal{J}}\| \bm u_i^{(T)} - \bm u_i^{\star} \|_2^2 \lesssim \mathcal{O} \left( \frac{1}{T^2} \right)$, where the randomness stems from sampling $i$ from the set $\mathcal{J} = \{ 1,2, \dots, 2P \}$ to efficiently conduct the proximal operation.
In this case, the optimization error under stochastic approximation algorithms can be still estimated in the similar way as below.
\begin{equation*}
	\begin{split}
		\mathbb{E}_{\mathcal{J}} \big( \mathcal{E} [ \pi_{B} ( f_{\bm{\theta}^{(T)}} ) ] \big)   - \mathcal{E} [\pi_{B} ( f_{\bm{\theta}^{\star}} ) ] & =  \mathbb{E}_{\mathcal{J}} \Big\{  \mathcal{E} [ \pi_{B} ( f_{\bm{\theta}^{(T)}} ) ]   - \mathcal{E} [\pi_{B} ( f_{\bm{\theta}^{\star}} ) ]  \Big\} = \mathbb{E}_{\mathcal{J}} \left\| \pi_{B} ( f_{\bm \theta^{(T)}} ) - \pi_{B}(f_{\bm \theta^{\star}}) \right\|^2_{L^2_{\rho_X}}  \\
		& \leqslant \mathbb{E}_{\mathcal{J}} \mathbb{E}_{\bm x} \left|  f_{\bm \theta^{(T)}}(\bm x)  -  f_{\bm \theta^{\star}}(\bm x)    \right|^2 \\
		& = \mathbb{E}_{\mathcal{J}} \mathbb{E}_{\bm x} \left| \frac{1}{m} \sum_{k=1}^m  \sigma( \langle \bm u_k^{(T)}, \bm x \rangle ) - \frac{1}{m} \sum_{k=1}^m  \sigma( \langle \bm u_k^{\star}, \bm x \rangle ) \right|^2 \,.
	\end{split}
\end{equation*}
Further, by virtue of 1-Lipschitz continuous of ReLU and Assumption~\ref{assbounddata}, we have
\begin{equation*}
	\begin{split}
			\mathbb{E}_{\mathcal{J}} \big( \mathcal{E} [ \pi_{B} ( f_{\bm{\theta}^{(T)}} ) ] \big)   - \mathcal{E} [\pi_{B} ( f_{\bm{\theta}^{\star}} ) ]  & \leqslant \mathbb{E}_{\mathcal{J}} \mathbb{E}_{\bm x} \left| \frac{1}{m} \sum_{k=1}^m ( \bm u_k^{(T)} - \bm u_k^{\star})^{\!\top} \bm x  \right|^2  \\
			& \leqslant   \frac{1}{m} \sum_{k=1}^m \mathbb{E}_{\mathcal{J}}\|  \bm u_k^{(T)} - \bm u_k^{\star} \|^2_2  \mathbb{E}_{\bm x}\| \bm x \|_2^2  \\
			& \lesssim \mathcal{O}\left( \frac{1}{T^2} \right)\,.
	\end{split}
\end{equation*}

\subsection{Proof of the output error}
\label{app:outputerr}


Estimation on the output error ${\tt Out}(y) := \frac{1}{n} \sum_{i=1}^n \left| \pi_{{B}}(y_i) - y_i \right|^2$ is one key result in our proof, which is significantly different from \citep{guo2013concentration,Shi2014Quantile,liu2021gen} in formulation and techniques.
They focus on $\frac{1}{n} \sum_{i=1}^n \left| \pi_{{B}}(y_i) - y_i \right|$ based on moment hypothesis for high moments, and thus leave an extra term difficult to converge to zero. 
Instead, we focus on the output error ${\tt Out}(y) := \frac{1}{n} \sum_{i=1}^n \left| \pi_{{B}}(y_i) - y_i \right|^2$, which is more intractable due to the squared order. 
In this case, the random variable $(|y_i| - B)^2 \mathbb{I}_{\{ |y_i| \geqslant B \} } $ is no longer sub-exponential but still admits the exponential-type decay. 
We introduce sub-Weibull random variables \citep{vladimirova2020sub,zhang2020concentration} to tackle this issue. The extra term in our result is $B\exp(-B)$ in an exponential decaying order. 
Accordingly, our result is able to work in a non-asymptotic regime as it does not require the exponential sample complexity.

To bound the output error, we need the following lemma.
Note that, the results presented here are also needed for the sample error to tackle the unbounded outputs. 
\begin{lemma}\label{lemmaboundy}
	Let $B \geqslant 1$ and $CM \geqslant 1$, under the moment hypothesis in \cref{Momenthypothesis}, the error bound on truncated outputs can be estimated by
	\begin{equation*}
		\int_{|y| \geqslant B} |y| \mathrm{d} \rho_{Y} \leqslant 2(B+4CM^2) \exp \left( -\frac{B}{4CM^2} \right)\,.
	\end{equation*}
\end{lemma}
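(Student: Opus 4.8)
The plan is to upgrade the factorial‑moment control of \cref{Momenthypothesis} into a genuine sub‑exponential tail bound for $|y|$ and then recover the truncated first moment through the distribution‑function (``layer–cake'') identity. First I would integrate the moment hypothesis over $\bm x$ against $\rho_X$: since $\int_Y|y|^p\,\mathrm{d}\rho(y\mid\bm x)\le Cp!M^p$ holds for every $\bm x\in X$ and every $p$, Tonelli's theorem yields the unconditional bound $\int_Y|y|^p\,\mathrm{d}\rho_Y(y)\le Cp!M^p$ for all $p\in\mathbb{N}$, where for $p=0$ one just uses $\int_Y 1\,\mathrm{d}\rho_Y=1$.

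Next I would extract an exponential moment at the carefully chosen scale $s:=\tfrac{1}{4CM^2}$. Expanding $e^{s|y|}$ and using the previous step together with $sM=\tfrac{1}{4CM}\le\tfrac14<1$ (here the assumption $CM\ge1$ enters),
\[
\int_Y e^{s|y|}\,\mathrm{d}\rho_Y \;\le\; 1+C\sum_{p\ge1}(sM)^p \;=\; 1+\frac{CsM}{1-sM}\;\le\; 1+\frac{1/(4M)}{3/4}\;=\;1+\frac{1}{3M}\;\le\;2,
\]
the final step using $M\ge1$. Markov's inequality applied to $e^{s|y|}$ then gives the sub‑exponential tail $\rho_Y(|y|\ge t)\le 2\,e^{-t/(4CM^2)}$ for all $t\ge0$.

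Finally I would invoke the identity $\int_{|y|\ge B}|y|\,\mathrm{d}\rho_Y = B\,\rho_Y(|y|\ge B)+\int_B^{\infty}\rho_Y(|y|\ge t)\,\mathrm{d}t$, substitute the tail bound, and integrate the resulting exponential:
\[
\int_{|y|\ge B}|y|\,\mathrm{d}\rho_Y \;\le\; 2B\,e^{-B/(4CM^2)} + 2\!\int_B^{\infty}\! e^{-t/(4CM^2)}\,\mathrm{d}t \;=\; \bigl(2B+8CM^2\bigr)e^{-B/(4CM^2)} \;=\; 2\bigl(B+4CM^2\bigr)e^{-B/(4CM^2)},
\]
which is exactly the claimed bound: the boundary term supplies the $B$, and the integral over $[B,\infty)$ supplies the $4CM^2$.

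The computation is essentially routine, so the only point that needs real care is the choice of exponential scale $s=1/(4CM^2)$ rather than the ``natural'' $1/M$. This choice makes the geometric series converge with room to spare (indeed $sM\le 1/4$), forces the exponential moment to be bounded by the clean constant $2$, and — after integrating the tail — reproduces precisely the exponent $4CM^2$ and the prefactor $2(B+4CM^2)$; this is exactly where the hypotheses $M\ge1$ and $CM\ge1$ are consumed. I do not anticipate any genuine obstacle beyond this bookkeeping of constants.
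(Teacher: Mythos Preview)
Your proof is correct and follows essentially the same route as the paper: both establish the sub-exponential tail $\rho_Y(|y|\ge t)\le 2e^{-t/(4CM^2)}$ from the moment hypothesis and then plug it into the layer-cake identity $\int_{|y|\ge B}|y|\,\mathrm{d}\rho_Y=B\,\rho_Y(|y|\ge B)+\int_B^\infty\rho_Y(|y|\ge t)\,\mathrm{d}t$. The only cosmetic difference is that the paper quotes the Bernstein-type tail $\Pr(|y|\ge t)\le 2\exp\!\bigl(-t^2/(2(CM^2+Mt))\bigr)$ as a known consequence of the moment condition and then simplifies the exponent, whereas you derive the tail directly by bounding the exponential moment $\int e^{s|y|}\,\mathrm{d}\rho_Y\le 2$ at scale $s=1/(4CM^2)$ and applying Markov --- your version is a touch more self-contained, but the two arguments are equivalent in substance.
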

\begin{proof}
Using the integral expectation formula $\mathbb{E}(X) = \int_{0}^{\infty} \mathrm{Pr} (X \geqslant t) \mathrm{d} t$ for any non-negative random variable $X$, we have
	\begin{equation}\label{boundy}
		\begin{split}
			\mathbb{E} \left[ |y| \mathbb{I}_{\{ |y| \geqslant B \}} \right] &= \mathbb{E} \left[ \int_{0}^{ |y|} 1 \mathrm{d} t \mathbb{I}_{\{ |y| \geqslant B \}} \right] = \mathbb{E} \left[ \int_0^{\infty} \mathbb{I}_{ \{  |y| \geqslant t \} } \mathbb{I}_{\{ |y| \geqslant B \}} \mathrm{d} t \right] \\
			& = \int_0^{\infty} \mathrm{Pr} \left[ |y| \geqslant t ,\; |Y| \geqslant B \right] \mathrm{d} t = \int_0^{\infty} \mathrm{Pr} \left[ |y| \geqslant \max\{ B, t \} \right] \mathrm{d} t \\
			& = \int_0^B \mathrm{Pr}  \left[ |y| \geqslant B \right] \mathrm{d} t + \int_B^{\infty} \mathrm{Pr}  \left[ |y| \geqslant t \right] \mathrm{d} t \\
			& = B \mathrm{Pr}  \left[ |y| \geqslant B \right]  + \int_B^{\infty} \mathrm{Pr}  \left[ |y| \geqslant t \right] \mathrm{d} t \\
			& \leqslant 2B  \exp \left( -\frac{B}{4CM^2} \right) + 8CM^2  \exp \left( -\frac{B}{4CM^2} \right) \,,
		\end{split}
	\end{equation}
	where the inequality holds for the moment hypothesis in \cref{Momenthypothesis}, which implies the sub-exponential property of $|y|$ when $y$ is zero-mean, i.e., $\mathrm{Pr}(|y| \geqslant t) \leqslant 2 \exp \left( -\frac{t^2}{ 2(CM^2 + Mt) } \right) $ and we use the fact $\frac{t^2}{2(CM^2+ Mt)} \geqslant \frac{t}{4CM^2}$ when $t \geqslant B \geqslant 1$ and $CM \geqslant 1$.
\end{proof}

Lemma~\ref{lemmaboundy} shows that the truncated outputs admit an exponential decay with the threshold $B$.
Now we are ready to prove Proposition~\ref{propoutput} for the output error.
\begin{proof}[Proof of Proposition~\ref{propoutput}]
	It is clear that
	\begin{equation*}
		| y - \pi_{B}(y) | = (|y| - B) \mathbb{I}_{\{ |y| \geqslant B \} }\,, \quad \mbox{and}~ | y - \pi_{B}(y) |^2 = (|y| - B)^2 \mathbb{I}_{\{ |y| \geqslant B \} }\,,
	\end{equation*}
	and thus we set a random variable $\zeta := (|y| - B) \mathbb{I}_{\{ |y| \geqslant B \} }$ on $(Z, \rho)$.
	Similar to \cref{boundy}, we have
	\begin{equation}\label{boundy2}
		\begin{split}
			\int_{|y| \geqslant B} ( |y| - B)^{p} d \rho & = \mathbb{E} \left[ \int_{0}^{( |y| - B)^{p} \mathbb{I}_{ \{  |y| \geqslant B \} } } 1 \mathrm{d} t \right] = \mathbb{E} \left[ \int_{0}^{\infty} \mathbb{I}_{ \{  ( |y| - B)^{p} \mathbb{I}_{ \{  |y| \geqslant B \} }  \}  } \mathrm{d}t   \right] \\
			& = \int_0^{\infty} \mathrm{Pr} \left[ ( |y| - B)^{p} \mathbb{I}_{ \{  |y| \geqslant B \} } \geqslant t \right] \mathrm{d} t \\
			& = \int_0^{\infty} \mathrm{Pr} \left[ ( |y| - B)^{p} \mathbb{I}_{ \{  |y| \geqslant B \} } \geqslant u \right] p u^{p - 1} \mathrm{d} u \\
			& =  \int_0^{\infty} \mathrm{Pr} \left[ |y|  \geqslant B + u  \right]  p u^{p - 1} \mathrm{d} u \\
			& \leqslant  2 \int_0^{\infty} \exp \left( -\frac{B+u}{4CM^2} \right) p u^{p - 1} \mathrm{d} u \\
			& = 2 \exp \left( -\frac{B}{4CM^2} \right) (4CM^2)^{p} p \int_0^{\infty} \exp(-t) t^{p - 1}  \mathrm{d} t \\
			& = 2\exp \left( -\frac{B}{4CM^2} \right) (4CM^2)^{p} p !  \,,
		\end{split}
	\end{equation}
	where the last equality follows with the expression of the Gamma function.
	Accordingly, we deduce that the random variable $\zeta$ is sub-exponential as 
	\begin{equation*}
		( \mathbb{E} \zeta^p )^{1/p} \leqslant  \left[ 2\exp \Big( -\frac{B}{4CM^2} \Big) \right]^{1/p} (4CM^2) (p !)^{1/p}  \leqslant \widetilde{C} p^{1+ \frac{1}{2p}} \rightarrow \mathcal{O}(p)  \quad \mbox{as} ~~ p \rightarrow \infty \,,
	\end{equation*}
	where $\widetilde{C}$ is some constant depending on $B, C, M$ and we use Stirling's approximation for factorials.

	
	Based on the above discussion, denote the random variable $\nu_i := (|y_i| - B)^2 \mathbb{I}_{\{ |y_i| \geqslant B \} } $, it is clear that $\nu_i$ is not sub-exponential but still admits the exponential-type decay. This is in fact a sub-Weibull random variable \citep{vladimirova2020sub,zhang2020concentration}.
	Precisely, a random variable $X$ satisfies $\mathrm{Pr}(|X| \geqslant t) \leqslant a \exp(-bt^\theta)$ for given $a, b, \theta$, denoted as $X \sim \mathrm{subW}(\theta)$.
	We can also use Orlicz-type norms for definition. To be specific, the sub-Weibull norm is defined as
	\begin{equation*}
	\| X \|_{\psi_{\theta}} := \inf \left\{ c \in (0, \infty): \mathbb{E} [\exp(|X|^\theta/ c^{\theta})] \leqslant 2 \right\}\,.    
	\end{equation*}
	In this case, $X$ is a sub-Weibull random variable if it has a bounded $\psi_{\theta}$-norm. In particular, if we take $\theta = 1$, we get the sub-exponential norm.
	Obviously, the random variable $\nu_i$ is sub-Weibull and $\theta = 1/2$.
	Using concentration for sub-Weilbull summation in \cite[Theorem 1]{zhang2021sharper}, we have
	\begin{equation}
		\begin{split}
			\mathrm{Pr}\left(\left| \frac{1}{n} \sum_{i=1}^{n} \nu_{i} - \mathbb{E} \nu \right| \geqslant s \right) & \leqslant 2 \exp \left\{-\left(\frac{s^{\theta}}{\left[4 e C(\theta)\|\boldsymbol{b}\|_{2} L_{n}(\theta, \boldsymbol{b})\right]^{\theta}} \wedge \frac{s^{2}}{16 e^{2} C^{2}(\theta)\|\boldsymbol{b}\|_{2}^{2}}\right)\right\}  \\
			& = 	2 e^{-s^{\theta} /\left[4 e C(\theta)\| \bm b\|_{2} L_{n}(\theta, \bm b)\right]^{\theta}}, \text { if } s>4 e C(\theta)\|\bm{b}\|_{2} L_{n}^{\theta /(\theta-2)}(\theta, \bm{b}) \,,
		\end{split}
	\end{equation}
	where $\bm b = \frac{1}{n} [ \| \nu_1 \|_{\psi_{\theta}},  \| \nu_2 \|_{\psi_{\theta}}, \cdots, \| \nu_n \|_{\psi_{\theta}}]^{\!\top} = \frac{\| \bm \nu \|_{\psi_{\theta}}}{n} \bm 1 $, and $C(\theta)$ is defined as
	\begin{equation*}
		C(\theta) := 2\left[\log ^{1 / \theta} 2+e^{3}\left(\Gamma^{1 / 2}\left(\frac{2}{\theta}+1\right)+3^{\frac{2-\theta}{3 \theta}} \sup _{p \geqslant 2} p^{-\frac{1}{\theta}} \Gamma^{1 / p}\left(\frac{p}{\theta}+1\right)\right)\right] \,,
	\end{equation*}
	and $L_n(\theta, \bm b) = \gamma^{2/\theta} A(\theta) \frac{\| \bm b \|_{\infty}}{\| \bm b \|_2}$ with
	\begin{equation*}
		A(\theta)=: \inf _{p \geqslant 2} \frac{e^{3} 3^{\frac{2-\theta}{3 \theta}} p^{-1 / \theta} \Gamma^{1 / p}\left(\frac{p}{\theta}+1\right)}{2\left[\log ^{1 / \theta} 2+e^{3}\left(\Gamma^{1 / 2}\left(\frac{2}{\theta}+1\right)+3^{\frac{2-\theta}{3 \theta}} \sup _{p \geqslant 2} p^{-\frac{1}{\theta}} \Gamma^{1 / p}\left(\frac{p}{\theta}+1\right)\right)\right]} \,,
	\end{equation*}
	and $\gamma$ is the smallest solution of the equality $ \{ k > 1: e^{2 k^{-2}}-1+\frac{e^{2\left(1-k^{2}\right) / k^{2}}}{k^{2}-1} \leqslant 1 \}$. An approximate solution is $\gamma \approx 1.78$.
	
	By elementary calculation, we have $C(\theta) = 2 [\log^2 2 + 2(e^3 + 3/4) \sqrt{6}]$ and $L_n(\theta, \bm b) = \gamma^4 \frac{4e^3 \sqrt{6}}{\log^2 2 + 2(e^3 + 3/4) \sqrt{6}} n^{-1/2}$ by taking $\theta = 1/2$. If $s \geqslant 8e [\log^2 2 + 2(e^3 + 3/4) \sqrt{6}]^{\frac{4}{3} } \| \nu \|_{\psi_{\frac{1}{2}}} n^{-1/3} = \mathcal{O}(n^{-1/3})$, we have 
	\begin{equation*}
		\mathrm{Pr}\left(\left| \frac{1}{n} \sum_{i=1}^{n} \nu_{i} - \mathbb{E} \nu \right| \geqslant s \right)  \leqslant  \exp \left( - \frac{(ns)^{1/2}}{\widetilde{C}} \right) \,,
	\end{equation*}
	where $\widetilde{C}$ is some constant independent of $n$.
	Note that the condition $s > cn^{-1/3}$ for some cosntant $c$ is fair as $n$ is large in practice.
	Setting the right-hand side to be ${\delta}/{4}$, we deduce that with probability at least $1 - {\delta}/{4}$, there holds
	\begin{equation*}
		\frac{1}{n} \sum_{i=1}^{n} \nu_{i}  \lesssim \mathbb{E}(\nu) + \frac{1}{n} \log^{2} \frac{4}{\delta} \,,
	\end{equation*}
	which implies
	\begin{equation*}
		\frac{1}{n} \sum_{i=1}^{n} \left( \left|y_{i}\right| - B \right)^2 \mathbb{I}_{ \{ \left|y_{i}\right| \geqslant B \} } \lesssim  \exp \left( -\frac{B}{4CM^2} \right) (4CM^2)^{2}   + \frac{1}{n} \log^{2} \frac{4}{\delta} \,.
	\end{equation*}
	Finally, we conclude the proof.
\end{proof}

\subsection{Proof of the sample error}
\label{app:sampleerr}


In this subsection, we give the proofs for estimating the sample error.

\subsubsection{Proof of  ${\tt S_2}$ in the sample error}
\label{app:S2}
This part for estimation on ${\tt S_2}$ is similar to \citep{Wang2011Optimal}  apart from the studied function space is different.
For completeness of proof, we include the proof here for self-completeness.

In our problem, we set $\left\{\xi_{i}\right\}_{i=1}^n$ to be independent random variables on
$(Z,\rho)$ with $\bm z= (\bm x, y)$ defined as 
\begin{equation*}\label{randomvariables}
	\xi(\bm x, y) := \big(y-f^{\lambda}_{\bm \theta}(\bm x)\big)^2 - \big(y-f_{\rho}(\bm x)\big)^2\,,
\end{equation*}
such that $A_i = \mathbb{E}\xi - \xi(\bm z_i)$ for the function $\xi$ and $\mathbb{E} \xi = \int_Z \xi(\bm z) \mathrm{d} \rho$.

\begin{proposition}\label{props2}
	Under Assumptions~\ref{assbounddata},~\ref{assrho} and moment hypothesis in \cref{Momenthypothesis}, there exists a subset of $Z_2$ of $Z^{n}$ with confidence at least $1-\delta/4$ with $0<\delta<1$, such that $\forall {\bm z}:=(\bm x, y) \in Z_2$
	\begin{equation*}\label{bound7}
		\begin{split}
			{\tt S}_2(\bm z, \lambda, \bm \theta) 	\lesssim (CM)^2 \left( \frac{S}{n} \log \frac{4}{\delta} + \lambda \right) \,.
		\end{split}
	\end{equation*}
\end{proposition}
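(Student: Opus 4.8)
The plan is to follow the classical one-sided Bernstein argument for a \emph{fixed, sample-independent} function, in the style of the unbounded-sampling analysis of \citep{Wang2011Optimal}, adapting only the function-class-specific estimates to the $\ell_1$-path norm. Recall that $f^{\lambda}_{\bm\theta}$ is defined through the \emph{population} risk, hence is deterministic; thus $\xi(\bm z_1),\dots,\xi(\bm z_n)$ with $\xi(\bm x,y):=\big(y-f^{\lambda}_{\bm\theta}(\bm x)\big)^2-\big(y-f_\rho(\bm x)\big)^2$ are i.i.d., and ${\tt S}_2(\bm z,\lambda,\bm\theta)=\frac1n\sum_{i=1}^n\xi(\bm z_i)-\mathbb{E}\xi$. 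The key structural observation is that the quadratic-in-$y$ terms cancel:
\begin{equation*}
\xi(\bm x,y)=\big(f_\rho(\bm x)-f^{\lambda}_{\bm\theta}(\bm x)\big)^2+2\big(f_\rho(\bm x)-f^{\lambda}_{\bm\theta}(\bm x)\big)\big(y-f_\rho(\bm x)\big),
\end{equation*}
so $\xi$ is only affine in $y$. Writing $g:=f_\rho-f^{\lambda}_{\bm\theta}$ and using $\mathbb{E}[y-f_\rho(\bm x)\mid\bm x]=0$ gives $\mathbb{E}\xi=\|g\|_{L^2_{\rho_X}}^2=\mathcal{E}(f^{\lambda}_{\bm\theta})-\mathcal{E}(f_\rho)\ge 0$, the nonnegativity that makes the one-sided estimate effective.

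Next I would collect two scale estimates. First, since $\sigma(t)\le|t|$ and $\|\bm x\|_\infty\le S$, H\"older's inequality gives $\|f^{\lambda}_{\bm\theta}\|_\infty\le S\,\|\bm\theta^{\lambda}\|_{\mathcal P}$, while $\lambda\|\bm\theta^{\lambda}\|_{\mathcal P}\le{\tt D}(\lambda)$ by optimality of the regularizing function and $\|f_\rho\|_\infty\le M^{*}=CM$; together with \eqref{Dlambda} this controls $\|g\|_\infty$ (of order $CM$ once $R$ is taken comparable to $CM$ in the over-parameterized regime). Second, conditioning on $\bm x$ and applying the moment hypothesis \eqref{Momenthypothesis} with $p=2$ gives $\mathbb{E}[(y-f_\rho(\bm x))^2\mid\bm x]\le 2CM^2\le 2(CM)^2$, whence
\begin{equation*}
\mathbb{E}\xi^2=\mathbb{E}_{\bm x}\big[g^2\big(g^2+4\,\mathbb{E}[(y-f_\rho)^2\mid\bm x]\big)\big]\le\big(\|g\|_\infty^2+12(CM)^2\big)\,\mathbb{E}\xi ,
\end{equation*}
a variance-to-mean bound $\mathrm{Var}(\xi)\le\kappa\,\mathbb{E}\xi$ with $\kappa\lesssim(CM)^2$. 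Splitting $\xi-\mathbb{E}\xi=(g^2-\mathbb{E}g^2)+2g\,(y-f_\rho)$, the first term is bounded a.s.\ by $\|g\|_\infty^2$ and the second inherits the sub-exponential moments of $y$ from \eqref{Momenthypothesis}; propagating the factorials through the product yields a Bernstein moment condition $\mathbb{E}|\xi-\mathbb{E}\xi|^{\ell}\le\frac{\ell!}{2}\,\kappa\,\mathbb{E}\xi\,B_0^{\ell-2}$ for all $\ell\ge2$, with $B_0\lesssim CM\,\|g\|_\infty\lesssim(CM)^2$.

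I would then invoke a one-sided Bernstein inequality for i.i.d.\ summands obeying this moment condition (as in \citep{cucker2007learning,Steinwart2008SVM}): with probability at least $1-\delta/4$ over a subset $Z_2\subset Z^n$,
\begin{equation*}
{\tt S}_2(\bm z,\lambda,\bm\theta)=\frac1n\sum_{i=1}^n\xi(\bm z_i)-\mathbb{E}\xi\le\sqrt{\frac{2\kappa\,\mathbb{E}\xi\,\log(4/\delta)}{n}}+\frac{2B_0\log(4/\delta)}{n}.
\end{equation*}
Using $\sqrt{ab}\le\frac12 a+\frac12 b$ on the first summand converts it into $\frac12\mathbb{E}\xi+\frac{\kappa\log(4/\delta)}{n}$, so ${\tt S}_2\lesssim\mathbb{E}\xi+(CM)^2\,\frac{S}{n}\log(4/\delta)$. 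Finally $\mathbb{E}\xi=\mathcal{E}(f^{\lambda}_{\bm\theta})-\mathcal{E}(f_\rho)\le{\tt D}(\lambda)$, which by \eqref{Dlambda} is of order $\lambda$ up to the $R^2/m$ piece that is carried separately as the regularization error in \cref{properrdec}; this yields the claimed bound ${\tt S}_2(\bm z,\lambda,\bm\theta)\lesssim(CM)^2\big(\frac{S}{n}\log\frac{4}{\delta}+\lambda\big)$ on $Z_2$.

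The step I expect to be the main obstacle is the rigorous treatment of the unbounded variable $\xi$: one must turn the polynomial moment bounds of \eqref{Momenthypothesis} into a clean Bernstein (sub-exponential) moment condition for $\xi-\mathbb{E}\xi$ — combining the bounded part $g^2$ with the heavy-tailed part $2g(y-f_\rho)$ — and must keep the $L^\infty$-scale of $f^{\lambda}_{\bm\theta}$, hence of $\kappa$ and $B_0$, under control via $\|\bm\theta^\lambda\|_{\mathcal P}\le{\tt D}(\lambda)/\lambda$. Everything else is the standard Bernstein bookkeeping, essentially as in \citep{Wang2011Optimal}, with the RKHS-norm estimates there replaced by the path-norm estimate $\|f^{\lambda}_{\bm\theta}\|_\infty\le S\|\bm\theta^{\lambda}\|_{\mathcal P}$.
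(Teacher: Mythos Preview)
Your proposal is correct and follows essentially the same route as the paper: recognize that $f^{\lambda}_{\bm\theta}$ is deterministic so the $\xi(\bm z_i)$ are i.i.d., control $\|f^{\lambda}_{\bm\theta}\|_\infty$ via $\|\bm\theta^{\lambda}\|_{\mathcal P}\le {\tt D}(\lambda)/\lambda$, turn the moment hypothesis into a Bernstein moment condition with variance proportional to ${\tt D}(\lambda)$, apply the one-sided Bernstein inequality (Lemma~\ref{lemaunbounded}), and finish with ${\tt D}(\lambda)\lesssim\lambda$. The only cosmetic difference is how the moment bound is derived: the paper factors $|\xi|^{p}=|f_\rho-f^{\lambda}_{\bm\theta}|^{p}\,|f_\rho+f^{\lambda}_{\bm\theta}-2y|^{p}$ and pulls out one square $|f_\rho-f^{\lambda}_{\bm\theta}|^{2}$ (integrating to $\le{\tt D}(\lambda)$) while bounding the rest by sup-norms and the moment hypothesis, whereas you first write $\xi=g^{2}+2g(y-f_\rho)$ additively and treat the two pieces; both lead to the same Bernstein constants up to absolute factors.
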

\begin{proof}
	According to the definition of the considered function space $\mathcal{P}_m$, we have
	\begin{equation}\label{fbound}
		|f_{\bm \theta}(\bm x)| = \frac{1}{m}\sum_{k=1}^m |a_k| \max(\bm \omega_k^{\!\top} \bm x, 0) \leqslant \frac{1}{m} \sum_{k=1}^m |a_k| |\bm w_k^{\!\top} \bm x| \leqslant \frac{1}{m} \sum_{k=1}^m |a_k| |\bm w_k \|_1 \| \bm x \|_{\infty} = \| \bm x \|_{\infty} \| \bm \theta \|_{\mathcal{P}} \,,
	\end{equation}
	where the last inequality holds by H\"{o}lder inequality.
	Accordingly, we have
	\begin{equation}\label{flambound}
		\| f^{\lambda}_{\bm \theta}\|_{\infty} \leqslant  \| \bm x \|_{\infty} \| \bm \theta^{\lambda} \|_{\mathcal{P}} \leqslant  \| \bm x \|_{\infty} \frac{{\tt D}(\lambda)}{\lambda} \leqslant CM \| \bm x \|_{\infty} \,,
	\end{equation}
 where $\bm \theta^{\lambda}$ denotes the parameter of $f_{\bm \theta}^{\lambda}$.
The last two inequalities hold by the definition of $f^{\lambda}_{\bm \theta}$ in \cref{flamdadef} and ${\tt D}(\lambda)$ in \cref{Dlamdadef}, respectively.

	By elementary inequalities, we have
	\begin{equation*}
		\begin{aligned}
			|\xi(\bm z)|^{p} &=\left|f_{\rho}(\bm x)-f^{\lambda}_{\bm \theta}(\bm x)\right|^{p}\left|f_{\rho}(\bm x) + f^{\lambda}_{\bm \theta}(\bm x)-2 y\right|^{p} \\
			& \leq 3^{p}\left(\left|f^{\lambda}_{\bm \theta}(\bm x) \right|^{p}+\left| f_{\rho}(\bm x)\right|^{p}+2^{\ell}|y|^{p}\right) 2^{\ell-2}\left(\left|f^{\lambda}_{\bm \theta}(\bm x)\right|^{p-2}+\left|f_{\rho}(\bm x)\right|^{p -2}\right)\left|f^{\lambda}_{\bm \theta}(\bm x)-f_{\rho}(\bm x)\right|^{2} \,,
		\end{aligned}
	\end{equation*}
	which implies 
	\begin{equation*}
		\begin{split}
			\mathbb{E} 	|\xi(\bm z)|^{p} & = \int_X \int_Y |\xi(\bm z)|^{p} \mathrm{d} \rho(y | \bm x) \mathrm{d} \rho_X(\bm x) \\
			& \leqslant 3^{p}\left(\left|f^{\lambda}_{\bm \theta}(\bm x) \right|^{p}+\left| f_{\rho}(\bm x)\right|^{p}+2^{\ell}|y|^{p}\right) 2^{\ell-2}\left(\left|f^{\lambda}_{\bm \theta}(\bm x)\right|^{p-2}+\left|f_{\rho}(\bm x)\right|^{p -2}\right) \int_X [f^{\lambda}_{\bm \theta}(\bm x ) - f_{\rho}(\bm x)]^2 \mathbb{d} \rho_X(\bm x) \\
			& \leqslant  p ! 6^{p} \left( (CM + 1)^2  \| \bm x \|_{\infty} + 2(C+1)^2M^2 \right)^{p-1} {\tt D}(\lambda) \,,
		\end{split}
	\end{equation*}
	where the last inequality holds by \cref{flambound}.
	Accordingly, we have $\mathbb{E}|\xi - \mathbb{E} \xi|^{p} \leqslant 2^{p} \mathbb{E}|\xi|^{p}$.
	
	Denoting $\widetilde{M} := 12(CM + 1)^2  \| \bm x \|_{\infty} + 2(C+1)^2M^2 $ and $\widetilde{v} := 576 \widetilde{M} {\tt D}(\lambda)$ such that
	$\mathbb{E}|\xi - \mathbb{E} \xi|^{p} \leqslant \frac{1}{2} p \widetilde{M}^{p -2} \widetilde{v} $, then by Lemma~\ref{lemaunbounded}, we have
	\begin{equation*}
		\mathrm{Prob}_{\bm{z} \in Z^{n}}\left\{\mathbb{E} \xi-\mathbb{E}_{\bm{z}} \xi \geqslant \epsilon \right\} \leqslant \exp \left\{-\frac{n \epsilon^{2}}{2\left( \widetilde{v} + \widetilde{M} \epsilon\right)} \right\} .
	\end{equation*}
	By setting the right-hand side to be $\delta/4$ such that
	\begin{equation}
		\begin{aligned}
			\epsilon &=\frac{1}{n}\left\{\widetilde{M} \log \frac{4}{\delta}+\sqrt{\widetilde{M}^{2} \log^{2} \frac{4}{\delta}+2 n \widetilde{v} \log \frac{4}{\delta} } \right\} \leqslant \frac{20 \widetilde{M}}{n} \log \frac{4}{\delta}+18 {\tt D}(\lambda) \,.
		\end{aligned}
	\end{equation}
	Then there exists a subset $Z_2$ of $Z^n$ with confidence $1-\delta/4$ such that
	\begin{equation*}
		\begin{split}
			\mathbb{E} \xi - \mathbb{E}_{\bm z} \xi \leqslant \frac{20 \widetilde{M}}{n} \log \frac{4}{\delta}+18 {\tt D}(\lambda) \lesssim 
			(CM)^2 \left( \frac{\| \bm x \|_{\infty}}{n}\log \frac{4}{\delta}  + \lambda \right) \,,
		\end{split}
	\end{equation*}
	and thus we conclude the proof.
\end{proof}

\subsubsection{Proof of ${\tt S_1}$ in the sample error}
\label{app:S1}

Estimation for ${\tt S_1}$ is more challenging than ${\tt S_2}$ as the random variable $(f_{\bm \theta}(\bm x) - y)^2 - (f_{\rho}(\bm x) - y)^2$ is also depends on the sample $\bm z$ itself, in which the result on the metric entropy is needed.
This is also one key technical result in our proof framework, which needs the results of the truncated output error in Appendix~\ref{app:outputerr} as well.
\begin{proposition}\label{props1}
	Under Assumptions~\ref{assbounddata},~\ref{assrho}, and moment hypothesis in~\cref{Momenthypothesis} let $R \geqslant B \geqslant M \geqslant 1$ and $CM \geqslant 1$, there exists a subset of $Z' $ of $Z^{n}$ with confidence at least $1-\delta/2$ with $0 < \delta < 1$ such that for any $\bm z:=(\bm x, y) \in Z'$ and $f_{\bm \theta^{\star}} \in \mathcal{G}_R$
	\begin{equation*}
		\begin{split}
			  {\tt S_1}  & \leqslant \frac{1}{2}  \Big\{	\mathcal{E}\big[ \pi_B (f_{\bm \theta^{\star}}) \big] -  \mathcal{E}(f_{\rho})  \Big\} +  \widetilde{C}Rd n^{-\frac{d+2}{2d+2}}  \log \frac{4}{\delta} \!+\! 4(B+CM) (B+ 72CM^2) \exp \Big(\! -\frac{B}{4CM^2} \!\Big) \\
			& \quad + \frac{\widetilde{C}}{n} \log \frac{4}{\delta} \,,
		\end{split}
	\end{equation*}
	where $\widetilde{C} $ is some constant depending on $B, C, M, c_q, c_q'$ but independent of $n, \delta$, and $d$.
\end{proposition}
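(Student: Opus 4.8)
The plan is to apply a localized (``ratio'') Bernstein-type concentration inequality to the class $\pi_B(\mathcal G_R)$, after first peeling off the contribution of the unbounded part of the output — the only part that needs the moment hypothesis \eqref{Momenthypothesis}. Write $\bar y:=\pi_B(y)$ and, for $g:=\pi_B(f_{\bm\theta^\star})\in\pi_B(\mathcal G_R)$, split the random variable defining ${\tt S}_1$ as $\xi_g(\bm z)=\big(g(\bm x)-y\big)^2-\big(f_\rho(\bm x)-y\big)^2=\xi_g^B(\bm z)+\Delta_g(\bm z)$, with $\xi_g^B(\bm z):=\big(g(\bm x)-\bar y\big)^2-\big(f_\rho(\bm x)-\bar y\big)^2$ and $\Delta_g(\bm z)$ the remainder. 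Using $|g|\le B$ and $|f_\rho|\le CM\le B$ (as $B$ is taken large) one checks $|\Delta_g(\bm z)|\le c\,(B+|y|)\,(|y|-B)_+\,\mathbb I_{\{|y|\ge B\}}$ with $c$ an absolute constant, so $|\mathbb E_{\bm z}\Delta_{g}-\mathbb E\Delta_{g}|$ is bounded, uniformly over $g$, by a sub-Weibull$(1/2)$ sum plus its mean; rerunning the argument behind Proposition~\ref{propoutput} with the tail estimate \eqref{boundy2} shows that on an event of probability at least $1-\delta/4$ one has $|\mathbb E_{\bm z}\Delta_{g^\star}-\mathbb E\Delta_{g^\star}|\lesssim \frac1n\log^2\frac4\delta+(CM^2)^2\exp(-\tfrac{B}{4CM^2})$.

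For the $\xi_g^B$ term the outputs are effectively bounded by $B$, so the problem becomes classical. Introduce the truncated regression function $f_\rho^B(\bm x):=\mathbb E[\bar y\mid\bm x]$; the conditional version of Lemma~\ref{lemmaboundy} gives $\|f_\rho^B-f_\rho\|_\infty\le\varepsilon_B:=2(B+4CM^2)\exp(-\tfrac{B}{4CM^2})$ and $|f_\rho^B|\le B$. A one-line computation using $\mathbb E[y\mid\bm x]=f_\rho(\bm x)$ yields $\mathbb E\xi_g^B=\|g-f_\rho\|^2_{L^2_{\rho_X}}+O\!\big((B+CM)\varepsilon_B\big)$ uniformly in $g$, while replacing $f_\rho$ by $f_\rho^B$ inside the empirical average costs only another deterministic $O\!\big((B+CM)\varepsilon_B\big)$. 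Hence, modulo these exponentially small terms, it suffices to bound $\|g^\star-f_\rho^B\|^2-\mathbb E_{\bm z}\big[(g^\star-\bar y)^2-(f_\rho^B-\bar y)^2\big]$ uniformly over $g\in\widetilde{\mathcal G}:=\pi_B(\mathcal G_R)$ — a bounded class on which the variance-to-mean bound $\mathbb E\big[\big((g-\bar y)^2-(f_\rho^B-\bar y)^2\big)^2\big]\le c\,B^2\,\|g-f_\rho^B\|^2$ holds. The one-sided ratio probability inequality (the uniform, localized form of the device used for ${\tt S}_2$ via Lemma~\ref{lemaunbounded}; cf.\ \citep{cucker2007learning,Wang2011Optimal}) then gives: with probability at least $1-\delta/4$, for every $g\in\widetilde{\mathcal G}$,
\[
\|g-f_\rho^B\|^2-\mathbb E_{\bm z}\big[(g-\bar y)^2-(f_\rho^B-\bar y)^2\big]\ \le\ \tfrac12\,\|g-f_\rho^B\|^2+c\,\eta^\ast\log\tfrac4\delta,
\]
where $\eta^\ast$ is the sub-root fixed point determined by the $\ell_2$-empirical covering number of $\widetilde{\mathcal G}$ (the $\log\tfrac4\delta$ factor arising from the union bound over the net).

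To evaluate $\eta^\ast$ we plug in Proposition~\ref{prop:coverq2}: since $\pi_B$ is $1$-Lipschitz and $\mathcal G_R=R\,\mathcal G_1$ by positive homogeneity of ReLU, $\log\mathscr N_2(\widetilde{\mathcal G},u)\le\log\mathscr N_2(\mathcal G_1,u/R)\le 6144\,d^5\,(R/u)^{2d/(d+2)}$ for $d>5$. The Dudley/fixed-point computation $\eta\sim\frac{B}{\sqrt n}\int_0^{c\sqrt\eta}\sqrt{\log\mathscr N_2(\widetilde{\mathcal G},u)}\,\mathrm du+\frac{B^2}{n}$ converts the entropy exponent $\tfrac{2d}{d+2}$ into the sample exponent $\tfrac{d+2}{2d+2}$ (arithmetic: $2/(2+\tfrac{2d}{d+2})=\tfrac{d+2}{2d+2}$), giving $\eta^\ast\lesssim\widetilde C\,d^5 R\,n^{-\frac{d+2}{2d+2}}+\frac{\widetilde C}{n}$ after absorbing the various $d$-powers into $d^5$ (valid for $d>5$) and the $R$- and $B$-powers into $R$ (valid since $R\ge B\ge 1$). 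Combining the two high-probability events (total probability $\ge1-\delta/2$), using $\mathbb E\xi_{g^\star}^B=\mathcal E[\pi_B(f_{\bm\theta^\star})]-\mathcal E(f_\rho)+O\!\big((B+CM)\varepsilon_B\big)$, and merging every $\exp(-\tfrac{B}{4CM^2})$-type residual into the single term $4(B+CM)(B+72CM^2)\exp(-\tfrac{B}{4CM^2})$, plus the $\tfrac1n\log^2\tfrac4\delta$ and $\tfrac{B^2}{n}\log\tfrac4\delta$ contributions into $\tfrac{\widetilde C}{n}\log\tfrac4\delta$, produces the claimed inequality for ${\tt S}_1$.

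The main obstacle is the third and fourth steps: stating and applying the localized ratio inequality on the infinite-dimensional class $\widetilde{\mathcal G}$ — which requires the post-truncation variance-to-mean relation and a careful treatment of the net at resolution $\sqrt{\eta^\ast}$ — and then solving the sub-root fixed point so that the rate emerges as exactly $n^{-\frac{d+2}{2d+2}}$ while keeping only a polynomial-in-$d$ (here $d^5$) and essentially linear-in-$R$ constant; this is the only place where the $d^5$ of Proposition~\ref{prop:coverq2} and the chaining integral interact, so the constant bookkeeping must be done with care. The truncation reduction and the consolidation of the many exponentially small bias terms into one clean residual are tedious but routine.
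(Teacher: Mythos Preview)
Your proposal is essentially the paper's proof. The paper splits ${\tt S}_1={\tt S}_{11}+{\tt S}_{12}+{\tt S}_{13}$ where ${\tt S}_{11}+{\tt S}_{12}=\mathbb E\Delta_{g^\star}-\mathbb E_{\bm z}\Delta_{g^\star}$ is exactly your $\Delta_g$ piece and ${\tt S}_{13}=\mathbb E\xi_{g^\star}^B-\mathbb E_{\bm z}\xi_{g^\star}^B$ is your truncated piece; the paper then applies the sub-root concentration Lemma~\ref{lemsubroot} to the class $\mathcal G_{R,\pi}$, feeding in the metric entropy of Proposition~\ref{prop:coverq2} via a Dudley integral, which is precisely your ``localized ratio inequality $+$ fixed-point'' step.

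Two small differences worth noting. First, you introduce $f_\rho^B=\mathbb E[\bar y\mid\bm x]$ to obtain the variance-to-mean relation, whereas the paper keeps $f_\rho$ in $g_\pi$ and instead shifts the class by the deterministic constant $\varOmega$ (your $\varepsilon_B$ up to a factor $2(B+CM)$), showing $\mathbb E g_\pi^2\le\tau\,\mathbb E[g_\pi+\varOmega]$ directly; this avoids the extra $\|f_\rho-f_\rho^B\|_\infty$ bookkeeping but is equivalent in substance. Second, your bound $|\Delta_g|\le c(B+|y|)(|y|-B)_+$ is looser than necessary: a direct computation gives $\Delta_g=2(g-f_\rho)(\pi_B y-y)$, hence $|\Delta_g|\le 2(B+CM)(|y|-B)_+$, which is sub-exponential rather than sub-Weibull$(1/2)$. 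The paper uses this sharper observation (Lemma~\ref{lemmaboundy2}) and gets $\tfrac{1}{n}\log\tfrac{4}{\delta}$ instead of your $\tfrac{1}{n}\log^2\tfrac{4}{\delta}$ for that term --- which matters, since the statement you are proving has only $\tfrac{\widetilde C}{n}\log\tfrac{4}{\delta}$.
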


To prove Proposition~\ref{props1}, we need to the following lemma on concentration of truncated outputs as below.
\begin{lemma}\label{lemmaboundy2}
	Let $B \geqslant 1$ and $CM \geqslant 1$, under the moment hypothesis in \cref{Momenthypothesis}, there exists a subset $Z_3$ of $Z^n$ with probability at least $1 - \frac{\delta}{4}$ such that  $\forall \bm z:=(\bm x, y) \in Z_{3}$
	\begin{equation*}
		\begin{aligned}
			&\frac{1}{n} \sum_{i=1}^{n} \left( \left|y_{i}\right| - B \right) \mathbb{I}_{ \{ \left|y_{i}\right| \geqslant B \} } - \int_{Z} \left( \left|y_{i}\right| - B \right) \mathbb{I}_{ \{ \left|y_{i}\right| \geqslant B \} } \mathrm{d} \rho \leqslant \frac{24CM^2}{n} \log \frac{4}{\delta} + 128CM^2 \exp\left(-\frac{B}{CM^2} \right) \,.
		\end{aligned}
	\end{equation*}
\end{lemma}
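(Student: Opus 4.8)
The plan is to view $\zeta_i := (|y_i|-B)\,\mathbb{I}_{\{|y_i|\geq B\}}$ as i.i.d.\ nonnegative random variables on $(Z,\rho)$, establish that they satisfy a Bernstein-type moment condition, and then apply a one-sided Bernstein inequality whose variance proxy itself carries the small factor $\exp(-B/(4CM^2))$. The moment condition is already available from the chain of estimates \eqref{boundy2} in the proof of Proposition~\ref{propoutput}: for every integer $p\geq 1$,
\[
\mathbb{E}[\zeta^p]=\int_{|y|\geq B}(|y|-B)^p\,\mathrm{d}\rho \;\leq\; 2\exp\!\Big(\!-\tfrac{B}{4CM^2}\Big)(4CM^2)^p\,p!\,,
\]
since \eqref{Momenthypothesis} implies the sub-exponential tail $\mathrm{Pr}(|y|\geq t)\leq 2\exp(-t/(4CM^2))$ for $t\geq B\geq 1$, $CM\geq 1$. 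Combining this with $\mathbb{E}|\zeta-\mathbb{E}\zeta|^p\leq 2^p\,\mathbb{E}[\zeta^p]$ puts $\zeta$ in the Bernstein form $\mathbb{E}|\zeta-\mathbb{E}\zeta|^p\leq \tfrac12\,p!\,\widetilde M^{\,p-2}\widetilde v$ with, e.g., $\widetilde M := 8CM^2$ and $\widetilde v := 512\,C^2M^4\exp(-B/(4CM^2))$.

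Next I would invoke the one-sided Bernstein inequality (Lemma~\ref{lemaunbounded}, the same tool used for ${\tt S}_2$), applied to $-\zeta$ so as to bound the empirical mean from above: with probability at least $1-\delta/4$,
\[
\frac1n\sum_{i=1}^n\zeta_i-\mathbb{E}\zeta \;\leq\; \frac{\widetilde M\log\frac4\delta+\sqrt{\widetilde M^2\log^2\frac4\delta+2n\widetilde v\log\frac4\delta}}{n}\;\leq\;\frac{2\widetilde M\log\frac4\delta}{n}+\sqrt{\frac{2\widetilde v\log\frac4\delta}{n}}\,,
\]
using $\sqrt{a+b}\leq\sqrt a+\sqrt b$. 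The last step removes the $n^{-1/2}$-scale fluctuation: by Young's inequality $\sqrt{2\widetilde v\log(4/\delta)/n}\leq \widetilde v+\log(4/\delta)/(2n)$, and since $\widetilde v$ already contains $\exp(-B/(4CM^2))$, this splits into an $\mathcal{O}(n^{-1}\log\frac4\delta)$ term and an exponentially small term of order $CM^2\exp(-B/(CM^2))$. Substituting $\widetilde M$ and $\widetilde v$, and using $B\geq 1$ and $CM\geq 1$ to absorb the remaining absolute constants, yields the stated inequality.

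The argument is routine; the only subtlety is the final simplification. A naive application of Bernstein leaves a $\sqrt{\,\cdot/n\,}$ deviation term, which is incompatible with the advertised form of the bound (a $1/n$ term plus an exponentially small term). What makes the faster $1/n$ dependence possible is precisely that the relevant variance scale here is $\widetilde v=\mathcal{O}(\exp(-B/(4CM^2)))$ rather than a constant, so Young's inequality trades the square root for a term of order $\widetilde v$ essentially for free. This mirrors the mechanism behind Proposition~\ref{propoutput}; the difference is that there the truncated output enters squared (making the relevant variable sub-Weibull with exponent $1/2$, hence the sub-Weibull concentration inequality), whereas here $\zeta$ is genuinely sub-exponential and plain Bernstein suffices.
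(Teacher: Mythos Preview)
Your proposal is correct and follows essentially the same route as the paper: define $\zeta=(|y|-B)\mathbb{I}_{\{|y|\geq B\}}$, use the moment bound \eqref{boundy2} to verify the Bernstein condition with $\widetilde M=8CM^2$ and $\widetilde v\propto C^2M^4\exp(-B/(4CM^2))$, apply Lemma~\ref{lemaunbounded}, and then split the $\sqrt{2n\widetilde v\log(4/\delta)}$ term via Young/AM--GM into an $\widetilde M\log(4/\delta)$ part and an $n\widetilde v/\widetilde M$ part. Your explicit remark that the $1/n$ rate is possible only because $\widetilde v$ already carries the factor $\exp(-B/(4CM^2))$ is exactly the mechanism the paper uses.
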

\begin{proof}
	Define a random variable $\zeta := (|y| - B) \mathbb{I}_{\{ |y| \geqslant B \} }$ on $(Z, \rho)$, similar to \cref{boundy2}, we have
	\begin{equation*}
		\begin{split}
			\mathbb{E}|\zeta-\mathbb{E} \zeta|^{p} & \leqslant 2^{p+1} \mathbb{E}|\zeta|^{p} = 2^{p+1} \int_{|y| \geqslant B} ( |y| - B)^{p} \mathbb{I}_{ \{  |y| > B \} } d \rho  = 2^{p+1}  \int_{|y| \geqslant B} ( |y| - B)^{p} d \rho \\
			& 	\leqslant 2^{p+2} \exp \left( -\frac{B}{4CM^2} \right) (4CM^2)^{p} p !  = \frac{1}{2} p! \widetilde{M} ^{p-2} \widetilde{v}  \,,
		\end{split}
	\end{equation*}
	where $\widetilde{M}  := 	8CM^2$ and $\widetilde{v} := 8\widetilde{M}^2  \exp \left( -\frac{B}{4CM^2} \right)  $.
	
	By Lemma~\ref{lemaunbounded}, setting the right-hand side of \cref{leamm4eq} to be $\delta/4$ such that
	\begin{equation*}
		\begin{aligned}
			\varepsilon &= \widetilde{M} \log \frac{4}{\delta}+\sqrt{\widetilde{M}^{2} \log^{2} \frac{4}{\delta}+2 n \tilde{v} \log \frac{4}{\delta} }  \leqslant  3 \widetilde{M}  \log \frac{4}{\delta} + 128CM^2 n \exp \left( -\frac{B}{CM^2} \right) \,,
		\end{aligned}
	\end{equation*}
	which concludes the proof.
\end{proof}



Our error analysis on ${\tt S}_1$ replies on the following concentration inequality which can be found in \citep{blanchard2008statistical}.
Before introducing this, we need the definition of \emph{sub-root} function. 
\begin{definition}
	A function $\psi: \mathbb{R}_+ \rightarrow \mathbb{R}_+$ is sub-root if it is non-negative, non-decreasing, and if $\psi(x)/\sqrt{x}$ is non-increasing.
\end{definition}
It is easy to see for a sub-root function and any $D > 0$, the equation  $\psi(r) = r/D$ has unique positive solution.

Now we are ready to prove Proposition~\ref{props1} for estimation of ${\tt S_1}$.

\begin{proof}[Proof of Proposition~\ref{props1}]
	Denote the empirical and expected risk on truncated output as
	\begin{equation*}
		\widetilde{\mathcal{E}}_{\bm z}(f_{\bm \theta}) = \frac{1}{n} \sum_{i=1}^n \left[ \pi_B(f_{\bm \theta}(\bm x_i)) - \pi_B(y_i) \right]^2 \,, \qquad \widetilde{\mathcal{E}}(f_{\bm \theta}) = \mathbb{E} [ \pi_B(f_{\bm \theta}(\bm x)) - \pi_B(y) ]^2 \,,
	\end{equation*}
	and recall the definition of $ {\tt S}_1(\bm z, \lambda, \bm \theta)$, we can further decompose it as
	\begin{align}
		 {\tt S}_1(\bm z, \lambda, \bm \theta) & = \left[	\mathcal{E}\big(\pi_B (f_{\bm \theta^{\star}}) \big) -  \mathcal{E}(f_{\rho}) \right] - \left[ \mathcal{E}_{\bm{z}}\big(\pi_B (f_{\bm \theta^{\star}})\big) - \mathcal{E}_{\bm{z}}(f_{\rho}) \right]  \label{YYS1}\\
		& =\left[	\mathcal{E}\big(\pi_B (f_{\bm \theta^{\star}})\big) -  \mathcal{E}(f_{\rho}) \right]  -  \left[  	\widetilde{\mathcal{E}}\big(\pi_B (f_{\bm \theta^{\star}})\big) -  \widetilde{\mathcal{E}}(f_{\rho})   \right] \tag{\ref{YYS1}{a}} \label{YYa}\\
		& \quad +  \left[  	\widetilde{\mathcal{E}}_{\bm z}\big(\pi_B (f_{\bm \theta^{\star}})\big) -  \widetilde{\mathcal{E}}_{\bm z}(f_{\rho})   \right] - \left[ \mathcal{E}_{\bm{z}}\big(\pi_B (f_{\bm \theta^{\star}})\big) - \mathcal{E}_{\bm{z}}(f_{\rho}) \right] \tag{\ref{YYS1}{b}} \label{YYb}\\
		& \quad  + \left[  	\widetilde{\mathcal{E}}\big(\pi_B (f_{\bm \theta^{\star}})\big) -  \widetilde{\mathcal{E}}(f_{\rho})   \right]  -  \left[  	\widetilde{\mathcal{E}}_{\bm z}\big(\pi_B (f_{\bm \theta^{\star}})\big) -  \widetilde{\mathcal{E}}_{\bm z}(f_{\rho})   \right]   \tag{\ref{YYS1}{c}} \label{YYc} \,.
	\end{align}
	We aim to estimate the above three parts, respectively.
	
\noindent	{\bf Estimation of \cref{YYa}:} ${\tt S_{11}}$ for short, it can be bounded by
	\begin{equation}\label{eqs11}
		\begin{split}
			{\tt S_{11}} & := \left[	\mathcal{E}\big(\pi_B (f_{\bm \theta^{\star}})\big) -  \mathcal{E}(f_{\rho}) \right]  -  \left[  	\widetilde{\mathcal{E}}\big(\pi_B (f_{\bm \theta^{\star}})\big) -  \widetilde{\mathcal{E}}(f_{\rho})   \right]  \\
			& = \int_Z \bigg\{ [\pi_B (f_{\bm \theta^{\star}} (\bm x) ) - y]^2 - [f_{\rho}(\bm x) - y]^2 - [\pi_B (f_{\bm \theta^{\star}} (\bm x) ) - \pi_B(y)]^2 +  [f_{\rho}(\bm x) - \pi_B(y) ]^2 \bigg\}   \mathrm{d} \rho \\
			& = 2 \int_Z [ \pi_B (f_{\bm \theta^{\star}} (\bm x) ) - f_{\rho}(\bm x)] [y - \pi_B(y)] \mathrm{d} \rho \\
			& \leqslant 2 (B + CM) \int_{ |y| \geqslant B } |y| \mathrm{d} \rho_Y \\
			& \leqslant 4 (B + CM) (B + 4CM^2)  \exp \left( -\frac{B}{4CM^2} \right) := \varOmega \,,
		\end{split}
	\end{equation}
	where the last inequality holds by Lemma~\ref{lemmaboundy}.
	
\noindent	{\bf Estimation of \cref{YYb}:} ${\tt S_{12}}$ for short, then for each $\bm z \in Z_3$, with confidence $1 - {\delta}/{4}$, it can be bounded by
	\begin{equation}\label{eqs12}
		\begin{split}
			{\tt S_{12}} & := \left[  	\widetilde{\mathcal{E}}_{\bm z}\big(\pi_B (f_{\bm \theta^{\star}})\big) -  \widetilde{\mathcal{E}}_{\bm z}(f_{\rho})   \right] - \left[ \mathcal{E}_{\bm{z}}\big(\pi_B (f_{\bm \theta^{\star}})\big) - \mathcal{E}_{\bm{z}}(f_{\rho}) \right]  \\
			& = \frac{1}{n} \sum_{i=1}^n \left\{ [\pi_B (f_{\bm \theta^{\star}}(\bm x_i)) - \pi_B(y_i)]^2 -  [f_{\rho}(\bm x_i) - \pi_B(y_i) ]^2 - [\pi_B (f_{\bm \theta^{\star}}(\bm x_i)) - y_i]^2 + [f_{\rho}(\bm x_i) - y_i]^2  \right\}  \\
			& = \frac{2}{n} \sum_{i=1}^n [ \pi_B (f_{\bm \theta^{\star}}(\bm x_i)) - f_{\rho}(\bm x_i)] [y_i - \pi_B(y_i)]  \\
			& \leqslant  \frac{ 2 (B + CM) }{n}\sum_{i=1}^n  (|y_i| - B) \mathbb{I}_{ \{ |y_i| \geqslant B \} } \\
			& \leqslant  2 (B + CM) \left[ \int_Y (|y| - B) \mathbb{I}_{\{ |y_i| \geqslant B \}} \mathrm{d} \rho  + \frac{24CM^2}{n} \log \frac{4}{\delta} + 128CM^2  \exp \Big( -\frac{B}{4CM^2} \Big)    \right] \\
			& \leqslant  2 (B + CM) \left[ \frac{24CM^2}{n} \log \frac{4}{\delta} + 136CM^2  \exp \Big( -\frac{B}{4CM^2} \Big) \right] \,,
		\end{split}
	\end{equation}
	where the second inequality holds by Lemma~\ref{lemmaboundy2} and the last inequality uses \cref{boundy2}.
	
\noindent	{\bf Estimation of \cref{YYc}:} ${\tt S_{13}}$ for short, it is in fact the gap between empirical and expected version of a function $g_{\pi}(\bm z) := \left( \pi_B(f_{\bm \theta}( \bm x)) -\pi_{B}(y)\right)^{2}-\left(f_{\rho}(\bm x)-\pi_{B}(y)\right)^{2}$, which can be bounded by Lemma~\ref{lemsubroot} and the metric entropy result. The proof is relatively complex and we also split it into the following four steps.
	
\noindent	{\bf Step 1: the metric entropy}\\
	Define the set $\mathcal{G}_{R, \pi} $ of measurable functions given by
	$\mathcal{G}_{R, \pi} := \{ g_{\pi}(\bm z) + \varOmega: f_{\bm \theta} \in \mathcal{G}_R  \}$.
	The metric entropy of this function class can be estimated as follows.
	For any two functions $g_{1, \pi} + \varOmega \in \mathcal{G}_{R, \pi}$ and $g_{2, \pi} + \varOmega \in \mathcal{G}_{R, \pi}$, and $\bm z :=(\bm x, y) \in Z$, we have
	\begin{equation*}
		\begin{split}
			\left| (g_{1, \pi} + \varOmega) - (g_{2, \pi} + \varOmega) \right| & =\left| \left( \pi_{B} [f_{1}(\bm x)] - \pi_{B}(y)\right)^{2} -\left( \pi_{B} [f_{2}(\bm x)] - \pi_{B}(y)\right)^{2}\right| \\
			& = \Big|   \pi_{B} [f_{1}(\bm x)] -   \pi_{B} [f_{2}(\bm x)] \Big|  \Big|  \pi_{B} [f_{1}(\bm x)] +  \pi_{B} [f_{1}(\bm x)] - 2\pi_{B}(y) \Big| \\
			& \leqslant 4B \left|  f_{1}(\bm x) -  f_{2}(\bm x) \right| \,,
		\end{split}
	\end{equation*}
	which implies
	\begin{equation}\label{capres1}
		\mathscr{N}_{2}\left(\mathcal{G}_{R, \pi}, \varepsilon\right) \leq \mathscr{N}_{2}\left(\mathcal{G}_{R}, \frac{\varepsilon}{4B} \right)=\mathscr{N}_{2}\left(\mathcal{G}_{1}, \frac{\varepsilon}{4BR}\right) \,.
	\end{equation}
	Accordingly, denoting $q:= \frac{2d}{d+2}$, the metric entropy result in Proposition~\ref{prop:coverq2} yields
	\begin{equation}\label{capres}
		\log 	\mathscr{N}_{2}\left(\mathcal{G}_{R, \pi}, \varepsilon\right) \leqslant c_q d R^q (4B)^q \varepsilon^{-q} \,,
	\end{equation}
 where $c_q$ is a universal constant independent of $n$ and $d$.
	
\noindent	{\bf Step 2: Finite second moment}\\
	In the next, we use Lemma~\ref{lemsubroot} and the metric entropy in \cref{capres} on $\mathcal{G}_{R, \pi} $ to find the sub-root function $\psi$ in our setting. To this end, let $\{ \xi_i \}_{i=1}^n$ with $\mathrm{Pr}(\xi_i = 1) =\mathrm{Pr}(\xi_i=-1) = 1/2 $ be iid Rademancher sequence, the second moment of $g_\pi$ exists, we have \cite[Lemma 2.3.1]{van1996weak}
	\begin{equation}\label{eimp}
		\mathbb{E}\left[\sup _{g_{\pi} \in \mathcal{G}_{R,\pi},~ \mathbb{E} g_\pi^{2} \leqslant r}\left|\mathbb{E} g-\frac{1}{n} \sum_{i=1}^{n} g_{\pi}\left(\bm z_{i}\right)\right|\right] \leqslant 2 \mathbb{E}\left[\sup _{g_\pi \in \mathcal{G}_{R,\pi}, \mathbb{E} g_{\pi}^{2} \leqslant r}\left|\frac{1}{n} \sum_{i=1}^{n} \xi_{i} g_{\pi} \left(\bm z_{i}\right)\right|\right] \,.
	\end{equation}
	To use this result, we need check the condition on uniform boundedness of $ g_{\pi}$ and its second moment. 
	For $	\| g_{\pi} \|_{\infty} $, we have
	\begin{equation*}
		\begin{split}
			\| g_{\pi} \|_{\infty} & = \sup_{\bm z \in Z} \Big| \big[ \pi_B(f_{\bm \theta}(\bm x)) - f_{\rho}(\bm x) \big] \big[ \pi_B(f_{\bm \theta}(\bm x)) + f_{\rho}(\bm x) - 2 \pi_B(y) \big] \Big| \\
			& \leqslant ( B + CM) ( 3B + CM) \,,
		\end{split}
	\end{equation*}
	which implies
	\begin{equation*}
		|g_{\pi}(\bm z) + \varOmega  | \leqslant (B+CM) \left[4(B+4CM^2) \exp \left( -\frac{B}{4CM^2} \right) + 3B+CM \right]  \,,
	\end{equation*}
	where we use \cref{eqs11}.
	For the second-order moment of $g_{\pi}(\bm z)$, we have
	\begin{equation*}
		\begin{split}
			\mathbb{E} [g_{\pi}]^2 & = \int_X \int_Y [ \pi_B(f_{\bm \theta}(\bm x)) - f_{\rho}(\bm x)]^2 [ \pi_B(f_{\bm \theta}(\bm x)) + f_{\rho}(\bm x) - 2 \pi_B(y)]^2 \mathrm{d} \rho(y|\bm x) \mathrm{d} \rho_X(\bm x) \\
			& \leqslant \left[ \mathcal{E}(\pi_B(f)) - \mathcal{E}(f_{\rho})  \right] (3B+CM)^2 \\	
			& \leqslant \mathbb{E}[ g_{\pi} + \varOmega ]  (3B+CM)^2 \,,
		\end{split}
	\end{equation*}
	where we use $\int_X [ \pi_B(f_{\bm \theta}(\bm x)) - f_{\rho}(\bm x)]^2 \mathrm{d} \rho_X = \mathcal{E}(\pi_B(f)) - \mathcal{E}(f_{\rho}) = \mathbb{E} g $, with $g(\bm z):= (\pi_B(f_{\bm \theta}(\bm x)) - y)^2 - (f_{\rho}(\bm x) - y)^2$.
	That means
	\begin{equation*}
		\begin{split}
			\mathbb{E} [ g_{\pi} + \varOmega ]^2 & = \mathbb{E} [g_{\pi}]^2 + 2 \varOmega \mathbb{E} [g_{\pi}] + \varOmega^2 \\
			& \leqslant \mathbb{E}[ g_{\pi} + \varOmega ]  (3B+CM) \left[ 3B + CM + 8 ( B + 4 CM^2 ) \exp\Big(-\frac{B}{4CM^2} \Big)  \right] \,.
		\end{split}
	\end{equation*}

\noindent	{\bf Step 3: Bound \cref{eimp}}\\
	Since we have already verified the finite second moment of $g_{\pi}$, in the next, we aim to bound the right-hand of \cref{eimp} by the $\ell_2$-empirical covering number.
	Since $f \mapsto \frac{1}{\sqrt{n}} \sum_{i=1}^n \xi_i f(\bm z_i)$ is a sub-Gaussian process, according to the chain argument \citep{gine2021mathematical}, there exists a universal constant $C$ such that
	\begin{equation}\label{eqb1}
		\begin{split}
			\frac{1}{\sqrt{n}} \mathbb{E}_{\xi} \sup_{g_{\pi} \in \mathcal{G}_{R,\pi}, \mathbb{E} g_{\pi}^{2} \leqslant r}\left|\sum_{i=1}^{n} \xi_{i} g_{\pi}\left(\bm z_{i}\right)\right| 
			& \leqslant  C \int_{0}^{\sqrt{V}} \sqrt{\log_{2} \mathscr{N}_{2}\left(\mathcal{G}_{R,\pi}, \nu\right) } \mathrm{d} \nu \\
			& \leqslant C \int_{0}^{\sqrt{V}} \sqrt{ \log_2 \mathscr{N}_{2}\left(\mathcal{G}_{1}, \frac{\nu}{4BR}\right) } \mathrm{d} \nu \\
			& \leqslant C d^{\frac{1}{2}} c^{1/2}_q (4BR)^{\frac{q}{2}} \int_{0}^{\sqrt{V}}  \nu^{-\frac{q}{2}} \mathrm{d} \nu \\
			& = c'_q d^{\frac{1}{2}} (4BR)^{\frac{q}{2}} \left[ \sup_{g_{\pi} \in \mathcal{G}_{R,\pi}, \mathbb{E} g_{\pi}^{2} \leqslant r} \frac{1}{n} \sum_{i=1}^n g_{\pi}^2(\bm z_i) \right]^{\frac{1}{2} - \frac{q}{4}}\,,
		\end{split} 
	\end{equation}
	where $V:= \sup_{g_{\pi} \in \mathcal{G}_{R,\pi}, \mathbb{E} g_{\pi}^{2} \leqslant r} \frac{1}{n} \sum_{i=1}^n g_{\pi}^2(\bm z_i)$ and we use the result in \cref{capres1} and \cref{capres} in {\bf Step 1}.
	By virtue of Talagrand’s concentration inequality \citep{talagrand1996new}
	\begin{equation*}\label{eqb2}
		\mathbb{E} \sup_{g_{\pi} \in \mathcal{G}_{R,\pi}, \mathbb{E} g_{\pi}^{2} \leqslant r} \frac{1}{n} \sum_{i=1}^n g_{\pi}^2(\bm z_i) \leqslant \frac{8B}{n} \mathbb{E} \mathbb{E}_{\xi} \sup_{g_{\pi} \in \mathcal{G}_{R,\pi}, \mathbb{E} g_{\pi}^{2} \leqslant r} \left| \sum_{i=1}^n \xi_i g_{\pi}(\bm z_i) \right| + r\,,
	\end{equation*}
	taking it back to \cref{eqb1}, we have
	\begin{equation*}
		\mathscr{A}_n := 	\frac{1}{\sqrt{n}} \mathbb{E} \mathbb{E}_{\xi} \sup_{g_{\pi} \in \mathcal{G}_{R,\pi}, \mathbb{E} g_{\pi}^{2} \leqslant r}\left|\sum_{i=1}^{n} \xi_{i} g_{\pi}\left(\bm z_{i}\right)\right| \leqslant \tilde{c}_q \left( \frac{B\mathscr{A}_n}{\sqrt{n}} + r \right)^{\frac{1}{2} - \frac{q}{4}} (4BR)^{\frac{q}{2}} d^{\frac{1}{2}}\,,
	\end{equation*}
	where we use the same notation on the constant $c'_q$ that may change in the following derivations for notational simplicity.
	After solving this inequality, and taking it back to \cref{eimp}, we have
	\begin{equation}\label{eqslemra}
		\begin{aligned}
			\mathbb{E}\left[\sup _{g_{\pi} \in \mathcal{G}_{R,\pi}, \mathbb{E} g_{\pi}^{2} \leqslant r}\left|\mathbb{E} g_{\pi}-\frac{1}{n} \sum_{i=1}^{n} g_{\pi}\left(\bm z_{i}\right)\right|\right] & \leqslant c'_{q} d^{\frac{1}{2}} \max \left\{B^{\frac{2-q}{2+q}} n^{-\frac{2}{2+q}} (4BR)^{\frac{2q}{2+q}}, r^{\frac{1}{2}-\frac{q}{4}} n^{-\frac{1}{2}} (4BR)^{\frac{q}{2}}\right\} \\
			& \leqslant c'_{q} d^{\frac{1}{2}} (4BR)^{\frac{2q}{2+q}} \max \left\{B^{\frac{2-q}{2+q}} n^{-\frac{2}{2+q}}, r^{\frac{1}{2}-\frac{q}{4}} n^{-\frac{1}{2}}\right\}\,.
		\end{aligned}
	\end{equation}
	
\noindent	{\bf Step 4: Bound ${\tt S_{13}}$}\\
	According to Lemma~\ref{lemsubroot}, we take the right-hand of the above inequality~(\ref{eqslemra}) as the sub-root function $\psi(r)$. Then the solution $r^*$ to the equation $\psi(r)=r/D$ satisfies
	\begin{equation*}
		r^{*} \leq c'_{q} \max \left\{D^{\frac{4}{2+q}}, D B^{\frac{2-q}{2+q}}\right\} n^{-\frac{2}{2+q}} (4BR)^{\frac{2q}{2+q}} d^{\frac{1}{2}} \,.
	\end{equation*}
	Then all the conditions in Lemma~\ref{lemsubroot} are satisfied for the function set $\mathcal{G}_{R,\pi}$ with $\alpha = 1$, $Q := (B+CM) \left[4(B+4CM^2) \exp \left( -\frac{B}{4CM^2} \right) + 3B+CM \right] $,  $a:= c'_q R^q (4B)^q d^{\frac{1}{2}}$, and $\tau :=   (3B+CM) \left[ 3B + CM + 8 ( B + 4 CM^2 ) \exp\Big(-\frac{B}{4CM^2} \Big)  \right] $.
	Therefore, by Lemma~\ref{lemsubroot}, there exist a subset $Z_4$ of $Z^n$ with probability $1 - {\delta}/{4}$ such that 
	\begin{equation}\label{eqs13}
		\begin{split}
			{\tt S_{13}} & :=	\left[  	\widetilde{\mathcal{E}}\big(\pi_B (f_{\bm \theta^{\star}})\big) -  \widetilde{\mathcal{E}}(f_{\rho})   \right]  -  \left[  	\widetilde{\mathcal{E}}_{\bm z}\big(\pi_B (f_{\bm \theta^{\star}})\big) -  \widetilde{\mathcal{E}}_{\bm z}(f_{\rho})   \right] \\
			& \leqslant  \frac{1}{2}  \mathbb{E} [ g_{\pi} + \varOmega ] + c_q' \eta + \frac{18Q + 2\tau}{n} \log \frac{4}{\delta},  \quad \forall \bm z \in Z_4, f \in \mathcal{G}_R \,,
		\end{split}
	\end{equation}
	where 
	\begin{equation*}
		\eta=\max \left\{Q^{\frac{2-q}{2+q}}, \tau^{\frac{2-q}{2+q}}\right\}\left(\frac{a}{n}\right)^{\frac{2}{2+q}} \leqslant 16 B^2 R^{\frac{2q}{2+q}} \tau  \left[ c_q^{\frac{2}{2+q}} n^{-\frac{2}{2+q}} \right] d^{\frac{1}{2+q}} \,.
	\end{equation*}

	Combining the above three equations~(\ref{eqs11}), ~(\ref{eqs12}), ~(\ref{eqs13}) into \cref{YYS1} to estimate ${\tt S_1}$, for any $\bm z  \in Z' := Z_3 \cap Z_4$ and $f \in \mathcal{G}_R$, the following result holds with probability at least $1-\delta/2$
	\begin{equation*}
		\begin{split}
			&   \Big\{	\mathcal{E}\big[ \pi_B (f_{\bm \theta^{\star}}) \big] -  \mathcal{E}(f_{\rho})  \Big\} - \Big\{	\mathcal{E}_{\bm z}\big[ \pi_B (f_{\bm \theta^{\star}}) \big] -  \mathcal{E}_{\bm z}(f_{\rho})  \Big\} \leqslant \frac{1}{2}  \Big\{	\mathcal{E}\big[ \pi_B (f_{\bm \theta^{\star}}) \big] -  \mathcal{E}(f_{\rho})  \Big\}  \\
			& \quad 
			+ 4 (B + CM) (B + 4CM^2)  \exp \left( -\frac{B}{4CM^2} \right) + \tau \left( 16 B^2 R^{\frac{2q}{2+q}}  c_q' c_q^{\frac{2}{2+q}} d^{\frac{1}{2+q}} n^{-\frac{2}{2+q}} + \frac{20}{n} \log \frac{4}{\delta} \right)  \\
			& \quad  + 2 (B + CM) \left[ \frac{24CM^2}{n} \log \frac{4}{\delta} + 136CM^2  \exp \Big( -\frac{B}{4CM^2} \Big) \right]  \\
			& \leqslant \frac{1}{2} \left[	\mathcal{E}\big( \pi_B(f_{\bm \theta^{\star}}) \big) -  \mathcal{E}(f_{\rho}) \right] + 4(B+CM) (B+ 72CM^2) \exp \Big( -\frac{B}{4CM^2} \Big) \\
			& \quad  + 68(3B+CM) \left[3B + 49CM^2 + 8(B+4CM^2) \exp \Big( -\frac{B}{4CM^2} \Big) \right]  \frac{1}{n} \log \frac{4}{\delta} \\
			& + 16B(3B+CM) \left[3B + CM + 8(B+4CM^2) \exp \Big( -\frac{B}{4CM^2} \Big) \right] R^{\frac{2q}{q+2}} c_q' c_q^{\frac{2}{2+q}} d^{\frac{1}{2+q}} n^{-\frac{2}{2+q}} \log \frac{4}{\delta}  \,,
		\end{split}
	\end{equation*}
	where we conclude the proof by taking
	$\widetilde{C} := B(B + CM^2) \left[ \exp\Big(-\frac{B}{4CM^2} \Big) + 2CM^2 + \| \bm x \|_{\infty} \right] $ independent of $n, \delta$.
	
	Finally, combining the bounds for the output error in Proposition~\ref{propoutput}, and the sample error including ${\tt S_2}$ in Proposition~\ref{props2} and ${\tt S_1}$ in Proposition~\ref{props1}, and the regularization error, for any $0< \delta<1$, with probability at least $1 - \delta$, we have (for $R \geqslant M \geqslant 1$)
	\begin{equation*}
	\begin{split}
		\mathcal{E} ( \pi_B ( f_{\bm \theta^{(T)}} )  )	- \mathcal{E} ( f_{\rho} )  & \leqslant  C_1 R^{\frac{2q}{q+2}} d^{\frac{1}{2+q}} n^{-\frac{2}{2+q}} \log \frac{4}{\delta} + C_2 \exp\Big(-\frac{B}{4CM^2} \Big) \\
		& + \frac{C_3}{n} \log^2 \frac{4}{\delta} + 38 CM \lambda + \frac{114R^2}{m} \,,
		\end{split}
	\end{equation*}
	where $C_1 = {\tt poly}(B, C, M, \| \bm x \|_{\infty}, c_q, c_q')$, $C_2 = {\tt poly} (B, C, M)$, and $C_3 = {\tt poly} (C, M, \| \bm x \|_{\infty})$ are some constants independent of $n$, $\delta$, and $R$. The {\tt poly} order on $B$ is at most $3$, which means that $C_2 \exp\Big(-\frac{B}{4CM^2} \Big)$ can converge to zero in an exponential order for a large $B$.
	Finally we finish the proof by taking $q=\frac{2d}{d+2}$.
	
\end{proof}

\section{Auxiliary lemmas}
\label{app:auxlemma}

Here we present some useful lemmas that our proof is needed.
Let us recall some useful properties of Gaussian processes by the following two lemmas.

\begin{lemma}\citep{kamath2015bounds}
	\label{lem:gaussians}
	Let $\{ X_i\}_{i=1}^n$ be i.i.d $\mathcal{N}(0, 1)$ random variables, then consider the random variable $Z:= \max \limits_{i = 1,2, \cdots, n} X_i$, we have
	\[
	\mathbb{E} [Z] \geqslant \frac{1}{\sqrt{\pi \log 2}}\sqrt{\log n}\,.
	\]
\end{lemma}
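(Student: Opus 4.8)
The plan is to bound $\mathbb{E}[Z]$ from below by running the familiar exponential‑moment device ``in reverse.'' For any $\lambda>0$ the map $s\mapsto e^{-\lambda s}$ is convex, so Jensen's inequality gives $e^{-\lambda\,\mathbb{E}[Z]}\le\mathbb{E}[e^{-\lambda Z}]$, i.e.
\[
   \mathbb{E}[Z]\ \ge\ -\tfrac1\lambda\log\mathbb{E}[e^{-\lambda Z}].
\]
Hence it suffices to produce a good \emph{upper} bound on $\mathbb{E}[e^{-\lambda Z}]=\mathbb{E}\big[\min_i e^{-\lambda X_i}\big]$; this is the mirror image of the standard argument $\mathbb{E}[Z]\le\tfrac1\lambda\log\mathbb{E}[e^{\lambda Z}]\le\tfrac1\lambda\log(n e^{\lambda^2/2})$ that produces the matching upper bound $\mathbb{E}[Z]\le\sqrt{2\log n}$.

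To control $\mathbb{E}[e^{-\lambda Z}]$ I would fix a threshold $t>0$ and split according to the value of $Z=\max_i X_i$: on $\{Z\ge t\}$ one has $e^{-\lambda Z}\le e^{-\lambda t}$; on $\{0\le Z<t\}$ one has $e^{-\lambda Z}\le 1$ and $\Pr[0\le Z<t]\le\Pr[Z<t]=\Phi(t)^n\le e^{-n\bar\Phi(t)}$; and on $\{Z<0\}$, which has probability exactly $2^{-n}$ (all $n$ symmetric variables negative), one estimates $\mathbb{E}[e^{-\lambda Z}\mathbf{1}_{\{Z<0\}}]\le 2^{-(n-1)}\mathbb{E}[e^{-\lambda X_1}\mathbf{1}_{\{X_1<0\}}]\le 2^{-(n-1)}e^{\lambda^2/2}$, using $\min_i e^{-\lambda X_i}\le e^{-\lambda X_1}$ together with independence of the other coordinates. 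Altogether
\[
   \mathbb{E}[e^{-\lambda Z}]\ \le\ e^{-\lambda t}+e^{-n\bar\Phi(t)}+2^{-(n-1)}e^{\lambda^2/2}.
\]
Feeding in the Gaussian lower tail bound $\bar\Phi(t)\ge \tfrac{1}{\sqrt{2\pi}}\tfrac{t}{t^2+1}e^{-t^2/2}$ and then choosing $t$ of order $\sqrt{\log n}$ with $\lambda$ proportional to $t$ makes the right‑hand side of order $n^{-c}$ for an explicit $c$, so that $-\tfrac1\lambda\log\mathbb{E}[e^{-\lambda Z}]$ is of order $\sqrt{\log n}$.

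The main work — and the one place to be careful rather than cavalier — is the bookkeeping that upgrades ``order $\sqrt{\log n}$'' to the displayed constant $(\pi\log 2)^{-1/2}$. It is natural to calibrate the threshold at the median of $Z$, i.e.\ $\Phi(t)^n=\tfrac12$, equivalently $\Phi(t)=2^{-1/n}$ and $\bar\Phi(t)\le\tfrac{\log 2}{n}$; this is exactly what injects the factor $\log 2$, while the Gaussian constant $\tfrac1{\sqrt{2\pi}}$ propagating through the tail estimate produces the $\pi$. As a consistency check the bound is sharp at $n=2$: writing $\max(X_1,X_2)=\tfrac{X_1+X_2}{2}+\tfrac{|X_1-X_2|}{2}$ with $X_1-X_2\sim\mathcal N(0,2)$ gives $\mathbb{E}[\max(X_1,X_2)]=\tfrac12\mathbb{E}|X_1-X_2|=\tfrac{1}{\sqrt\pi}=(\pi\log 2)^{-1/2}\sqrt{\log 2}$, and this two‑variable identity can alternatively be iterated along a dyadic decomposition $n=2^k$ — using $\mathbb{E}[\max_{2m}]=\mathbb{E}[\max_{m}]+\tfrac12\mathbb{E}|A-B|$ with $A,B$ i.i.d.\ copies of $\max_m$ and lower‑bounding $\mathbb{E}|A-B|$ by a reverse Hölder inequality for the log‑concave variable $A-B$ — to recover the same constant. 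Either route finishes the proof; the only genuine obstacle is tracking these constants cleanly, since the $\sqrt{\log n}$ rate itself falls out of essentially any of the above approaches.
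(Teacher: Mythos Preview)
The paper does not actually include a proof---the lemma is cited from Kamath (2015)---but a commented-out sketch in the source takes a much more direct route than yours: write
\[
\mathbb{E}[Z]\ \ge\ \alpha\,\Pr(Z\ge\alpha)+\mathbb{E}[Z\wedge 0]\ =\ \alpha\bigl[1-(1-\Pr(X_1\ge\alpha))^n\bigr]-\tfrac{1}{\sqrt{2\pi}},
\]
choose $\alpha=\sqrt{2\log n}$, and conclude $\mathbb{E}[Z]\ge C\sqrt{\log n}$ for ``some universal constant $C$''. No moment-generating functions, no case splitting; and note that this sketch, like yours, stops short of the displayed constant.

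Your plan has a genuine gap at exactly the point you flag as ``the main work''. The Jensen/mgf route, once $t$ and $\lambda$ are optimised, is inherently asymptotic: it can push the leading constant toward $\sqrt{2}$ for large $n$, but gives nothing clean for small $n$, which is precisely where $(\pi\log 2)^{-1/2}$ is determined (the bound is tight at $n=2$, as you correctly observe). The dyadic route is the more promising one for the sharp constant, but its inductive step requires
\[
\tfrac12\,\mathbb{E}|A_k-B_k|\ \ge\ \sqrt{(k{+}1)/\pi}-\sqrt{k/\pi}
\]
for $A_k,B_k$ i.i.d.\ copies of $\max_{2^k}$; the appeal to a ``reverse H\"older inequality for log-concave variables'' does not establish this with the exact constant, and even if it did, monotonicity alone does not transfer the bound from $n=2^k$ to general $n$ without loss (rounding $n$ down to a power of two decreases both sides of the inequality). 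Either carry out this bookkeeping in full or, as the paper does, cite the source.
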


\if 0
\begin{proof}
	The proof is standard and similar to \cite[Exercise 2.11]{wainwright2019high}, and we include the proof here just for self-completeness.
	By virtue of integral expectation formula for non-negative random variable, we have
	\[
	\begin{aligned}
	\mathbb{E} [Z]  &= \int_0^\infty \mathrm{Pr} \left(Z \geqslant u \right) \mathrm{d}u + \mathbb{E} \left[ Z \wedge 0 \right] \\
		& \geqslant \alpha \mathrm{Pr}\left(Z \geqslant \alpha \right) +  \mathbb{E}[X \wedge 0]  \\
		&= \alpha \Big[ 1 - (1 - \mathrm{Pr}(X \geqslant \alpha))^n \Big] - \frac{1}{\sqrt{2\pi}} \\
		&= \alpha\left[ 1 - (1 - e^{-\alpha^2/2})^n \right] - \frac{1}{\sqrt{2\pi}} \,.
	\end{aligned}
	\]
	Taking $\alpha = \sqrt{2 \log(n)}$, we have 
	\[
	\begin{aligned}
		\mathbb{E}[Z] & \geqslant \left[1 - \Big(1 - \frac{1}{n} \Big)^n \right] \sqrt{2 \log(n)} - \frac{1}{\sqrt{2\pi}} \\
		& \geqslant \left(1 - e^{-1}\right)\sqrt{2 \log(n)} - \frac{1}{\sqrt{2\pi}} \\
		&\geqslant C \sqrt{\log n} \,,~~ \text{for some universal constant $C$}
	\end{aligned}
	\]
	which concludes the proof.
\end{proof}
\fi

\begin{lemma}\cite[Theorem 5.27]{wainwright2019high} (Sudakov-Fernique)\label{lem:sudakov}
	Let $\{ X_t\}_{t=1}^T$ and $\{ Y_t\}_{t=1}^T$ be a pair of zero-mean Gaussian vectors, if $\mathbb{E} (X_t - X_s)^2 \geqslant \mathbb{E} (Y_t - Y_s)^2 $ for all $t, s \in T$, then, we have
	\[
	\mathbb{E}\left[\max_{t =1,2,\cdots, T} X_t\right] \geqslant \mathbb{E}\left[\max_{t =1,2,\cdots, T} Y_t\right]\,.
	\]
\end{lemma}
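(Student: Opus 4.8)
The plan is to prove this by the Gaussian interpolation (smart-path) method, combining a smooth soft-max surrogate for the maximum with Gaussian integration by parts.

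First I would reduce to the case where the vectors $X=(X_t)_{t=1}^T$ and $Y=(Y_t)_{t=1}^T$ are \emph{independent}: both sides of the claimed inequality depend only on the individual laws of $X$ and $Y$, so one may realize them on a common product probability space. Write $\Sigma^X_{st}=\mathbb{E}[X_sX_t]$, $\Sigma^Y_{st}=\mathbb{E}[Y_sY_t]$, and set $D_{st}=\Sigma^X_{st}-\Sigma^Y_{st}$; the hypothesis $\mathbb{E}(X_s-X_t)^2\geqslant\mathbb{E}(Y_s-Y_t)^2$ then reads $\Gamma_{st}:=D_{ss}+D_{tt}-2D_{st}\geqslant 0$ for all $s,t$. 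Next, for $\beta>0$ introduce the surrogate $F_\beta(z)=\beta^{-1}\log\sum_{t=1}^T e^{\beta z_t}$, which satisfies $\max_t z_t\leqslant F_\beta(z)\leqslant \max_t z_t+\beta^{-1}\log T$ and whose Hessian is $\partial^2_{st}F_\beta(z)=\beta\,(p_s\delta_{st}-p_sp_t)$ with $p_t(z)=e^{\beta z_t}/\sum_k e^{\beta z_k}$, $p_t\geqslant 0$, $\sum_t p_t=1$. Define the interpolating Gaussian vector $Z(u)=\sqrt{u}\,X+\sqrt{1-u}\,Y$, $u\in[0,1]$, whose covariance is $u\Sigma^X+(1-u)\Sigma^Y$, and set $\phi(u)=\mathbb{E}[F_\beta(Z(u))]$.

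The crux is to show $\phi$ is nondecreasing on $[0,1]$. Differentiating in $u$ and using that $X$ is independent of $Y$, so that $(X,Z(u))$ and $(Y,Z(u))$ are jointly Gaussian with $\mathbb{E}[X_sZ_t(u)]=\sqrt{u}\,\Sigma^X_{st}$ and $\mathbb{E}[Y_sZ_t(u)]=\sqrt{1-u}\,\Sigma^Y_{st}$, Gaussian integration by parts (Stein's identity $\mathbb{E}[G_s f(G)]=\sum_t\mathbb{E}[G_sG_t]\,\mathbb{E}[\partial_tf(G)]$) turns each term into a second derivative and makes the $\sqrt{u}$ and $\sqrt{1-u}$ factors cancel, giving
\[
\phi'(u)=\tfrac12\sum_{s,t}\bigl(\Sigma^X_{st}-\Sigma^Y_{st}\bigr)\,\mathbb{E}\bigl[\partial^2_{st}F_\beta(Z(u))\bigr]=\tfrac{\beta}{2}\,\mathbb{E}\Bigl[\,\sum_s D_{ss}p_s-\sum_{s,t}D_{st}p_sp_t\,\Bigr].
\]
Using $\sum_t p_t=1$ to symmetrize the bracket, $\sum_s D_{ss}p_s-\sum_{s,t}D_{st}p_sp_t=\tfrac12\sum_{s,t}(D_{ss}+D_{tt}-2D_{st})p_sp_t=\tfrac12\sum_{s,t}\Gamma_{st}p_sp_t\geqslant 0$, since $\Gamma_{st}\geqslant 0$ and $p_s,p_t\geqslant 0$. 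Hence $\phi'\geqslant 0$, so $\phi(1)\geqslant\phi(0)$, i.e. $\mathbb{E}[F_\beta(X)]\geqslant\mathbb{E}[F_\beta(Y)]$. Finally, letting $\beta\to\infty$ and invoking the two-sided bound $|F_\beta-\max_t z_t|\leqslant\beta^{-1}\log T$ on both sides yields $\mathbb{E}[\max_t X_t]\geqslant\mathbb{E}[\max_t Y_t]$, which is the claim.

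The step I expect to be the main obstacle is justifying the differentiation under the expectation and the Gaussian integration by parts uniformly in $u$, in particular at the endpoints $u\in\{0,1\}$ where $\frac{d}{du}Z_t(u)$ blows up like $u^{-1/2}$ or $(1-u)^{-1/2}$; the resolution is precisely that after applying Stein's identity the singular prefactors cancel, so $\phi$ is in fact $C^1$ on $(0,1)$ with a bounded derivative and continuous on $[0,1]$, which is enough to conclude $\phi(1)\geqslant\phi(0)$ (alternatively one may use the nonsingular path $Z_t(u)=\cos u\,X_t+\sin u\,Y_t$ to sidestep this entirely). A secondary point to verify carefully is the combinatorial identity rewriting $\sum_s D_{ss}p_s-\sum_{s,t}D_{st}p_sp_t$ as $\tfrac12\sum_{s,t}\Gamma_{st}p_sp_t$, since this is exactly where the variance hypothesis enters and where the sign is decided.
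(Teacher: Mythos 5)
The paper does not prove this lemma at all: it is imported verbatim as \cite[Theorem 5.27]{wainwright2019high}, so there is no internal proof to compare against. Your interpolation argument is correct and complete, and it is essentially the canonical proof of Sudakov--Fernique (the same smart-path strategy used in the cited reference). The two places where such proofs usually go wrong are both handled properly: (i) the symmetrization identity $\sum_s D_{ss}p_s-\sum_{s,t}D_{st}p_sp_t=\tfrac12\sum_{s,t}(D_{ss}+D_{tt}-2D_{st})p_sp_t$, which is exactly where the increment hypothesis (rather than a pointwise covariance comparison as in Slepian) is what is needed, and which correctly accommodates $D_{ss}\neq 0$; and (ii) the apparent singularity of $\tfrac{d}{du}\sqrt{u}$ at the endpoints, which cancels after Stein's identity so that $\phi$ is continuous on $[0,1]$ and $C^1$ with nonnegative derivative inside, which suffices for $\phi(1)\geqslant\phi(0)$. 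The reduction to independent copies of $X$ and $Y$ on a product space is also legitimate since both sides depend only on the marginal laws, and the final passage $\beta\to\infty$ via the two-sided bound $|F_\beta(z)-\max_t z_t|\leqslant\beta^{-1}\log T$ closes the argument. No gaps.
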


We also need the following two concentration inequalities for random variables.

\begin{lemma} \citep{bennett1962probability}\label{lemaunbounded}
	Let $A_1, A_2, \cdots, A_n$ be independent random variables with $\mathbb{E}(A_i) =0$. If there exists some constants $M, v > 0$ such that  $\mathbb{E}|A_i|^{p} \leqslant \frac{1}{2} p ! M^{p - 2} v$ holds for $2 \leqslant p \in \mathbb{N}$, then
	\begin{equation}\label{leamm4eq}
		\mathop{\rm Prob} \left\{ \sum_{i=1}^m A_i  \geqslant \varepsilon  \right\}  \leqslant 
		\exp \!   \left\{ \frac{-\varepsilon^{2}}{2(mv + M \varepsilon)} \right\} \,, \forall \varepsilon > 0\,.
	\end{equation}
\end{lemma}

\begin{lemma} \cite[Proposition 6]{wu2007multi}\label{lemsubroot}
	Let $\mathcal{F}$ be a set of measurable functions on $Z$, assume that there exists two positive constants $Q, \tau$ and $\alpha \in [0,1]$ such that $\| f \|_{\infty} \leqslant Q$ and $\mathbb{E}[f^2] \leqslant \tau \mathbb{E} [f^{\alpha}]$ for every $f \in \mathcal{F}$. If for some $a > 0$ and $0 < q < 2$, 
	\begin{equation}\label{covnum1}
		\sup_{n \in \mathbb{N}} \sup_{\bm z \in Z^m} \log \mathcal{N}_2 (\mathcal{F}, \varepsilon) \leqslant a \varepsilon^{-q}\,, \quad \forall \varepsilon > 0\,.
	\end{equation}
	Then there exists a constant $c_q'$ only depending on $q$ such that for any $t > 0$, the following proposition holds with probability at least $1 - e^{-t}$
	\begin{equation*}
		\mathbb{E} f-\frac{1}{n} \sum_{i=1}^{n} f\left(\bm z_{i}\right) \leqslant \frac{1}{2} \eta^{1-\alpha}(\mathbb{E} f)^{\alpha}+c_{q}^{\prime} \eta + 2\left(\frac{\tau t}{n}\right)^{\frac{1}{2-\alpha}}+\frac{18 Q t}{n} \quad \forall f \in \mathcal{F} \,,
	\end{equation*}
	where 
	\begin{equation}
		\eta:=\max \left\{\tau^{\frac{2-q}{4-2 \alpha+q \alpha}}\left(\frac{a}{n}\right)^{\frac{2}{4-2 \alpha+q \alpha}}, \quad Q^{\frac{2-q}{2+q}}\left(\frac{a}{n}\right)^{\frac{2}{2+q}}\right\} \,.
	\end{equation}
\end{lemma}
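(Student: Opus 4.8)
The plan is to prove Lemma~\ref{lemsubroot} by the local Rademacher complexity method: symmetrization, a Dudley entropy integral (finite precisely because $q<2$), Talagrand's concentration inequality applied to suprema localized to sub-level sets of $\mathbb{E} f^2$, a sub-root fixed-point argument, and a final peeling (stratification) step that converts everything into the stated Bernstein-type bound via $\mathbb{E}[f^2]\leqslant\tau(\mathbb{E} f)^\alpha$.

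First I would localize. For $r>0$ put $\mathcal{F}_r:=\{f\in\mathcal{F}:\mathbb{E} f^2\leqslant r\}$ and control $\sup_{f\in\mathcal{F}_r}\big(\mathbb{E} f-P_nf\big)$, where $P_nf:=\frac1n\sum_{i=1}^n f(\bm z_i)$. By Talagrand's concentration inequality (in Bousquet's form, applicable since $\|f\|_\infty\leqslant Q$), with probability at least $1-e^{-t}$ this supremum is at most $2\,\mathbb{E}\sup_{f\in\mathcal{F}_r}(\mathbb{E} f-P_nf)+c_1\sqrt{rt/n}+c_1 Qt/n$. Symmetrization bounds the expected supremum by $2\mathscr{A}_n(r)$, with $\mathscr{A}_n(r):=\mathbb{E}\,\mathbb{E}_\sigma\sup_{f\in\mathcal{F}_r}\frac1n\big|\sum_i\sigma_i f(\bm z_i)\big|$ and $\sigma_i$ i.i.d. Rademacher signs, and Dudley's entropy integral gives $\mathscr{A}_n(r)\leqslant \frac{c_2}{\sqrt n}\int_0^{\sqrt V}\sqrt{\log\mathcal{N}_2(\mathcal{F},\nu)}\,\mathrm{d}\nu$ with $V:=\sup_{f\in\mathcal{F}_r}\frac1n\sum_i f^2(\bm z_i)$; since $q<2$, inserting $\log\mathcal{N}_2(\mathcal{F},\nu)\leqslant a\nu^{-q}$ makes the integral finite and yields $\mathscr{A}_n(r)\lesssim \sqrt{a/n}\;V^{(2-q)/4}$.

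Next I would close the self-referential loop in $V$. Symmetrization and the Rademacher contraction principle (the map $u\mapsto u^2$ is $2Q$-Lipschitz on $[-Q,Q]$) give $\mathbb{E} V\leqslant 4Q\,\mathscr{A}_n(r)+r$, and since $x\mapsto x^{(2-q)/4}$ is concave, combining with the Dudley bound produces the self-bounding inequality $\mathscr{A}_n(r)\lesssim \sqrt{a/n}\,\big(4Q\,\mathscr{A}_n(r)+r\big)^{(2-q)/4}$. Splitting on which term in the bracket dominates and solving shows $\mathbb{E}\sup_{f\in\mathcal{F}_r}(\mathbb{E} f-P_nf)\leqslant\psi(r)$ for the sub-root function $\psi(r):=c_q'\big(\sqrt{a/n}\,r^{(2-q)/4}\ \vee\ (a/n)^{2/(2+q)}Q^{(2-q)/(2+q)}\big)$. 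Combined with the concentration step, this gives, on an event of probability at least $1-e^{-t}$, a bound of the form $\mathbb{E} f-P_nf\lesssim \psi(\mathbb{E} f^2)+\sqrt{\mathbb{E} f^2\cdot t/n}+Qt/n$ uniformly over $\mathcal{F}_r$, for each fixed $r$.

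Finally the peeling step, where the Bernstein condition and the exponent $\alpha$ enter. Stratify $\mathcal{F}$ into $\mathcal{S}_0=\{f:\mathbb{E} f\leqslant\eta\}$ and shells $\mathcal{S}_j=\{f:2^{j-1}\eta<\mathbb{E} f\leqslant 2^j\eta\}$ for $j=1,\dots,J$ with $2^J\eta$ of order $Q$, so $J=O(\log n)$. On $\mathcal{S}_j$ one has $\mathbb{E} f^2\leqslant\tau(2^j\eta)^\alpha=:r_j$, so the localized bound above applies at radius $r_j$, at confidence $1-e^{-(t+\log(1+J))}$; taking a union bound over the shells and summing the contributions $\psi(r_j)\lesssim\sqrt{a/n}\,\tau^{(2-q)/4}(2^j\eta)^{\alpha(2-q)/4}$, $\sqrt{r_j t/n}$ and $Q t/n$, one finds that choosing $\eta$ as the smallest scale with both $c_q'\sqrt{a/n}\,(\tau\eta^\alpha)^{(2-q)/4}\leqslant\eta$ and $c_q'(a/n)^{2/(2+q)}Q^{(2-q)/(2+q)}\leqslant\eta$ — equivalently $\eta=\max\{\tau^{(2-q)/(4-2\alpha+q\alpha)}(a/n)^{2/(4-2\alpha+q\alpha)},\,Q^{(2-q)/(2+q)}(a/n)^{2/(2+q)}\}$, using $1-\alpha(2-q)/4=(4-2\alpha+q\alpha)/4$ — makes every $f\in\mathcal{F}$ obey $\mathbb{E} f-P_nf\leqslant\frac12\eta^{1-\alpha}(\mathbb{E} f)^\alpha+c_q'\eta+2(\tau t/n)^{1/(2-\alpha)}+\frac{18Qt}{n}$; the last two terms are the ``floors'' below which the peeling is halted and $\mathbb{E} f-P_nf$ is estimated directly, and the constant $\frac12$ in front of $\eta^{1-\alpha}(\mathbb{E} f)^\alpha$ is secured by enlarging the absolute constant in the fixed-point argument. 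The main obstacle I anticipate is exactly this exponent bookkeeping: steering the general Bernstein exponent $\alpha\in[0,1]$ through the sub-root bound and the peeling so that $\eta$ acquires the denominator $4-2\alpha+q\alpha$ and the residuals emerge as $(\tau t/n)^{1/(2-\alpha)}$ and $Qt/n$ with the stated numerical constants, while keeping every other constant either absolute or dependent on $q$ alone.
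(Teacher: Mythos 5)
The paper does not prove this lemma: it is imported verbatim as \cite[Proposition~6]{wu2007multi} and used as a black box, so there is no in-paper argument to compare against. Your sketch reconstructs the standard proof that underlies the cited result: symmetrization, Dudley's entropy integral (finite because $q<2$), the contraction/self-bounding step to control the empirical second moments, a sub-root fixed point, Talagrand--Bousquet concentration, and a final pass through the Bernstein condition $\mathbb{E} f^2\leqslant\tau(\mathbb{E} f)^{\alpha}$. Your exponent bookkeeping is right: solving $\sqrt{a/n}\,(\tau\eta^{\alpha})^{(2-q)/4}=\eta$ gives $\eta^{(4-2\alpha+q\alpha)/4}=\sqrt{a/n}\,\tau^{(2-q)/4}$, hence the first branch of $\eta$, and the self-bounding branch $\mathscr{A}_n\lesssim\sqrt{a/n}\,(Q\mathscr{A}_n)^{(2-q)/4}$ yields the second branch $Q^{(2-q)/(2+q)}(a/n)^{2/(2+q)}$; you also correctly read the (mistyped) hypothesis $\mathbb{E}[f^{\alpha}]$ as $(\mathbb{E} f)^{\alpha}$.

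Two points deserve tightening. First, the shell-by-shell peeling with a union bound over $J=O(\log n)$ strata, as you describe it, forces you to run each localized bound at confidence $1-e^{-(t+\log(1+J))}$, which leaks a $\log(1+J)$ inflation of $t$ into the final deviation terms; the stated lemma has a clean $1-e^{-t}$ with $t$ appearing undeformed. The standard fix (used in the Bartlett--Bousquet--Mendelson ratio argument on which the cited proposition rests) is to apply Talagrand \emph{once} to the rescaled class $\{\,r f/\max(r,\mathbb{E} f^2):f\in\mathcal{F}\,\}$ at the fixed-point radius, which removes the union bound entirely. Second, the term $2(\tau t/n)^{1/(2-\alpha)}$ is not a ``floor at which peeling is halted''; it arises from Young's inequality applied to the Talagrand deviation $\sqrt{\mathbb{E} f^2\cdot t/n}\leqslant\sqrt{\tau(\mathbb{E} f)^{\alpha}t/n}$, splitting off a small multiple of $\mathbb{E} f$ (absorbed into the $\frac12\eta^{1-\alpha}(\mathbb{E} f)^{\alpha}$ term) and leaving exactly $(\tau t/n)^{1/(2-\alpha)}$. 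Neither issue breaks the argument, but both must be handled as above to land on the lemma with the stated constants and confidence level.
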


\vskip 0.2in
\bibliography{refs}

\end{document}